\let\cref\zcref
\let\Cref\zCref
\theoremstyle{plain}
\newtheorem{theorem}{Theorem}[section]
\newtheorem{prop}[theorem]{Proposition}
\newtheorem{lemma}[theorem]{Lemma}
\theoremstyle{definition}
\newtheorem{definition}[theorem]{Definition}
\newtheorem{remark}[theorem]{Remark}
\numberwithin{equation}{section}
\renewcommand{\paragraph}[1]{\noindent{\bf #1.}}
\newcommand{\N}{\mathbb{N}}
\newcommand{\R}{\mathbb{R}}
\newcommand{\E}{\mathbb{E}}
\newcommand{\A}{\mathfrak{A}} % C^* algebra
\DeclareMathOperator{\diag}{\mathrm{diag}}
\DeclareMathOperator{\tr}{\mathrm{tr}}
\DeclareMathOperator{\Tr}{\mathrm{Tr}}
\DeclareMathOperator{\rank}{\mathrm{rank}}
\DeclareMathOperator{\cov}{\mathrm{Cov}}
\DeclareMathOperator{\softmax}{\mathrm{softmax}}
\newcommand{\eps}{\varepsilon}
\newcommand{\stnorm}{\mathcal{N}(0,1)}
\newcommand{\trunc}{K}
\newcommand{\Ylln}{Y}
\newcommand{\Ycntr}{Y^f}
\newcommand{\Ypoly}{Y^Q}
\newcommand{\Ypolylin}{Y_\mathrm{lin}^Q}
\newcommand{\Ylin}{Y^f_\mathrm{lin}}
\newcommand{\smat}{S}
\newcommand{\ptn}{Z}
\newcommand{\ntoken}{\ell}
\newcommand{\dimqk}{d_\mathrm{qk}}
\renewcommand{\Pr}{\mathbb{P}}
\begin{document}

% If your paper is accepted and the title of your paper is very long,
% the style will print as headings an error message. Use the following
% command to supply a shorter title of your paper so that it can be
% used as headings.
%
%\runningtitle{I use this title instead because the last one was very long}

% If your paper is accepted and the number of authors is large, the
% style will print as headings an error message. Use the following
% command to supply a shorter version of the author names so that
% they can be used as headings (for example, use only the surnames)
%
%\runningauthor{Surname 1, Surname 2, Surname 3, ...., Surname n}

\twocolumn[

\aistatstitle{Gaussian Equivalence for Self-Attention: Asymptotic Spectral Analysis of Attention Matrix}

\aistatsauthor{ Tomohiro Hayase \And Beno{\^\i}t Collins \And  Ryo Karakida }

\aistatsaddress{ AIST and CoeFont, Co., Ltd. \And  Kyoto University \And AIST and RIKEN AIP} ]

\begin{abstract}
Self-attention layers have become fundamental building blocks of modern deep neural networks, yet their theoretical understanding remains limited, particularly from the perspective of random matrix theory.
In this work, we provide a rigorous analysis of the bulk singular value distribution of the attention matrix and establish the first Gaussian equivalence result for attention. In a natural regime where the inverse temperature remains of constant order, we show that the singular value distribution of the attention matrix is asymptotically characterized by a tractable linear model.
We further demonstrate that the distribution of squared singular values deviates from the Marchenko-Pastur law, which has been believed in previous work.
Our proof relies on two key ingredients: precise control of fluctuations in the normalization term and a refined linearization that leverages favorable Taylor expansions of the exponential.
This analysis also suggests a threshold for linearization and elucidates why attention, despite not being an entrywise operation, admits a rigorous Gaussian equivalence in this regime.
\end{abstract}

\section{Introduction}
Self-attention, popularized by the Transformer \citep{vaswani2017attention}, has become a standard component across modalities. Yet, compared with feedforward layers, the asymptotic behavior of self-attention remains less understood from the viewpoint of random matrix theory (RMT). For standard feedforward architectures, RMT has already provided a powerful toolkit. \citet{pennington2017nonlinear} initiated nonlinear random matrix theory for deep learning by deriving the spectral distribution of random-feature Gram matrices, opening the door to precise spectral analyses of neural kernels \citep{fan2020spectra}, universality results for high-dimensional learning \citep{hastie2022surprises,montanari2022universality,goldt2022gaussian}, and spectral characterizations of networks updated by finitely many training steps \citep{wang2024nonlinear}. On the signal-propagation side, RMT and free probability have characterized Jacobian spectra and dynamical isometry at initialization \citep{pennington2017resurrecting, pennington2018emergence} and clarified Fisher-information spectra in wide random networks \citep{karakida2019universal, amari2019fisher, hayase2021spectrum}.
High-dimensional limits often show universality, enabling tractable deterministic equivalents that capture limiting spectra and performance.

Recent theory has analyzed attention under several complementary asymptotic regimes. Infinite-head limits recover GP/NTK descriptions for multi-head attention \citep{hron2020infinite}; tensor-program analyses identify the exact infinite-width law of a single standard attention layer \citep{sakai2025inf}; and infinite-size analyses study either training dynamics or signal propagation in deep transformers \citep{bordelon2024infinite, giorlandino2025two}. Our focus is complementary: we analyze a single standard softmax attention matrix at random initialization in a proportional high-dimensional regime where the context length grows with the embedding dimension and $\beta=O(1)$ remains fixed. In this regime, we obtain an exact baseline for the bulk non-Perron singular-value distribution. We do not study training dynamics here; rather, the present theorem is intended as a building block for later analyses of trainability, depth-wise signal propagation, and early feature learning.

A recurring principle in these previous works on usual feedforward architectures is Gaussian equivalence: high-dimensional nonlinear models often share their asymptotic behavior with Gaussian models determined by a few statistics \citep{pennington2017nonlinear}. 
The attention structure, however, brings a distinct difficulty. The softmax over pairwise scores produces a full matrix with row-wise normalization, coupling all tokens and breaking i.i.d.\ structure. Standard tools for entrywise nonlinearities do not directly apply.

We prove the first Gaussian equivalence for self-attention at random initialization in the fixed softmax-temperature regime, yielding a tractable linear model that asymptotically characterizes the bulk singular-value distribution of the attention matrix (\cref{theorem:main}). Our proof sharply controls fluctuations of the softmax normalizers and approximates the exponential by Taylor polynomials of growing degree and applies a linearization argument that preserves the bilinear score structure. This suggests a candidate breakdown scale where Gaussian equivalence fails and explains why attention, despite not acting entrywise, admits a rigorous Gaussian approximation in this regime. The predicted limit for the squared singular values departs from the classical Marchenko-Pastur law (contrary to prior claims \citep{saada2025mind}), and we confirm the prediction numerically at a large dimension  (\cref{fig:empirical-hist-six} and \cref{ssec:empirical-six}). We expect this Gaussian equivalence to provide a foundation for the statistical analysis of self-attention layers.

The singular spectrum is relevant because recent theory connects spectral or signal-propagation properties of attention to concrete failure modes of transformers. In particular, the gap between leading singular values has been identified as a mechanism for rank collapse in width \citep{saada2025mind}, while signal-propagation analyses at initialization link the scale of attention to rank collapse, entropy collapse, gradient imbalance, and trainability in deep transformers \citep{noci2022signal, giorlandino2025two}. Our result should therefore be read as a precise baseline for the bulk spectrum of standard softmax attention before repeated depth composition or optimization modifies the layer.
Outliers beyond the Perron mode and long-context scalings beyond $\beta=O(1)$ are not resolved here and are discussed only heuristically in \cref{ssec:discussion-outliers,ssec:discussion-beta}.

\subsection{Related Work}

\paragraph{Complementary asymptotic theories of attention}
Prior theoretical analyses of attention mostly operate in regimes different from ours. Infinite-head limits recover GP/NTK descriptions for multi-head attention \citep{hron2020infinite}. For single standard attention layers, \citet{sakai2025inf} identify the exact infinite-width law via tensor programs, again in a different limit from the proportional regime considered here. At the level of deep transformers, \citet{bordelon2024infinite} analyze infinite limits of training dynamics, whereas \citet{giorlandino2025two} study signal propagation at initialization and characterize rank-collapse and entropy-collapse regimes. Our proportional-limit result is complementary to these works: it gives an exact random-initialization baseline for the bulk non-Perron singular-value distribution of a single standard softmax attention matrix when the context length and embedding dimension grow together.

\paragraph{Spectral analysis of attention matrices}
Prior work~\citep{saada2025mind} modeled first-layer key-query attention with orthonormal inputs by a random Markov surrogate generated from i.i.d.\ weights followed by row normalization; for this surrogate, the bulk singular-value law is quartercircular~\citep{bordenave2012circular}, equivalently yielding a Marchenko--Pastur law for squared singular values. Our result instead treats the exact bilinear softmax attention matrix in a proportional regime and shows that its limiting bulk law differs from that surrogate prediction. 
A recent analysis~\citep{liao2025random} studies an entrywise attention variant under signal-plus-noise inputs and a full-plus-rank-one structure in the attention weights ($W_K^\top W_Q = I + w_K w_Q^\top$), and remarks that the normalizer is asymptotically constant for truncated-exponential attention up to a scaling factor. Our control of fluctuations of the normalization term provides related support for analogous normalizer replacements and may strengthen such entrywise approximations. Our results, however, target the bulk singular-value distribution of standard softmax attention with general bilinear scores.

\paragraph{Nonlinear RMT, Gaussian equivalence, and early feature learning}
Beyond GP/NTK limits, a substantial random-matrix literature studies nonlinear random-feature and kernel matrices directly. \citet{pennington2017nonlinear} initiated nonlinear random matrix theory for deep learning, \citet{louart2018random} derived deterministic equivalents for Gram matrices of random neural feature maps, and \citet{benigni2021eigenvalue} established rigorous bulk laws for nonlinear models. Gaussian equivalence has since been extended to structured-data settings \citep{goldt2022gaussian}. In a related direction, \citet{wang2024nonlinear} study nonlinear spiked covariance models and signal propagation through trained features, providing spectral characterizations that go beyond static kernels. Closest to the present motivation, recent one-step feature-learning analyses show that even a very small amount of training can create low-rank spikes or spiked random-feature equivalents, thereby changing both spectra and generalization \citep{ba2022highdim, cui2024asymptotics, moniri2024theory}. Our contribution extends this nonlinear RMT / Gaussian-equivalence program to standard softmax attention, where the softmax normalizer couples entries across each row and the score matrix itself is not i.i.d.

A related but distinct line uses free probability to analyze deep-network operators rather than random-feature Gram matrices.
\citet{collins2023freeness} established asymptotic freeness of layerwise Jacobians in Haar-orthogonal MLPs, providing a rigorous basis for spectral propagation analyses.
At a more matrix-theoretic level, entrywise nonlinear transforms have also been studied from free-probabilistic and combinatorial viewpoints \citep{dabo2024traffic,speicher2024entrywise}.
Unlike these settings, attention is not an entrywise transform; we show that a softmax-type normalization nonetheless admits a Gaussian equivalent.

\paragraph{Scale of inverse temperature}
Most prior work has theoretically analyzed self-attention in infinite limits with a fixed context length, such as infinite embedding dimension or infinitely many heads \citep{hron2020infinite,bordelon2024infinite,sakai2025inf}. In contrast, we consider a proportional limit where the context length grows with the embedding dimension, allowing for longer contexts, which is a more realistic setting. Empirical and practical work suggests that the effective attention scale matters in long contexts: \citet{zhou2025length} analyze length-induced embedding collapse and propose TempScale as mitigation, and \citet{nakanishi2025scalable} empirically motivate a context-dependent scaling in SSMax. On the theoretical side, \citet{giorlandino2025two} analyze a REM-motivated $\Theta(\sqrt{\log \ntoken})$ scaling in a complementary infinite-sequence-length setting and identify a critical parameter separating rank-collapse and entropy-collapse regimes. At a more task-specific level, \citet{chiang2022overcoming} study single-position focusing and length generalization under explicit temperature adjustment. Our analysis is restricted to $\beta=O(1)$; Section~6 provides heuristic evidence that $\sqrt{\log \ntoken}$ may mark the breakdown of the present linearization, but a rigorous treatment of this scale remains open.
This long-context perspective is also consistent with practical proposals that explicitly modify or rescale attention in order to preserve selectivity at large context lengths, including LogN-Scaling as used in Qwen \citep{bai2023qwen}, YaRN \citep{peng2023yarn}, and SWAN-GPT \citep{puvvada2025swangpt}. On the theoretical side, \citet{chen2026critical} prove a sharp phase transition in a simplified long-context attention model under explicit attention scaling. Their regime and critical law are different from ours, but together these results support the view that the scale of attention is a first-order issue in long-context transformers.

\section{Preliminaries}
Let $X \in \R^{\ntoken \times d}$ be an input sequence of context length $\ntoken \in \N$ with $d \in \N$.
Let $W^Q, W^K$ be $d \times \dimqk$ random matrices with entries independently sampled from $\mathcal{N}(0,1)$  such that the collection of their entries forms an independent family.  
Given the input matrix $X$, we define the score matrix $\smat \in \R^{\ntoken \times \ntoken}$ by
\begin{align}
    \smat = \frac{1}{\sqrt{\dimqk}}\, X W^Q (W^K)^\top X^\top.\label{align:smat}
\end{align}

We consider the same assumption on $X$ as \citep{saada2025mind}:  $X$ is deterministic with  $\ntoken \leq d$ and $XX^\top=I_{\ntoken}$. 
\begin{remark}
   The difference in scaling conventions is immaterial. Specifically, one may either adopt  
   (1) $\|X_{i.:}\|^2 = 1$ with $W_{ij} \sim \stnorm$, or  
   (2) $\|X_{i.:}\|^2/d =1$ with $W_{ij} \sim \mathcal{N}(0,1/d)$.  
   While (2) is the more practical choice, it obscures the order of individual entries in the proof. For clarity and consistency of exposition, we employ the convention in (1) throughout this work, which is compatible with $XX^\top=I_\ntoken$.
\end{remark}

Fix $\beta > 0$.  
We then define the $\ntoken \times \ntoken$ matrix $A$ by
\begin{align}\label{align:def-A}
    A_{ij} =\ \frac{\exp\!\bigl(\beta S_{ij}\bigr)}{\ptn_i},
\end{align}
where
\begin{align}\label{align:z}
    \ptn_i =\sum_{j=1}^{\ntoken} \exp\bigl(\beta S_{ij}\bigr).
\end{align}
In particular, we write $A=\softmax(\beta S)$.

\begin{remark}
Let $a,b,x,y \in \R^{\ntoken}$ and write $\xi = a^\top W^Q (W^K)^\top b$, and $\zeta = x^\top W^Q (W^K)^\top y$.    
Then a direct computation yields
%\begin{align}
 $\cov(\xi^{2},\zeta^2)/2d
    = \|a\|_2^{2}\|x\|_2^{2}\,(b^\top y)^{2}
    + (a^\top x)^{2}\,\|b\|_2^{2}\|y\|_2^{2}  
    + (d+1)\,(a^\top x)^{2}(b^\top y)^{2}.$
%\end{align}
Thus, the entries of $X W^Q (W^K)^\top X^\top$ are \textbf{not} independent.  Therefore, one can not directly apply the circular law theorem for Markov random matrices by \citep{bordenave2012circular}.
\end{remark}

\begin{definition}
    For any $d \in \N$ and  real matrix $Y \in \R^{d \times d}$, we denote its singular values (i.e.\ the square roots of the eigenvalues of $YY^\top$) by $
    s_1(Y) \geq s_2(Y) \geq \dots \geq s_d(Y)$.
    The empirical \textbf{squared} singular value distribution $\nu_Y$ of $Y$ is defined as
    \begin{align}
        \nu_Y = \mu_{YY^\top}
        = \frac{1}{d}\sum_{i=1}^d \delta_{s_i(Y)^2},
    \end{align}
    where $\delta_s$ denotes the Dirac measure at $s \in \R$.
\end{definition}

\begin{remark}
    For each $q \in \N$, the $q$-th moment of $\nu_Y$ is given by
    \begin{align}
       m_q(YY^\top)=  \tr\![(YY^\top)^{q}]=m_q(\nu_Y),
    \end{align}
    where $\tr = d^{-1}\Tr$ denotes the normalized trace.
\end{remark}

\section{Gaussian Equivalence}

\begin{table*}[t]
    \centering
    \caption{Equivalent Random matrix models, in the sense of sharing the asymptotic singular value distributions.}
    \label{tab:equiv-models}
    \begin{tabular}{c|ccc}
    \toprule
    Notation & Definition & Key Parameters & Eq.\\
    \midrule
      $\sqrt{d}A$   &  $\sqrt{d}\softmax(\beta S)$   &  $\beta>0$ (constant). & \eqref{align:def-A}\\           
            $\sqrt{d}A^{\perp}$   &  $\sqrt{d}(A-u_du_d^\top)$  &  $u_d=(1,1,\dots,1)/\sqrt{d}$ & \eqref{align:aperp}\\
      $\Ylln$   &   $\exp(\beta S)/ (e^{\beta^2/2}\sqrt{d})$  & -  & \eqref{align:ylln} \\
      $\Ycntr$ &   $f(S)/\sqrt{d}$  & $f(x)=\exp(\beta x - \beta^2/2)-1$.& \eqref{align:yllnpf}\\
      $\Ypoly$  & $Q_{n_d}(S)/\sqrt{d}$  & $n_d=\lceil c \log d / \log\log d \rceil$. & \eqref{align:ypoly}\\
      $\Ypolylin$    & $\sqrt{\theta_2^Q}S/\sqrt{d} + \sqrt{\theta_1^Q-\theta_2^Q} W/\sqrt{d}$  & $\theta_i^{Q}= \theta_i(Q_{n_d})$ ($i=1,2$). & \eqref{align:ypolylin}\\
      $\Ylin$    & $\sqrt{\theta_2}S/\sqrt{d} + \sqrt{\theta_1-\theta_2}W/\sqrt{d}$  & $\theta_1=e^{\beta^2}-1$, $\theta_2=\beta^2$. & \eqref{align:ylin}\\
    \bottomrule
    \end{tabular}
\end{table*}
\begin{figure*}[th]
    \centering
    \includegraphics[width=\linewidth]{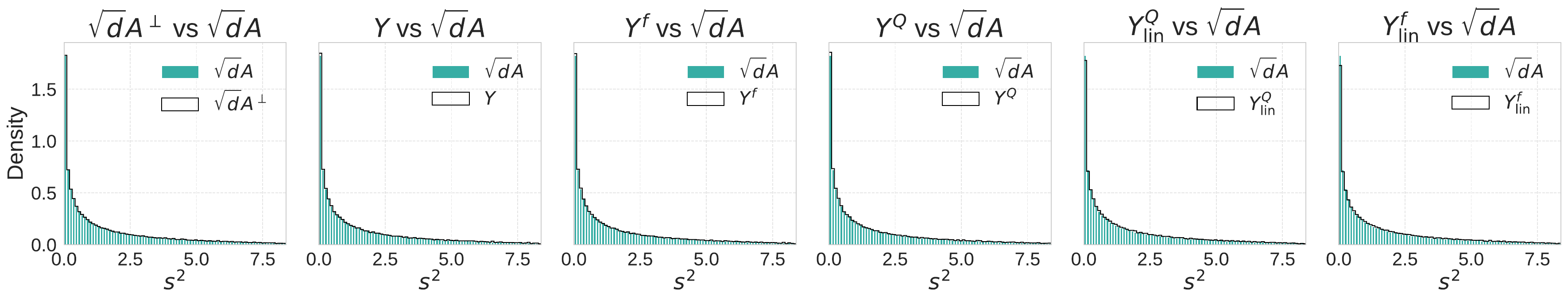}
    \includegraphics[width=\linewidth]{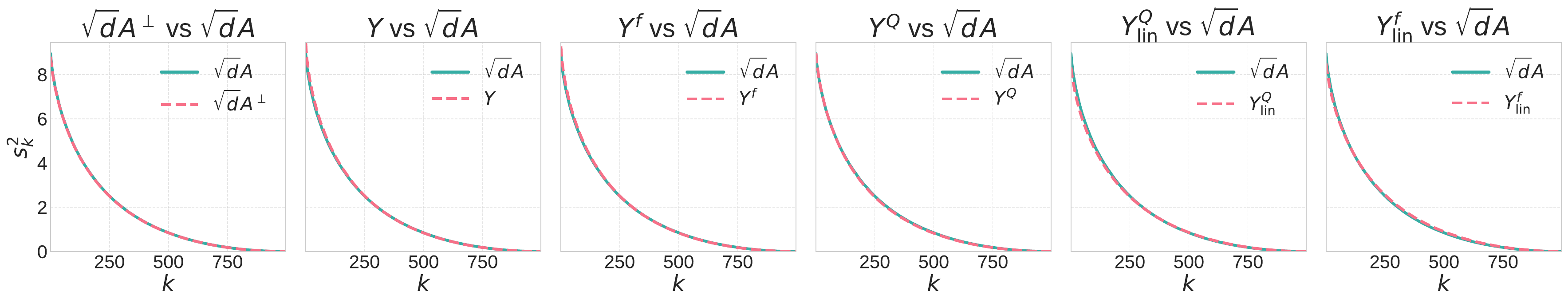}
\caption{\textbf{Stepwise approximation of attention leading to the Gaussian Equivalence \(Y^f_{\mathrm{lin}}\).} Top: histograms of empirical squared singular values; bottom: sorted \(s_k^2\). Each panel compares the scaled attention matrix \(\sqrt{d}A\) with a model introduced in the proof, shown left to right in the order \(\sqrt{d}A^\perp, Y , Y^{f}, \Ypoly, \Ypolylin,\Ylin\). The bulk spectra (top three singular values removed; see \cref{fig:topk}) are nearly indistinguishable, showing that the $\sqrt{d}A$ is accurately approximated throughout and culminates in \(Y^{f}_{\mathrm{lin}}\). Settings: \(d=d_{qk}=1000\), \(\beta=1\), 10 draws. % Model definitions are given in \cref{tab:equiv-models}.
}
    %\caption{Histograms (upper figs) and sorted values (lower ones) of empirical \emph{squared} singular values for the six random matrix models comparing with the scaled attention matrix $\sqrt{d}A$ at $d=1000$ and $\beta=1$ with $10$ draws. Top-3 singular values are excluded and shown in \cref{fig:topk}. Bulk agreement is apparent across models. }
    \label{fig:empirical-hist-six}
\end{figure*}

Fix $\beta>0$ and define
\begin{align}\label{align:f}
f(x)=\exp\!\bigl(\beta x-\tfrac{\beta^{2}}{2}\bigr)-1,    
\end{align}
so that $\E[f(\chi)]=0$ when $\chi\sim\stnorm$. 
Throughout this paper, we denote by $\chi$ a random variable distributed as $\stnorm$, i.e.\ the standard normal distribution.
For any function $g$,  define
\begin{align}
    \theta_1(g) = \E[g(\chi)^2], \quad
    \theta_2(g) = \E[g^\prime(\chi)]^2,
\end{align}
%\begin{align}
%\theta_3(f)=\E[(1/2)f^{\prime \prime}(\chi)]^2
%\end{align}
%
if the derivative and the expectations are well-defined. Set
\begin{align}
\theta_{1}=\theta_1(f)=e^{\beta^{2}}-1, \quad 
\theta_{2}=\theta_2(f)=\beta^{2}.
%\theta_{3}=\theta_3(f)=\beta^{4}/4,
\end{align}
Note that $\theta_{1}\ge \theta_{2}$ by $e^{x}\ge 1+x$.
Let $W$ be $\ntoken \times \ntoken$ random matrix of $\stnorm$ entries, with the family of all entries of $(W^Q, W^K, W)$ being independent. Define
\begin{align}\label{align:ylin}
    \Ylin= \sqrt{\theta_{2}} S/\sqrt{\ntoken}+ \sqrt{\theta_{1}-\theta_{2}}W/\sqrt{\ntoken},
    %+ \sqrt{4\theta_3} J_2
\end{align}
where $S$ is given by \eqref{align:smat}.

We work in the proportional asymptotic regime
\begin{align}\label{align:limit}
    d,\ntoken,\dimqk \to \infty,
\end{align}
such that
\begin{align}\label{align:limit-ratio}
    \frac{\ntoken}{d} \to \gamma,
    \qquad
    \frac{\dimqk}{d} \to \psi,
\end{align}
for some constants $\gamma,\psi>0$.
Then, by the asymptotic freeness of $(W^Q,W^K,W)$, one has almost surely
\begin{align}
    \nu_{\Ylin}
    \xrightarrow{\mathrm{moments}}
    \nu_\infty(\beta,\gamma,\psi)
\end{align}
as \eqref{align:limit} with \eqref{align:limit-ratio}, where $\nu_\infty(\beta,\gamma,\psi)$ is a deterministic compactly supported probability measure on $\R_{\ge 0}$ depending only on $\beta,\gamma,\psi$.

\begin{remark}
Unless otherwise stated, we suppress the dependence on the parameters $\beta$, $\gamma$, and $\psi$ in the notation, and write $\nu_\infty(\beta,\gamma,\psi)$ simply as $\nu_\infty$.
The limiting distribution $\nu_\infty$ is characterized by a fixed-point equation for its Cauchy--Stieltjes transform $G_{\nu_\infty}(z)$, which agrees with the formulation obtained in \citep{pennington2017nonlinear}. For a rigorous proof, we refer to \citep{benigni2021eigenvalue}.
\end{remark}

 Now we have prepared to describe the main theorem.
\begin{theorem}\label{theorem:main}
Let $A\in\R^{\ntoken\times \ntoken}$ be the attention matrix from~\eqref{align:def-A}.  
Then almost surely it holds that
\begin{align}
s_1(A)=\|AA^\top\|_\infty^{1/2} \to 1,
\end{align}
\begin{align}
\nu_{\sqrt{\ntoken}A^\perp} \xrightarrow{\mathrm{moments}} \nu_\infty,
\end{align}
where 
\begin{align}\label{align:aperp}
A^\perp = A -u_{\ntoken} u_{\ntoken}^\top, 
\end{align}
with $u_{\ntoken}=(1,\dots, 1)^\top/\sqrt{\ntoken}$, and,
\begin{align}
\nu_{\sqrt{\ntoken}A} \xrightarrow{\mathrm{weakly}} \nu_\infty(\beta, \gamma, \psi),
\end{align}
with \eqref{align:limit} and \eqref{align:limit-ratio}.
\end{theorem}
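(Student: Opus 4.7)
The plan is to establish the three conclusions by carrying out the chain of approximations summarized in \cref{tab:equiv-models}, showing at each step that two successive matrices share the same asymptotic squared singular value distribution, via Hilbert--Schmidt control for entrywise-close pairs, rank perturbations where a finite-rank term is peeled off, and moment matching for the Gaussian equivalence. The first key step is a sharp concentration estimate for the normalizers $\ptn_i$. Conditioning on $U = XW^Q$, each summand $e^{\beta S_{ij}}$ has conditional mean $e^{\beta^{2}\|U_{i,:}\|^{2}/(2\dimqk)}\approx e^{\beta^{2}/2}$ (using $XX^\top = I_\ntoken$), and I would combine Gaussian concentration for $V = XW^K$ with a Hanson--Wright estimate for the second-order chaos in $(W^Q, W^K)$ and a union bound over $i\le \ntoken$, to obtain $\max_i|\ptn_i/\ntoken - e^{\beta^{2}/2}| = o(1)$ almost surely at a rate strong enough that replacing $\ptn_i$ by its deterministic limit induces only a vanishing Hilbert--Schmidt error in $\sqrt{d}A$. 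This yields $\sqrt{d}A \sim \Ylln$, and since $\Ylln = \Ycntr + (1/\sqrt{d})J$ with $J$ the all-ones matrix, subtracting this rank-one term mirrors the definition $A^\perp = A - u_\ntoken u_\ntoken^\top$, giving $\sqrt{\ntoken}A^\perp \sim \Ycntr$ at the level of bulk spectra.

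Next, I would approximate $f(x) = e^{\beta x - \beta^{2}/2} - 1$ by its Taylor polynomial $Q_{n_d}$ of degree $n_d = \lceil c\log d/\log\log d\rceil$. The required estimate $\|\Ycntr - \Ypoly\|_{\mathrm{HS}}^{2}/\ntoken \to 0$ follows by balancing the factorial decay of the Taylor remainder against the $O(\sqrt{\log d})$ growth of the maximum entry of $S$; the threshold $\log d/\log\log d$ is precisely the scale at which this balance holds, and is what identifies the boundary of Gaussian equivalence alluded to in the introduction. With this polynomial in hand, I would invoke a Gaussian equivalence for $Q_{n_d}(S)$. The subtlety is that $S = X W^Q (W^K)^\top X^\top/\sqrt{\dimqk}$ is not an i.i.d.\ Gaussian matrix, so standard entrywise linearization theorems do not apply directly. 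My approach is to compute $\tr[(\Ypoly(\Ypoly)^\top)^p]$ by Wick expansion in the independent Gaussian entries of $(W^Q, W^K)$ and to show that the dominant diagrams match those of $\tr[(\Ypolylin(\Ypolylin)^\top)^p]$; the identities $\theta_1^Q = \E[Q_{n_d}(\chi)^{2}]$ (arising from matched contractions that close into a single loop) and $\theta_2^Q = \E[Q_{n_d}'(\chi)]^{2}$ (arising from a single contraction coupling a $W^Q$ with a $W^K$, i.e.\ the bilinear pairing that survives the linearization) emerge naturally from this bookkeeping.

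Finally, sending $n_d\to\infty$ yields $\theta_i^Q\to\theta_i$, so $\Ypolylin\sim\Ylin$, and the asymptotic freeness of $(W^Q, W^K, W)$ identifies the limit as $\nu_\infty$. For the top singular value, $A u_\ntoken = u_\ntoken$ gives $s_1(A)\ge 1$, while the Cauchy--Schwarz bound $\|A\|_{\mathrm{op}}^{2}\le \max_j\sum_i A_{ij}$ for row-stochastic $A$, combined with concentration of the column sums around $1$ (inherited from Step 1 by swapping the roles of rows and columns), gives $s_1(A)\to 1$. Weak convergence $\nu_{\sqrt{\ntoken}A}\to\nu_\infty$ then follows from the moment convergence of $\nu_{\sqrt{\ntoken}A^\perp}$ together with the fact that the Perron outlier contributes mass $1/\ntoken$, which vanishes weakly. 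I expect the main obstacle to be the Gaussian equivalence step: the diagrammatic enumeration must exploit the bilinear structure of $W^Q (W^K)^\top$ to isolate exactly the two coefficients $\theta_1, \theta_2$, while the diagram error must be controlled uniformly in the polynomial degree up to $n_d\sim\log d/\log\log d$, which is what makes the precise choice of the truncation scale non-negotiable.
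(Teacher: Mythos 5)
Your proposal follows the same chain of equivalent models laid out in \cref{tab:equiv-models}, but there are two substantive issues.

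\paragraph{The rank-one residual after deconditioning is not controlled}
Writing $\mathcal{E}=D^{-1}-I_d$ with $D=\diag(Z_i)/(\ntoken\E[e^{\beta\chi}])$, the correct decomposition after removing the Perron direction is
\begin{align}
\sqrt{\ntoken}A^\perp-\Ycntr=\mathcal{E}\Ycntr+\sqrt{\ntoken}\,\mathcal{E}\,u_{\ntoken}u_{\ntoken}^\top ,
\end{align}
and the second term is a rank-one matrix whose singular value is $\sqrt{\ntoken}\|\mathcal{E}u_\ntoken\|$, which with $\max_i|\mathcal{E}_{ii}|=\eps_d=o(d^{-1/2+\delta})$ can be as large as $d^\delta$. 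Your Hilbert--Schmidt bound on $\sqrt{d}A-\Ylln$ does shrink, but HS-closeness only gives Wasserstein-2 proximity of spectral measures; to obtain convergence of the $q$-th moment for every $q$ one needs Schatten-$2q$ control, and the rank-one term contributes $m_q\big(\nu_{\sqrt{\ntoken}\mathcal{E}u_\ntoken u_\ntoken^\top}\big)\le d^{\,q-1}\eps_d^{2q}=o(d^{\,2\delta q-1})$, which vanishes only if $\delta<1/(2q)$. The paper resolves this by proving the concentration at rate $o(d^{-1/2+\delta})$ for \emph{every} $\delta\in(0,1/2]$ and then, for each fixed $q$, choosing $\delta=1/(2q)$ before applying a Schatten-norm triangle inequality. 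Your sketch, which speaks only of "a rate strong enough" and an $o(1)$ HS error, does not close this $q$-dependent argument, and without it the moment-convergence claim for $\nu_{\sqrt{\ntoken}A^\perp}$ does not follow.

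\paragraph{The Gaussian equivalence step is re-derived from scratch, unnecessarily}
You propose to establish the linearization $\Ypoly\leadsto\Ypolylin$ by a Wick/diagram expansion of traces, controlling errors uniformly up to degree $n_d=\lceil c\log d/\log\log d\rceil$, and you correctly flag this as the main obstacle. The paper instead invokes the existing result of \citet{benigni2021eigenvalue} (their Theorem~3.5 and Lemma~3.10), which already treats polynomial nonlinearities of a bilinear form $W^Q(W^K)^\top$ at exactly this degree and yields both the self-averaging and the moment identification with the coefficients $\theta_1(Q_n)$, $\theta_2(Q_n)$. The only extra work needed is a tail estimate on the Taylor remainder (\cref{lem:taylor-beta}, \cref{lem:exp-approx-bound-beta}), a polynomial approximation in operator norm on a good set $\{\max_{ij}|S_{ij}|\le(\log d)^{1/2+\delta}\}$ (\cref{lem:stong-conv}), and the continuity $\theta_i(Q_n)\to\theta_i(f)$ (\cref{lem:nu-n-infty}). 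Reproving the diagrammatics ab initio is a much heavier path and would need uniform-in-degree error control that is not sketched; you should recognize that this is a standing result rather than an obstacle. Your Hanson--Wright route for the normalizer concentration is a reasonable alternative to the paper's truncation-plus-Hoeffding argument, but as noted above what matters is obtaining the explicit $d^{-1/2+\delta}$ rate uniformly over rows, and your sketch stops short of that.
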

The proof of \cref{theorem:main} is deferred to \cref{ssec:proof-of-thm}. 
%Here, since $A$ is a Markov matrix, the Perron--Frobenius theorem ensures that its largest eigenvalue equals $1$. However, because $AA^\top \neq A^\top A$, it remains possible that  $s_1(A)>1$. Nevertheless, \cref{theorem:main} establishes that $s_1(A) \to 1$.

In particular, \cref{theorem:main} also fixes the bulk scale of the spectrum: after removing the Perron direction via the rank‑one projection $u_{\ntoken} u_{\ntoken}^\top$, the empirical distribution of the singular values of $\sqrt{\ntoken}(A - u_{\ntoken} u_{\ntoken}^\top)$ converges almost surely to a nondegenerate limit $\nu_\infty$. Consequently, apart from the largest singular value (which tends to $1$), almost all singular values of $A$ have magnitudes on the order of $d^{-1/2}$. This delineates a clean rank‑one plus diffusive bulk structure for attention: one macroscopic mode at scale $1$ and a sea of modes at scale $d^{-1/2}$.

\begin{remark}[Reductions]\label{rem:X-Id}
By applying a $d\times d$ $\rank$-$\ntoken$ orthogonal projection to both models $\sqrt{\ntoken}A$ and $\Ylin$, the case $\ntoken<d$ is obtained as a projected version of the case $\ntoken=d$. Hence, it suffices to prove the result for $\ntoken=d$.
Next, by left-orthogonal invariance of the Ginibre ensemble and the independence of $(W^{Q}, W^{K},W)$, the joint law of
\(
(XW^{Q},\,X W^{K},\,W)
\)
is the same as that of
\(
( W^{Q},\, W^{K},\,W).
\)
Hence, the distributional statements below are unchanged if one replaces $X$ with $I_d$. We, therefore, work without loss of generality with $X=I_d$.  
Lastly, since  we take the proportional limit~\eqref{align:limit-ratio} with $\psi \in (0,+\infty)$, the rectangular shape of each $W^Q$ and $W^K$ does not cause a significant change in the proof. Thus, we also assume that $\dimqk=d$ to avoid complicated notation.  In summary, later in this section, we assume $\ntoken=d=\dimqk$ and $X=I_d$.
\end{remark}

\section{Equivalent Random Matrix Models}
We next explain how \cref{theorem:main} is established. The proof proceeds by a sequence of transformations, each of which preserves the asymptotic singular value distribution. Along the way, these transformations also suggest the underlying reason why the theorem holds. \cref{tab:equiv-models} summarizes the sequence of random matrices, all of which share the same limiting singular value distribution.  

Intuitively, the argument relies on the representation
\begin{align}
    \sqrt{d}A = \left[ \diag\left(\frac{Z}{\E[e^{\beta \chi}]d}\right)\right]^{-1}
    \frac{\exp(\beta S)}{\E[e^{\beta \chi}]\sqrt{d}},
\end{align}
where the normalizers $Z_i/d$ are first replaced by deterministic constants, after which the exponential function is expanded via a Taylor series to obtain a linear approximation. The proof consists of making each of these steps rigorous.  

The central task is to ensure the simultaneous validity of three bounds:  
(i) the tail estimate for $|S_{ij}|$ with truncation parameter $\trunc = c\log d / \log \log d$ or $(\log d)^{1/2+\delta}$,  
(ii) the fluctuations of the normalizer, controlled at order $d^{-1/2+\delta}$, and  
(iii) the polynomial approximation degree $n_d = \lceil c \log d / \log \log d \rceil$.  
Showing that these bounds are compatible constitutes the core of the overall strategy.

\subsection{Fluctuation of Normalizer}
In this section, we replace the normalization term with a deterministic constant.  
Let $X_1,\dots,X_d\stackrel{\mathrm{iid}}{\sim}\mathcal N(0,1)$. The \emph{law of the iterated logarithm} (LIL; \citep[Theorem~8.5.1]{durrett2019probability}) determines the fluctuations of the sum;
\begin{align}\label{align:lil}
\limsup_d \frac{\bigl|X_1+\cdots+X_d\bigr|}{\sqrt{2d\log\log d}}=1, \quad \text{a.s.}
\end{align}
Since $d^{\delta} > \sqrt{2\log\log d}$ for any $\delta>0$ and  sufficiently large $d$,
the following holds: 
\begin{align}
\lim_d  d^{1/2-\delta}\frac{X_1+\cdots+X_d}{d}=0, \quad\text{a.s.}
\end{align}
The same conclusion with $\delta$ holds for the non-independent arrays after replacing $X_j$ by $S_{ij}-\E[S_{ij}]$ from \cref{lem:as-of-dominant} without LIL. Precisely,
we obtain concentration of $Z_i$ around the constant mean as follows: for fixed $\delta \in (0,1/2]$, it holds a.s.\,as $d \to \infty$,
\begin{align}\label{align:bounds-column-sum}
d^{1/2-\delta} \max_{i=1,2,\dots,d}  \Big| d^{-1}Z_i - \E[e^{\beta\chi}] \Big| \to 0.
\end{align}
Therefore, we define the approximation obtained by applying concentration:
\begin{align}\label{align:ylln}
    \Ylln = \frac{\exp(\beta S)}{ e^{\beta^2/2} \sqrt{d}},
\end{align}
where we have used  $\E[e^{\beta \chi}] = e^{\beta^2/2}$.  

\begin{lemma}[Concentration]\label{lem:main-concentration}  
Fix $\delta \in (0,1/2]$. Then, almost surely, there exists a sequence $(\varepsilon_d)_d$ with $\lim_d d^{1/2-\delta}\varepsilon_d=0$ such that
\begin{align}
   \lim_{d\to\infty} \max_{i=1,2,\dots,d} \bigl|\, s_i(\sqrt{d}A) - (1+\varepsilon_d)s_i(\Ylln) \,\bigr| = 0.    
\end{align}
\end{lemma}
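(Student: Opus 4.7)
The plan is to realize $\sqrt{d}A$ as a diagonal rescaling of $\Ylln$, and then exploit multiplicative (rather than additive) singular value inequalities, with $\varepsilon_d$ chosen to absorb the Perron direction exactly. Concretely, I would begin from the identity
\[
\sqrt{d}A \;=\; D'\,\Ylln, \qquad D' \;:=\; \diag\!\bigl(e^{\beta^{2}/2}d/Z_i\bigr).
\]
The hypothesis \eqref{align:bounds-column-sum} gives $\max_i |D'_{ii}-1| = o(d^{-1/2+\delta})$ almost surely, so $D' = I + \diag(\eta_i)$ with $\max_i|\eta_i|$ at this rate. An additive Weyl bound on $(D'-I)\Ylln$ would be defeated here because $\|\Ylln\|_{\mathrm{op}}\asymp\sqrt{d}$ (the Perron direction $u_d$ carries a macroscopic singular value), producing an error of order $o(d^{\delta})$ that does not vanish for $\delta>0$. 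This is precisely why the statement allows a common multiplicative correction $1+\varepsilon_d$.

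Next I would invoke the elementary multiplicative singular value inequality
\[
s_d(D')\,s_i(\Ylln) \;\leq\; s_i(D'\Ylln) \;\leq\; s_1(D')\,s_i(\Ylln),
\]
so that every ratio $c_i := s_i(\sqrt{d}A)/s_i(\Ylln)$ lies in the common interval $[s_d(D'),s_1(D')]$, whose width equals $\max_i\eta_i - \min_i\eta_i = o(d^{-1/2+\delta})$. Setting $\varepsilon_d := c_1 - 1 = s_1(\sqrt{d}A)/s_1(\Ylln) - 1$, one obtains $|\varepsilon_d|\leq\max_i|\eta_i|$, hence $d^{1/2-\delta}\varepsilon_d\to 0$ almost surely. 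By construction the $i=1$ error is exactly zero, while for $i\geq 2$,
\[
\bigl|s_i(\sqrt{d}A) - (1+\varepsilon_d)\,s_i(\Ylln)\bigr| \;=\; |c_i - c_1|\cdot s_i(\Ylln) \;\leq\; \bigl(s_1(D')-s_d(D')\bigr)\,s_i(\Ylln),
\]
so the task reduces to showing that the bulk $s_i(\Ylln)$ for $i\geq 2$ is $o(d^{1/2-\delta})$.

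The bulk bound is obtained by centering: since $\E[\exp(\beta\chi)]=e^{\beta^2/2}$, the entries of $\Ylln$ have mean $1/\sqrt{d}$ and the identity $\Ylln - \sqrt{d}\,u_d u_d^{\top} = f(S)/\sqrt{d} = \Ycntr$ holds entrywise. Weyl's inequality applied to this rank-one-plus-bulk decomposition yields $s_{i+1}(\Ylln) \leq s_i(\Ycntr)$ for all $i\geq 1$; in particular $s_i(\Ylln) \leq \|\Ycntr\|_{\mathrm{op}}$ for $i\geq 2$. An $O(1)$ operator norm bound on the centered matrix $\Ycntr$ (even the much weaker $o(d^{1/2-\delta})$ suffices here) then closes the argument, giving $\max_{i\geq 2}|s_i(\sqrt{d}A)-(1+\varepsilon_d)s_i(\Ylln)| \leq o(d^{-1/2+\delta})\cdot O(1) = o(1)$ uniformly.

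The main obstacle is the auxiliary operator-norm control on $\Ycntr$: this is where the full strength of the paper's subsequent chain $\Ylln \to \Ycntr \to \Ypoly \to \Ypolylin \to \Ylin$ is needed, since a naive entrywise tail estimate on $f(S_{ij})$ is not sufficient to bound the spectral norm at $O(1)$. Once this bound is granted, the rest of the argument is essentially algebraic: the crucial conceptual point is that the $\sqrt{d}$-scale Perron singular value must be quarantined inside $1+\varepsilon_d$, because no additive $\|\cdot\|_{\mathrm{op}}$-based perturbation can survive its multiplication by $\max|\eta_i|=o(d^{-1/2+\delta})$.
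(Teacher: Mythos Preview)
Your proposal is correct and follows the paper's core approach: both write $\sqrt{d}A=D^{-1}\Ylln$ with $D$ diagonal and close to the identity, then invoke the multiplicative sandwich $s_i(\Ylln)/\max D \le s_i(\sqrt{d}A) \le s_i(\Ylln)/\min D$. The paper stops at $1/\min D,\,1/\max D = 1+o(d^{-1/2+\delta})$ and asserts the conclusion without naming $\varepsilon_d$; you correctly observe that this last step is not automatic, since for a generic $\varepsilon_d$ in that window the $i=1$ error is of order $o(d^{-1/2+\delta})\cdot s_1(\Ylln)=o(d^{\delta})$, which need not vanish. Your device of setting $\varepsilon_d=c_1-1$ to annihilate the $i=1$ term, and then using interlacing (the paper's \cref{lem:main-interlacing}) to bound $s_i(\Ylln)\le s_1(\Ycntr)$ for $i\ge 2$, is the natural completion of the argument.

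The remaining ingredient you flag --- a bound $\|\Ycntr\|_{\mathrm{op}}=o(d^{1/2-\delta})$ --- does not require the full $O(1)$ operator-norm control: from the later moment convergence $m_q(\nu_{\Ycntr})\to m_q(\nu_\infty)$ (via \cref{lem:stong-conv,lem:lin-poly}) one has $s_1(\Ycntr)^{2q}\le d\,m_q(\nu_{\Ycntr})=O(d)$, hence $s_1(\Ycntr)=O(d^{1/(2q)})$, and any fixed $q>1/(1-2\delta)$ suffices. There is no circularity, since those lemmas do not invoke \cref{lem:main-concentration}. It is also worth noting that the paper's downstream uses never require the full uniform-in-$i$ statement: \cref{prop:s1} needs only $i=1$, and the proof of \cref{theorem:main} uses only the decay rate of $\mathcal{E}=D^{-1}-I$, which is presumably why the paper's proof is content with the bare sandwich.
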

\begin{proof}    
Define $D=\diag(\ptn_1, \dots, \ptn_d)/( \E[e^{\beta\chi}]d)$. Then 
\begin{align}
   \sqrt{d}\,A = D^{-1} Y.
\end{align}
By \cref{lem:as-of-dominant}, we obtain the concentration of $Z_i$ around the constant mean as \eqref{align:bounds-column-sum}.
By the diagonality of $D$, it holds that
\begin{align}
\frac{s_i(Y)}{\max D}  \leq s_i(\sqrt{d}A) \leq \frac{s_i(Y)}{\min D}.
\end{align}
From \eqref{align:bounds-column-sum}, 
$|1/D_{ii} - 1|= | 1 - D_{ii}|/|D_{ii}|
   = o(d^{-1/2+\delta})/(1+o(d^{-1/2+\delta}))
   = o(d^{-1/2+\delta}).$
Thus $1/\min D, 1/\max D = 1+o(d^{-1/2+\delta})$ as $d\to\infty$ a.s., which proves the assertion.
\end{proof}

From \cref{lem:main-concentration}, we have the estimation of $s_1(A)$ without the detail of $\eps_d$ in \cref{prop:s1}. We use the order of $\eps_d$ in \cref{ssec:proof-of-thm}.
\begin{prop}\label{prop:s1}
    $\lim_d s_1(A)=\lim_d s_1(Y/\sqrt{d})=1$, a.s.
\end{prop}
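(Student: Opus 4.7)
The plan is to sandwich $s_1(A)$ between $1$ and $1+o(1)$ directly, and then to deduce the corresponding statement for $\Ylln/\sqrt{\ntoken}$ from \cref{lem:main-concentration}. The lower bound is immediate from structure alone: by construction each row of $A$ is nonnegative and sums to $1$, so $u_{\ntoken}=(1,\dots,1)^\top/\sqrt{\ntoken}$ is a right eigenvector of $A$ with eigenvalue $1$, giving $s_1(A)\geq \|Au_{\ntoken}\|_2=1$.

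For the matching upper bound I would use the Riesz--Thorin interpolation $\|A\|_{\mathrm{op}}^2\leq \|A\|_1\|A\|_\infty$, where $\|A\|_\infty=\max_i\sum_j|A_{ij}|=1$ holds exactly by row-stochasticity. It then suffices to show $\|A\|_1=\max_j\sum_i A_{ij}\to 1$ a.s. Writing $\sum_i A_{ij}=\sum_i e^{\beta S_{ij}}/\ptn_i$ and substituting the uniform concentration \eqref{align:bounds-column-sum} of $\ptn_i/d$ around $e^{\beta^2/2}$, one obtains
\begin{equation*}
\sum_i A_{ij}\ \leq\ (1+o(1))\cdot\frac{1}{e^{\beta^2/2}\,d}\sum_i e^{\beta S_{ij}}
\end{equation*}
uniformly in $j$. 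The right-hand column average is the exact transposed analogue of $\ptn_j/d$, and since $(W^Q,W^K)\stackrel{d}{=}(W^K,W^Q)$ gives $S\stackrel{d}{=}S^\top$, the argument behind \cref{lem:as-of-dominant} applies verbatim to $S^\top$ and yields $\max_j\bigl|d^{-1}\sum_i e^{\beta S_{ij}}-e^{\beta^2/2}\bigr|\to 0$ a.s. Combined with the lower bound this closes $s_1(A)\to 1$.

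To transfer the conclusion to $\Ylln/\sqrt{\ntoken}$, I would apply \cref{lem:main-concentration} at the top index: it gives $s_1(\sqrt{\ntoken}A)-(1+\varepsilon_d)s_1(\Ylln)\to 0$ with $\varepsilon_d\to 0$ a.s. Rescaling by $\sqrt{\ntoken}$ and inserting $s_1(A)\to 1$ forces $s_1(\Ylln/\sqrt{\ntoken})\to 1$, which is the remaining half of the claim.

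The only delicate step is arguing that the column-sum concentration is almost sure on the original probability space, not merely in distribution; everything else (the Perron lower bound, the interpolation inequality, the rescaling via \cref{lem:main-concentration}) is essentially bookkeeping. The clean resolution is to check that the proof of \cref{lem:as-of-dominant} uses only properties of $S$ that are invariant under the swap $W^Q\leftrightarrow W^K$, so it can be re-run on $S^\top$ on the same probability space. Once that observation is in place, the sandwich closes and both limits drop out simultaneously.
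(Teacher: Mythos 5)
Your proof is correct and takes essentially the same route as the paper: both rest on the Schur/H\"older bound $\|M\|_{\mathrm{op}}^2\le\|M\|_1\|M\|_\infty$, on column-sum concentration obtained from the distributional symmetry of $S$ under swapping $W^Q$ and $W^K$, on the Perron direction $u_{\ntoken}$ for the matching lower bound, and on \cref{lem:main-concentration} for the transfer between $A$ and $\Ylln/\sqrt{d}$. The only difference is the order and the precise matrix to which the bound is applied — you establish $s_1(A)\to1$ first (exploiting the exact identities $\|A\|_\infty=1$ and $Au_{\ntoken}=u_{\ntoken}$) and then deduce the claim for $\Ylln/\sqrt{d}$, whereas the paper does the reverse — a cosmetic variation.
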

\begin{proof}
By the symmetry of $S$, the same bound holds in the column direction as \eqref{align:bounds-column-sum} and thus
\begin{align}
   \max_{j=1,2,\dots,d}\Big|\frac{1}{\sqrt{d}}\sum_{i=1}^d Y_{ij} - 1\Big|\to 0,
   \quad \text{a.s.}    
\end{align}
Since 
\begin{align}    
   \|YY^\top\|_\infty 
   \leq \Big(\max_{j}\sum_{i}Y_{ij}\Big)\Big(\max_{i}\sum_{j}Y_{ij}\Big),
\end{align}
we obtain $\limsup_{d}\|Y/\sqrt{d}\|_\infty \leq 1$.  

On the other hand, as $(Y/d)u_d = (1+o(1))u_d$, we also have $\liminf_{d}\|Y/\sqrt{d}\|_\infty \geq 1$.  
In summary, $\lim_{d \to \infty}s_1(Y/\sqrt{d})=1$ a.s.  
Finally, by \cref{lem:main-concentration}, we conclude that $\lim_{d}s_1(A) = 1$ a.s.
\end{proof}

\subsection{Rank-One Perturbation and Interlacing}

To apply linearization, we consider the function $f$ defined in \eqref{align:f} by normalizing $x \mapsto \exp(\beta x)$ so that $\E[f(\chi)] = 0$.  
Define the corresponding matrix by
\begin{align}\label{align:yllnpf}
   \Ycntr = \frac{1}{\sqrt{d}} f(S).
\end{align}
In this formulation, $\Ycntr$ serves as an approximation of $\Ylln$.  
Subtracting the mean corresponds exactly to removing a rank-one projection from $\Ylln$: the two matrices differ only by a perturbation aligned with the all-ones vector.  
Since such rank-one updates affect singular values only in a tightly controlled manner, the spectra of $\Ylln$ and $\Ycntr$ must remain close, a fact formalized by the interlacing theorem below.

\begin{lemma}[Interlacing]\label{lem:main-interlacing}
For $i=1,\dots,d$, we have
\[
   s_{i+1}(\Ycntr) \leq s_i(\Ylln) \leq s_{i-1}(\Ycntr),
\]
under the convention $s_0 = +\infty$ and $s_{d+1}=0$.
\end{lemma}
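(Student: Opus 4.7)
The plan is to recognize the lemma as a direct instance of the classical singular-value interlacing under a rank-one perturbation, with the rank-one factor arising from the mean-subtraction built into $f$.

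First I would compute $\Ylln - \Ycntr$ entrywise. By the definition of $f$ in \eqref{align:f},
$$f(S)_{ij} \;=\; e^{\beta S_{ij} - \beta^2/2} - 1 \;=\; \frac{e^{\beta S_{ij}}}{e^{\beta^2/2}} - 1,$$
so
$$\Ylln_{ij} - \Ycntr_{ij} \;=\; \frac{e^{\beta S_{ij}}}{e^{\beta^2/2}\sqrt{d}} \;-\; \frac{1}{\sqrt{d}}\!\left(\frac{e^{\beta S_{ij}}}{e^{\beta^2/2}} - 1\right) \;=\; \frac{1}{\sqrt{d}}.$$
Therefore $\Ylln - \Ycntr = \frac{1}{\sqrt{d}}\, J_d = \sqrt{d}\, u_d u_d^\top$, where $J_d$ denotes the $d\times d$ all-ones matrix and $u_d = (1,\dots,1)^\top/\sqrt{d}$. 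In particular, $\Ylln$ is obtained from $\Ycntr$ by a single rank-one update.

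Second, I would invoke the standard singular-value interlacing for rank-one perturbations: whenever $B-A$ has rank at most one, for every $i \in \{1,\dots,d\}$,
$$s_{i+1}(B) \;\leq\; s_i(A) \;\leq\; s_{i-1}(B),$$
with the conventions $s_0 = +\infty$ and $s_{d+1} = 0$. This is a direct consequence of the Courant-Fischer min-max characterization $s_i(M) = \min_{\dim V = d-i+1}\max_{x \in V,\, \|x\|=1}\|Mx\|$, or equivalently of Cauchy interlacing applied to the Hermitian dilation $\bigl(\begin{smallmatrix}0 & M \\ M^\top & 0\end{smallmatrix}\bigr)$ whose nonzero eigenvalues are $\pm s_i(M)$. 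Setting $A = \Ylln$ and $B = \Ycntr$ yields the two chains of inequalities in the statement.

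There is essentially no substantive obstacle: the entire content of the lemma is the identification of $\Ylln - \Ycntr$ as a single rank-one matrix supported on the all-ones direction, after which the interlacing is a textbook linear-algebra fact. No probabilistic input is used at this stage; the concentration information from \cref{lem:main-concentration} is consumed elsewhere (in combining the successive approximations for the final spectral comparison).
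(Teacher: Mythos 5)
Your proposal is correct and follows essentially the same route as the paper: identify $\Ylln - \Ycntr = \sqrt{d}\,u_d u_d^\top$ as a rank-one matrix and then invoke singular-value interlacing for rank-one perturbations (which the paper cites from Thompson, 1976). Your explicit entrywise computation of the difference and your inline justification of the interlacing fact via the Hermitian dilation or Courant--Fischer simply make visible what the paper leaves as a citation.
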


\begin{proof}
Observe that $\Ycntr = \Ylln - \sqrt{d}\, u_d u_d^\top$, where $u_d = (1,1,\dots,1)^\top/\sqrt{d}$.  
Thus, $\Ylln$ differs from $\Ycntr$ by a rank-one perturbation.  
By the interlacing theorem for rank-one perturbations~\cite[Theorem~1]{thompson1976behavior}, the claim follows. 
\end{proof}

\subsection{Taylor Expansion and Strong Convergence}

We introduce the linearization $\Ylin$ of $\Ycntr$ following \citet{benigni2021eigenvalue} and using $\E[f(\chi)]=0$.  
The idea is to approximate $f$ by centered polynomials $Q_n$ with $\E[Q_n(\chi)]=0$; since linearization is proved for polynomial nonlinearities, a sufficiently accurate polynomial approximation transfers the result to $f$.

The essential component of $f$ is $e^{x}$, which does not satisfy the assumption~\citep[(2.4)]{benigni2021eigenvalue}.  
However, the assumption can be relaxed: it suffices to require the approximation only for \emph{large} degrees.  
Specifically, by \cref{lem:taylor-beta}, it holds that for any $K>0$ and all $n>K$,
\begin{align}
   \max_{\beta|x|\le K}\bigl|e^{\beta x}-P_n(\beta x)\bigr|
   \le \Bigl(\frac{eK}{n}\Bigr)^{\!n},
\end{align}
where $P_n(x)=\sum_{k=0}^{n-1}x^k/k!$.
This yields the entrywise approximation of $f(S)$ in \cref{lem:exp-approx-bound-beta}, on which the proof crucially relies.

Set
\begin{align}
    Q_n(x)=e^{-\beta^2/2}\Bigl(P_n(x)-\E\bigl[P_n(\chi)\bigr]\Bigr),
\end{align}
and define
\begin{align}\label{align:ypoly}
    \Ypoly=\frac{1}{\sqrt{d}}\,Q_{n_d}(S),
\end{align}
where
\begin{align}\label{align:nd}
    n_d=\bigl\lceil c\,\frac{\log d}{\log\log d}\bigr\rceil,
\end{align}
for a constant $c>1/(1-2\delta)$ with $\delta\in(0,1/2)$, and $\lceil\cdot\rceil$ denotes the ceiling function.

\newcommand{\good}{G_\trunc}
\begin{lemma}\label{lem:stong-conv}
$||\Ycntr -\Ypoly ||_\infty \to 0$ as $d \to \infty$, a.s.
\end{lemma}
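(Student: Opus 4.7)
The plan is to reduce operator-norm convergence to an entrywise approximation of the nonlinearity. Since $(\Ycntr-\Ypoly)_{ij} = (f(S_{ij}) - Q_{n_d}(S_{ij}))/\sqrt{d}$, the Frobenius-dominates-operator inequality yields
\begin{align*}
\|\Ycntr - \Ypoly\|_\infty \ \le\ \|\Ycntr - \Ypoly\|_F \ \le\ \sqrt{d}\,\max_{i,j}\bigl|f(S_{ij}) - Q_{n_d}(S_{ij})\bigr|,
\end{align*}
so it suffices to show $\sqrt{d}\cdot \max_{i,j}|f(S_{ij}) - Q_{n_d}(S_{ij})|\to 0$ almost surely.

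The first step is a uniform tail bound for the score entries. By \cref{rem:X-Id} we may take $X=I_d$, so $S_{ij}=d^{-1/2}\langle W^Q_{i,:},W^K_{j,:}\rangle$ is a bilinear Gaussian form. Conditioning on $W^K$, each $S_{ij}$ is sub-gaussian with variance $\|W^K_{j,:}\|^2/d$ that concentrates at $1$; a Hanson-Wright-type bound gives $\Pr(|S_{ij}|\ge t)\le 2\exp(-c_1 t^{2})$ for $t\le c_2\sqrt d$. A union bound over the $d^2$ pairs together with Borel-Cantelli then yields, almost surely for all large $d$,
\begin{align*}
\max_{i,j}|S_{ij}| \ \le\ K_d \ :=\ (\log d)^{1/2+\delta'},
\end{align*}
for any fixed $\delta'\in(0,\delta)$.

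On this good event, I would invoke the entrywise Taylor bound \cref{lem:exp-approx-bound-beta} (the matricial counterpart of \cref{lem:taylor-beta}) to conclude
\begin{align*}
\max_{i,j}\bigl|f(S_{ij}) - Q_{n_d}(S_{ij})\bigr| \ \le\ C\Bigl(\frac{e\beta K_d}{n_d}\Bigr)^{\!n_d},
\end{align*}
where the constant $C=C(\beta)$ absorbs the $e^{-\beta^{2}/2}$ prefactor and the recentering term $\E[e^{\beta\chi}]-\E[P_{n_d}(\beta\chi)]$, whose tail decays super-polynomially in $n_d$ by Gaussian moment estimates. Substituting $n_d=\lceil c\log d/\log\log d\rceil$ and $K_d=(\log d)^{1/2+\delta'}$, the logarithm of the target quantity is
\begin{align*}
\tfrac12\log d + n_d\bigl(1+\log(\beta K_d)-\log n_d\bigr) \ =\ \tfrac12\log d - c\bigl(\tfrac12-\delta'\bigr)\log d + o(\log d),
\end{align*}
which tends to $-\infty$ because the hypothesis $c>1/(1-2\delta)$ in \eqref{align:nd} is strictly stronger than $c>1/(1-2\delta')$ for $\delta'<\delta$.

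The main obstacle is the three-way compatibility among (i) the truncation scale $K_d$, (ii) the polynomial degree $n_d$, and (iii) the $\sqrt d$ overhead incurred by passing through the Frobenius norm. The Gaussian tail bound forces $K_d\gtrsim(\log d)^{1/2}$ up to logarithmic factors, while the Taylor error $(eK_d/n_d)^{n_d}$ must beat $d^{-1/2}$ uniformly in the good event; the constraint $c>1/(1-2\delta)$ baked into \eqref{align:nd} is precisely the threshold where these competing demands can coexist. Once this balance is secured, the remaining ingredients, namely the Gaussian concentration and the Taylor remainder estimate, are standard.
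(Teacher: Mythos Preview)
Your proposal is correct and follows essentially the same route as the paper: truncate the score entries at $K_d=(\log d)^{1/2+\delta}$ via a sub-gaussian tail bound plus Borel--Cantelli, bound $\|\Ycntr-\Ypoly\|_\infty$ by $\sqrt{d}\sup_{|x|\le K_d}|f(x)-Q_{n_d}(x)|$, invoke \cref{lem:exp-approx-bound-beta}, and then verify that $n_d\log(n_d/K_d)=c(\tfrac12-\delta)\log d\,(1+o(1))>\tfrac12\log d$ under the hypothesis $c>1/(1-2\delta)$. The only cosmetic differences are that you route through the Frobenius norm explicitly and introduce an auxiliary $\delta'<\delta$, whereas the paper works directly with $\delta$; neither changes the argument.
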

\begin{proof}
    Let $\trunc>0$, then choose a good set  
    \begin{align}
        \good=\{ \max_{i,j}|S_{ij}| \leq  \trunc \}.
    \end{align}
 Then the concentration of $\chi^2$-distribution,
    \begin{align}
        P(\good^c) \leq  Cd^2\exp(-\trunc^2/2).
    \end{align}
    Thus, we set 
    \begin{align}\label{align:trunc-sqrt-logd}
        \trunc= (\log d)^{1/2+\delta},
    \end{align} 
    then $P(\good^c)$ is super-polynomial decay.
    Thus, on the good set $\good$,
    \begin{align}
        || \Ycntr - \Ypoly ||_\infty \leq \sqrt{d} \sup_{x \in [-\trunc, \trunc]} |f(x) - Q_n(x)| 
    \end{align}
    
    By \cref{lem:exp-approx-bound-beta}, there exist constants $c_1,c_2 > 0$ 
    \begin{align}\label{align:f-q}
        \sup_{x \in [-\trunc, \trunc]} |f(x) - Q_n(x)| \leq c_2 (c_1 \trunc/n)^n,
    \end{align}
    where $n \in \N$ with $n>\beta \trunc$ and sufficiently large $d$. 
    Then  $n/\trunc \geq  c (\log d)^{1/2-\delta}/\log\log d >1$.
    Now
        %\begin{align}
        $n \log (n/\trunc) = c ((1/2 -\delta)+ o(1)) \log d $.
    %\end{align}    
    With $c> 1/(1-2\delta)$, $ n \log (n/\trunc)>(1/2+\eps) \log d$ and $(\trunc/n)^n=o(d^{-1/2})$. 
    Then the RHS of \eqref{align:f-q} is $o(1/\sqrt{d})$ and it proves the assertion.
\end{proof}

\subsection{Gaussian Equivalence for Polynomials}

For $n\in\mathbb{N}$, define
\begin{align}\label{align:ypolylin}
    \Ypolylin = \left(\sqrt{\theta_2^Q}S + \sqrt{\theta_2^Q-\theta_1^Q}W \right)/\sqrt{d},
\end{align}
where $\theta_1^Q=\theta_1(Q_n)$ and $\theta_2^Q=\theta_2(Q_n)$.  
For each fixed $n$, let $\nu_{n,\infty}=\lim_{d\to\infty}\nu_{\Ypolylin}$ denote the limiting singular-value distribution.

\begin{lemma}[Gaussian Equivalence]\label{lem:lin-poly}
For any $q\in\mathbb{N}$, we have the self-averaging
\begin{align}\label{align:as-conv}
   m_q\!\left(\nu_{\Ypoly}\right) \xrightarrow[d\to\infty]{\text{a.s.}} \E\!\left[m_q\!\left(\nu_{\Ypoly}\right)\right].    
\end{align}
Moreover, as $d \to \infty$,
\begin{align}
   \E\!\left[m_q\!\left(\nu_{\Ypoly}\right)\right]
   = \bigl(1+o(1)\bigr)m_q\!\left(\nu_{\infty}\right).
\end{align}
\end{lemma}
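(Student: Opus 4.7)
The plan is to follow the strategy of \citet{benigni2021eigenvalue} for polynomial nonlinearities, adapted to allow the degree $n_d$ to grow slowly with $d$. The argument proceeds in two stages corresponding to the two assertions: concentration (self-averaging of the $q$-th moment around its mean) followed by computation of the leading order of this mean via a moment-method expansion.

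For the self-averaging statement \eqref{align:as-conv}, I would expand
\begin{align*}
m_q(\nu_{\Ypoly}) = \tr\bigl[(Q_{n_d}(S)Q_{n_d}(S)^\top/d)^q\bigr]
\end{align*}
as a polynomial functional in the independent Gaussian entries of $W^Q$ and $W^K$. After substituting the Hermite decomposition $Q_{n_d}(x)=\sum_{k\ge 1} c_k H_k(x)$ (the constant term vanishes by construction of $Q_{n_d}$), this functional has total degree at most $4 q n_d$. Concentration then follows from either a column-by-column martingale difference decomposition of $W^Q$ and $W^K$ or a Gaussian Poincar\'e/log-Sobolev inequality. Each martingale increment has variance bounded by a polynomial in $n_d$ divided by $d$; since $n_d = O(\log d/\log\log d)$, summing over the $O(d)$ columns gives total variance $o(1)$, and Borel--Cantelli on a polynomial subsequence together with monotonicity yields the almost-sure limit.

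For the mean statement, I would deploy the combinatorial moment/trace method. Expanding $\Tr[(\Ypoly(\Ypoly)^\top)^q]$ gives a sum over closed alternating walks of length $2q$ on indices in $\{1,\dots,d\}$ of products of entries $Q_{n_d}(S_{i_sj_s})$. Substituting the Hermite expansion of $Q_{n_d}$ and applying Wick's formula, the contributions organize by Hermite degree at each edge. For a \emph{fixed} polynomial $Q$, \citet{benigni2021eigenvalue} identify the dominant pairings as exactly those surviving in the Gaussian model $\Ypolylin = (\sqrt{\theta_2^Q}S + \sqrt{\theta_1^Q-\theta_2^Q}W)/\sqrt{d}$: the Hermite degree-one contributions reproduce the $S$-cycles (weighted by $\theta_2^Q$) and the higher Hermite pieces contribute only through the diagonal pairings that are asymptotically Wigner-like (weighted by $\theta_1^Q-\theta_2^Q$). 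By asymptotic freeness of $(W^Q,W^K,W)$ this identifies $\E[m_q(\nu_{\Ypolylin})]$ with $m_q(\nu_{n,\infty})$, and as $n=n_d\to\infty$ we have $\theta_1^Q\to\theta_1$ and $\theta_2^Q\to\theta_2$, hence $m_q(\nu_{n,\infty})\to m_q(\nu_\infty)$.

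The main obstacle is that the Benigni--P\'ech\'e statement is proved for polynomials of \emph{fixed} degree, whereas here $n=n_d\to\infty$. One must therefore track the dependence of every combinatorial error term on the polynomial degree and verify that it stays $o(1)$ when $n_d = \lceil c\log d/\log\log d\rceil$. The errors come from (i) non-dominant walk topologies, whose count grows like $q^{O(q)} n_d^{O(q)}$ but carries an extra factor $d^{-1}$ per excess edge, and (ii) Hermite-degree-$k$ contributions with $k\ge 2$ along non-tree edges, which produce factors involving $\sum_{k\ge 2}(c_k^{(n_d)})^2 k!$; this sum is bounded by $\theta_1(Q_{n_d})\to \theta_1(f)=e^{\beta^2}-1$, which is finite. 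Since $n_d$ is only polylogarithmic in $d$, the polynomial-in-$n_d$ prefactors are absorbed by the $d^{-1}$ combinatorial gains, yielding $(1+o(1))$ control uniform in the fact that we simultaneously take the polynomial approximation limit. Combined with \cref{lem:stong-conv}, this justifies passing to the growing-degree approximation without losing the Gaussian equivalent limit.
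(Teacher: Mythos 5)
Your outline reproduces the content of the paper's proof, but where the paper invokes Benigni--P\'ech\'e directly you instead propose to re-derive their results. The paper's proof of this lemma is a three-line citation: the self-averaging statement is \cite[Lemma~3.10]{benigni2021eigenvalue}, the mean identity is \cite[Theorem~3.5]{benigni2021eigenvalue}, and the final passage from $\nu_{n_d,\infty}$ to $\nu_\infty$ is \cref{lem:nu-n-infty}. The "main obstacle" you identify---that Benigni--P\'ech\'e only treat polynomials of fixed degree---does not actually arise: their Theorem~3.5 is stated precisely for polynomial degree $n$ allowed to grow as $O(\log d/\log\log d)$, since that growing-degree regime is exactly what they themselves need in order to approximate analytic activations. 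So the degree-dependence bookkeeping you propose to carry out (tracking $\mathrm{poly}(n_d)$ prefactors against $d^{-1}$ combinatorial gains) has already been done in the cited source and does not need to be redone here.

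Within your sketch there are two small inaccuracies worth flagging. First, in the self-averaging step you write that each martingale increment has variance bounded by $\mathrm{poly}(n_d)/d$ and that summing over $O(d)$ columns gives total variance $o(1)$; that sum would actually be $\mathrm{poly}(n_d)=O(\mathrm{polylog}(d))$, not $o(1)$. The correct accounting is increment variance $O(\mathrm{poly}(n_d)/d^2)$, total variance $O(\mathrm{poly}(n_d)/d)$, after which one still needs the polynomial-subsequence plus monotonicity trick as you note (or truncation as in Benigni--P\'ech\'e). Second, the final sentence invoking \cref{lem:stong-conv} is out of scope for this lemma: \cref{lem:stong-conv} compares $\Ycntr$ to $\Ypoly$ and enters later in the proof of \cref{theorem:main}, whereas the present lemma is solely about $\Ypoly$ versus its Gaussian equivalent $\Ypolylin$. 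Neither slip changes the fact that the underlying combinatorial argument you outline is sound; it is simply that the paper does not need to spell it out because the growing-degree polynomial case is already covered by the cited theorems.
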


\begin{proof}
The almost-sure convergence~\eqref{align:as-conv} follows from~\citep[Lemma~3.10]{benigni2021eigenvalue}.  
Take $n_d=c\log d/\log\log d$ as \eqref{align:nd}, \citet[Theorem~3.5]{benigni2021eigenvalue} yields as $d \to \infty$,
\begin{align}
   \E\!\left[m_q\!\left(\nu_{\Ypoly}\right)\right]
   = \bigl(1+o(1)\bigr)\, m_q\!\left(\nu_{n_d,\infty}\right).
\end{align}
Finally, by \cref{lem:nu-n-infty}, $m_q(\nu_{n_d,\infty})\to m_q(\nu_\infty)$, which proves the assertion.
\end{proof}

\subsection{Proof of Theorem}\label{ssec:proof-of-thm}
We are now ready to prove the main theorem. Here we use the order of $\eps_d$ in \cref{lem:main-concentration}. The argument proceeds by tracing back from $\Ylin$, for which the limiting distribution $\nu_\infty$ is known to exist.

\begin{proof}[Proof of \cref{theorem:main}]
Fix $q\in\mathbb{N}$. We need to show that almost surely as $d\to\infty$,
\begin{align}\label{align:main-claim}
   m_q\!\bigl(\nu_{\sqrt{d}(A-u_du_d^\top)}\bigr) \to m_q(\nu_\infty).
\end{align}
By \cref{lem:nu-n-infty,lem:lin-poly}, we have almost surely $m_q(\nu_{\Ypoly})$ has the same limit as $ m_q(\nu_{\Ypolylin})$, i.e.\,$m_q(\nu_\infty)$.
Since $\nu_\infty$ has finite moments of all orders, \cref{lem:stong-conv,lem:stong-and-moments} further yields a.s.$ m_q(\nu_{\Ycntr})$  has the same limit as $m_q(\nu_{\Ypoly})$.

Next, by \cref{lem:main-concentration}, writing $\mathcal{E}=D^{-1}-I_d$, 
\begin{align}
   \sqrt{d}(A-u_du_d^\top) = (I_d+\mathcal{E})\Ycntr + \sqrt{d}\mathcal{E}u_du_d^\top,
\end{align}
where $\mathcal{E}$ is diagonal with $\max_i |\mathcal{E}_{ii}|\le \varepsilon_d=o(d^{-1/2+\delta})$. We bound the rank-one term:
$m_q(d\mathcal{E}^2 u_du_d^\top)^{1/q}\le d m_q(u_du_d^\top)^{1/q}\varepsilon_d^2.$
Since
$ m_q(u_du_d^\top)
   =\frac{1}{d}\|u_d\|_2^2
   =\frac{1}{d}$,
we obtain
\begin{align}
   m_q\!\bigl(d\,\mathcal{E}^2 u_du_d^\top\bigr)^{1/q}
   \le d^{\,1-1/q}\,\varepsilon_d^2.
\end{align}
Choosing $\delta=1/(2q)$ and using $\varepsilon_d=o(d^{-1/2+\delta})$ gives $d^{\,1-1/q}\varepsilon_d^2=o(1)$, hence
$m_q\bigl(d\,\mathcal{E}^2 u_du_d^\top\bigr)^{1/q}\to 0.$
Therefore, by the triangle inequality applied to $L^q$-norm, a.s., 
$m_q\!\bigl(\nu_{\sqrt{d}A^\perp}\bigr)
   \to 
   \lim_{d\to\infty} m_q\!\bigl(\nu_{(I_d+\mathcal{E})\Ycntr}\bigr)
   = m_q(\nu_\infty).$

Finally, the weak convergence of $\nu_{\sqrt{d}A}$ follows from interlacing for rank-one perturbations together with the compactness of the support of $\nu_\infty$.
\end{proof}

\subsection{Outliers and the Gap}
Let us first check $\Ylin$ has no outliers, that is, $\| \Ylin \|^2=s_1(\Ylin)^2$ converges to the right edge of $\nu_\infty$.
Set $W_1=W^Q/\sqrt{d}, W_2=(W^K)^\top /\sqrt{d}, W_3=W/\sqrt{d}$, then they are i.i.d.\,standard real Ginibre ensembles. We apply free probability theory (see \cref{ssec:free-defn} for related notations) to handle the operator norm $\| \Ylin \|$. 
%
%%%%%
%Let $W_1,W_2,W_3$ be three iid real Ginibre ensembles.
We only need to show $\| \alpha W_1W_2 + \beta W_3\|$ converges to $\|\alpha c_1c_2 + \beta c_3\|$, where $c_1,c_2,c_3$ are $*$-free circular elements. 

To achieve this, it is enough to prove that $(W_1,W_2,W_3)$ converges strongly to $(c_1,c_2,c_3)$ in the sense of \citep{collins2014strong},
%https://arxiv.org/pdf/1105.4345
namely, for any non-commuting polynomial $P$ in three abstract non-commuting variables and its transposes,  $\|P(W_1,W_2,W_3)\|$ converges to $\|P(c_1,c_2,c_3)\|$
in the large $d$-limit. 

Note that in the case of complex Ginibre, this is already known, as the real part and imaginary part of a complex Ginibre are iid GUE and the strong convergence of GUE is established. 
In the orthogonal case, we can not rely on this trick and we are not available of any direct proof, although we believe that the most recent results of \citep{van2025strong} suffice to prove this result. 
%https://arxiv.org/abs/2108.06312
%or https://arxiv.org/abs/2412.00593
%or https://arxiv.org/abs/2507.00346 for a good review

The easiest way to overcome this problem is to revisit the proof of \cite{collins2014strong}, starting with the fact that
one knows the strong convergence of random orthogonal i.i.d Haar distributed matrices (real) $V_i$.
Then, we make the following observation:
There exists $c>0$ such that for any $\varepsilon >0$, there exists a real polynomial $P_\varepsilon$ and i.i.d orthogonal matrices $V_1,V_2,V_3$ such that
$P(\big| \| V_1P_\varepsilon (V_2+V_2^\top)V_3\|-2\big|>\varepsilon)=O(\exp (-c\varepsilon^2d))$.

The existence of such a polynomial follows closely the proof of \cite{collins2014strong}. Namely, one can find a continuous increasing function $P$ such that the pushforward under $\mu$ of the arcsine distribution (the asymptotic eigenvalue counting distribution of $V+V^\top$) is the quarter-circle distribution. %(the asyptotic singluar counting distribution of a real Ginibre ensemble). 
It follows from the strong convergence of a single real Ginibre matrix that this function $P$ satisfies 
$P(\big \| V_1P (V_2+V_2^\top)V_3\| - 2 \big| >\varepsilon)=O(\exp(-c\varepsilon^2 d))$. 
(Note that $\| W_i\| \to 2$, the right edge of the quarter-circular law.)
%[Note: the operator norm of the real Ginibre converges almost surely asymptotically to $2$ because the square of its radial part is a real Wishart, whose norm is known to converge to $4$]
%
The next step consists of replacing the continuous function $P$ by an increasing real polynomial $P_\varepsilon$ that is uniformly close to $P$ on an open set containing the closed interval $[-1,1]$

We repeat this argument for $W_2,W_3$ and observe that all the Haar orthogonal variables involved $V_1,\ldots , V_9$ can be chosen to be i.i.d.
Using a $3\varepsilon$ argument like in \cite{collins2014strong}, we get strong convergence and therefore the expected result.

%***
\begin{comment}
For the operator norm estimate, we will use the semi-circle version of the Haagerup inequality that implies that the ooperator norm of the $n$-th free chaos of a free Brownian motion is bounded by $(n+1)$ times its $L^2$ norm:

$$\left\|\int_{\mathbb{R}_+^n} f(t_1,\ldots,t_n) \, dS(t_1) \cdots dS(t_n)\right\| \leq (n+1) \|f\|_{L^2(\mathbb{R}_+^n)}$$

This implies through a chaos decomposition that for any non-commutative polynomial $z$ in free circular elements $c_i$ and their adjoints of degree $l$ 
satisfies $$||z||_2\le ||z||\le 2l^{3/2}||z||_2.$$

The case of interest to us is
$z=\alpha c_1c_2 +\beta c_3$
IN this case, we get an even more precise estimate, namely
$$\sqrt{|\alpha|^2+|\beta|^2}\le ||z||\le 2\sqrt{|\alpha|^2+|\beta|^2}$$
(to be specific, the constant $ 2l^{3/2}$ can be improved to $\sqrt{10}$.
%comment: \sqrt{10} is the square root of 1^2+3^2

***
\end{comment}
%%%%%

We conclude that the potential outlier of $A$ is caused by the rank-one perturbation and the nonlinearity,  because of the fact that $\Ylin$ has no outlier asymptotically. 
Further, by \cref{prop:strict-lb}, it holds that 
\begin{align}\label{align:bound-right-edge}
\lim_{d\to\infty}\|\Ylin \|^2=\max \mathrm{supp} (\nu_\infty) > 4(e^{\beta^2}-1),
\end{align}
From this, the right edge of the bulk $\nu_\infty$ of $dAA^\top$ is strictly larger than that of the corresponding Marchenko-Pastur law ($=4\theta_1$) with the i.i.d.\,assumption in \cite{saada2025mind}, showing an effect of non-independent term $S$ in the attention matrix: it enlarges the bulk and decreases the gap between the bulk and $s_1(A)$.

\section{Numerical Simulations}

\subsection{Empirical Spectra}\label{ssec:empirical-six}

\begin{comment}
\begin{wrapfigure}[17]{hr}{0.39\linewidth}
    \centering
    %\vspace{-\intextsep}
    \includegraphics[width=\linewidth]{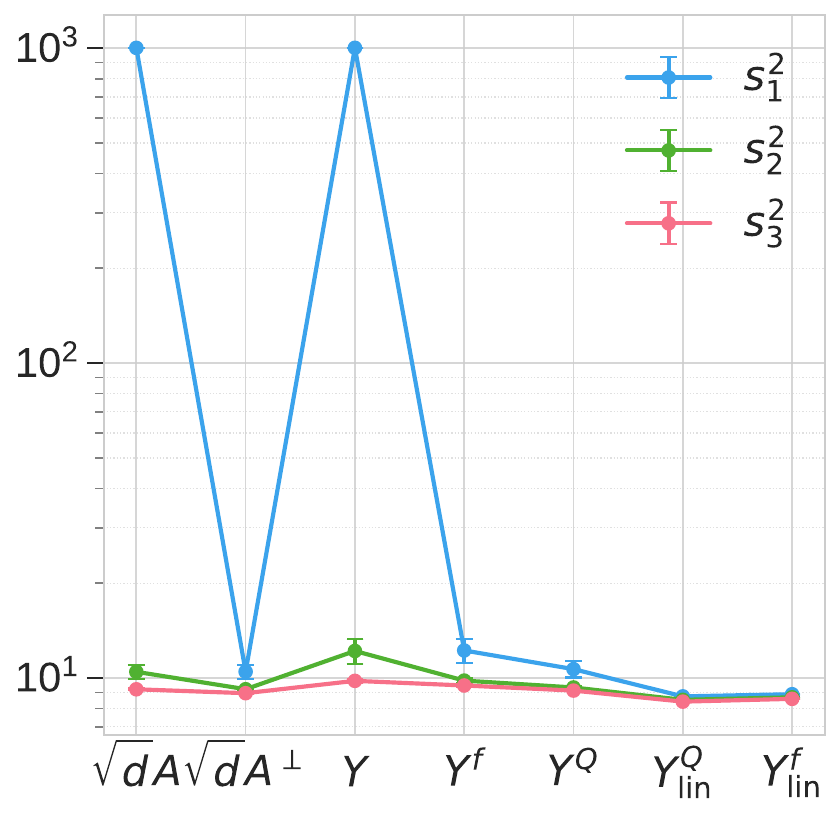}
    %\captionsetup{skip=2pt}
    \caption{Plots of $s_1^2, s_2^2, s_3^2$.}
    \label{fig:topk}
\end{wrapfigure}    
\end{comment}

To verify, at finite $d$, that each approximation step in the proof chain is accurate,
we compared six random matrix models against \(\sqrt{d}\,A\) by examining the squared singular values.
We set \(n_T=d=d_{qk}=1000\), used \(n_s=10\) samples, and fixed \(\beta=1\).
For the polynomial approximation, we chose \(c=2\) and \(n_d=8\). \cref{sec:numerical-settings} summarizes the settings.
\cref{fig:empirical-hist-six} shows histograms and sorted curves of the squared singular values after removing the top three from each sample (the top three are shown separately in \cref{fig:topk}).
Across the six models, the bulks agree closely, consistent with the predicted limiting law.
The bulk right edge exceeds \(4\theta_1=4(e^{\beta^2}-1)\) at \(\beta=1\), providing empirical support for~\eqref{align:bound-right-edge}.
%
%%% non-arxiv
%\begin{comment}
\begin{wrapfigure}[11]{hr}{0.49\linewidth}
    \centering
    \vspace{-\intextsep}
    \includegraphics[width=\linewidth]{}
    \captionsetup{skip=2pt}
    \caption{Plots of $s_1^2, s_2^2, s_3^2$.}
    \label{fig:topk}
\end{wrapfigure}    
%\end{comment}
\cref{fig:topk} reveals a very large leading value \(s_1^2\sim d\) for \(\sqrt{d}\,A\) and \(Y\), as predicted.
Even after removing this dominant outlier, the nonlinear models
\(\sqrt{d}\,A\), \(\sqrt{d}(A-u_du_d^\top)\), \(Y\), \(Y^{f}\), \(Y^{Q}\)
show additional outliers relative to the linearized models
\(Y^{f}_{\mathrm{lin}}\), \(Y^{Q}_{\mathrm{lin}}\).
A mechanism-level discussion is deferred to \cref{ssec:discussion-outliers}.

\subsection{Signal--Noise Balance}

\begin{figure}[t]
\centering
\includegraphics[width=0.48\linewidth]{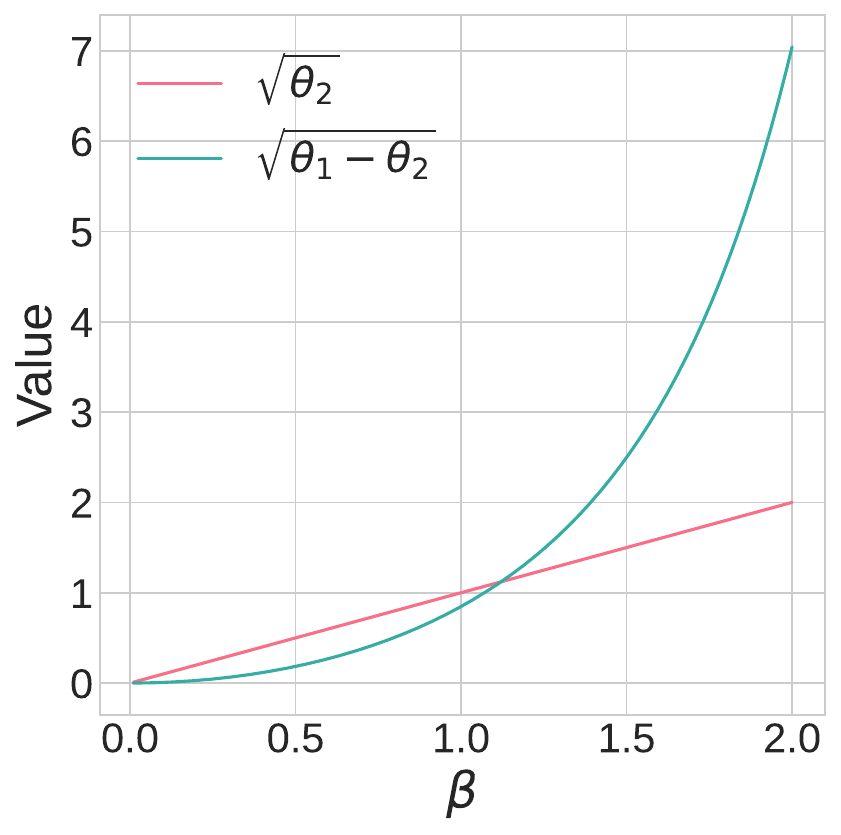}
\includegraphics[width=0.48\linewidth]{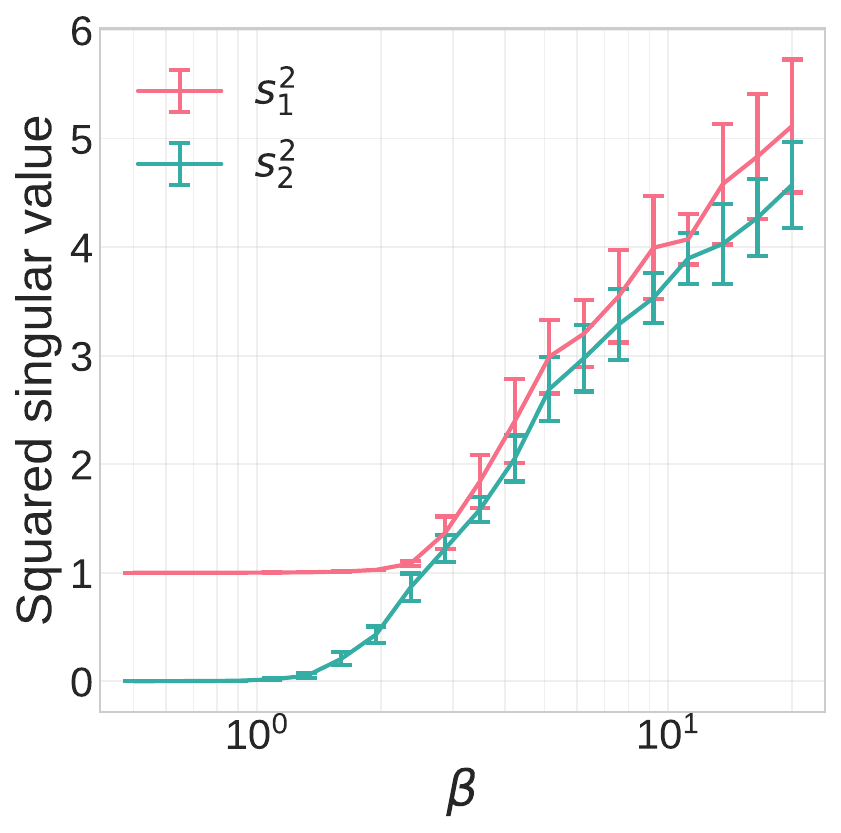}
\caption{(Left): Growth of coefficients around $\beta=1$. (Right): Growth of $s_1(A)^2$ and $s_2(A)^2$ with $d=1000$ including $\beta \gg 1$.}
\label{fig:balance}
\end{figure}    
%% non-arxiv
\begin{comment}
\begin{figure}[t]
\centering
\includegraphics[width=0.49\linewidth]{figs/signal_noise_plot.pdf}
\includegraphics[width=0.49\linewidth]{figs/histogram_beta_analysis.pdf}
\caption{(Left): Growth of coefficients around $\beta=1$. (Right): Growth of $s_1(A)^2$ and $s_2(A)^2$ with $d=1000$ including $\beta \gg 1$.}
\label{fig:balance}
\end{figure}    
\end{comment}
%
We quantify how the two Gaussian‑equivalence components shape $\nu_\infty$ across $\beta$; the balance between $\sqrt{\theta_2}$ and $\sqrt{\theta_1-\theta_2}$ dictates whether the model behaves more like a single Ginibre matrix or like $S/\sqrt{d}$.
We examined how the comparison of the two scales varies with the inverse temperature \(\beta\).
In our setting, \(\sqrt{\theta_2}=\beta\) and \(\sqrt{\theta_1-\theta_2}=\sqrt{e^{\beta^2}-1-\beta^2}\).
\cref{fig:balance}~(left) shows that these curves intersect at \(\beta=1.121\pm 10^{-3}\).

\subsection{Large Inverse Temperature}
To probe $\beta$ well beyond the valid range of our bounds and demonstrate why those bounds matter, we increased \(\beta\) at fixed \(d=1000\) with \(n_s=10\).
\cref{fig:balance}~(right) shows that \(s_2(A)^2\) rises near \(\beta\gtrsim 1\), while \(s_1(A)^2\) grows for \(\beta\gtrsim 2\),
indicating that the softmax linearization breaks before the normalizer dominates.
At \(\beta=50\), \cref{fig:vs_poisson} exhibits discrete atoms in \(\nu_A\) aligned with Poisson(1) quantiles, consistent with an argmax-like limit.
Theoretical thresholds and the Poisson heuristic are discussed in \cref{ssec:discussion-beta}.

\begin{figure}[t]
    \centering
    \includegraphics[width=0.95\linewidth]{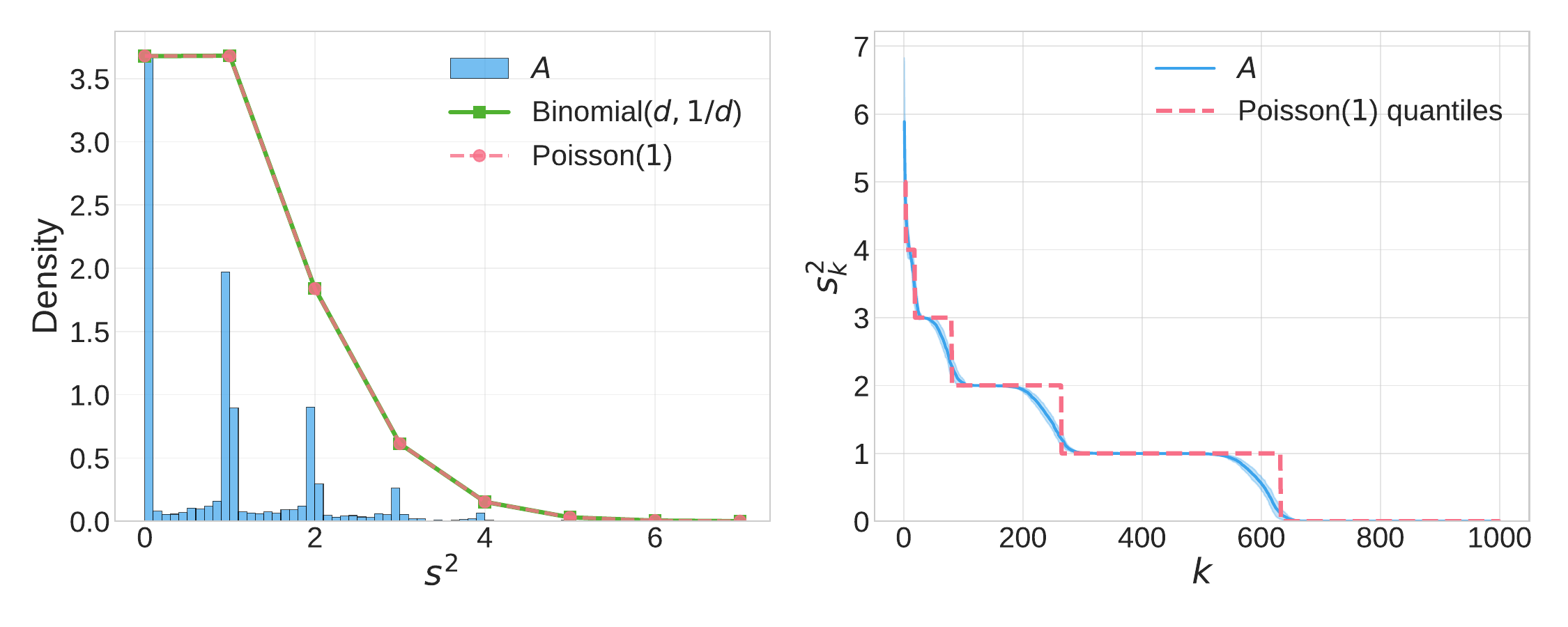}
    \caption{$\nu_A$ vs.\ Poisson$(1)$. Left: histogram ($d=1000$, $\beta=50$, $n_s=10$, bin width $0.1$); right: sorted value and quantile plot using $F^{-1}(1-k/d)$ for Poisson$(1)$. Histogram heights differ because Poisson is discrete, but the quantiles match well.}
    \label{fig:vs_poisson}
\end{figure}

%%% for non-arxiv
\begin{comment}
\begin{figure}[t]
    \centering
    \includegraphics[width=\linewidth]{figs/combined_pmf_d1000_beta50_n10_seed0.pdf}
    \caption{$\nu_A$ vs.\ Poisson$(1)$. Left: histogram ($d=1000$, $\beta=50$, $n_s=10$, bin width $0.1$); right: sorted value and quantile plot using $F^{-1}(1-k/d)$ for Poisson$(1)$. Histogram heights differ because Poisson is discrete, but the quantiles match well.}
    \label{fig:vs_poisson}
\end{figure}
\end{comment}

\section{Discussion}
\subsection{Threshold in The Inverse Temperature}\label{ssec:discussion-beta}
Our rigorous analysis establishes rank collapse for $\beta=O(1)$.
We argue for a threshold $\beta=\Theta(\sqrt{\log \ntoken})=\Theta(\sqrt{\log d })$ that separates regimes. (Recall we take a proportional limit of $d$ and $\ntoken$.)
From the fluctuation scale \(d^{-1/2+\delta}\) of the row normalizer, the truncation level \(K= (\log d)^{1/2+\delta}\),
and the observation that the bulk right edge scales like \(O(e^{\beta^2})\)~\eqref{align:bound-right-edge}, the evidence points to a threshold at \(\beta=\Theta(\sqrt{\log d})\).
Around this scale, the softmax linearization fails and the bulk right edge
inflates, so additional singular directions can rise to $O(1)$.
This conclusion is consistent with \cref{fig:balance}(right) (growth of $s_2(A)^2$ near $\beta\gtrsim 1$).
%and \cref{fig:vs_poisson}(Poisson-like atoms at very large $\beta$).

In the further large $\beta$ regime, \cref{fig:vs_poisson} illustrates that \(s_1(A)\) departs from \(1\) and \(\nu_A\) develops Poisson-like atoms, consistent with an argmax limit at very large \(\beta\). For $\beta \gg 1$, the softmax is approximated by the \emph{argmax}. The distribution of squared singular values associated with a random argmax coincides with that of drawing $d$ balls independently with replacement into $d$ bins, which is $\mathrm{Binomial}(d,1/d)$; this converges to a Poisson distribution with rate $\lambda=1$ as $d\to\infty$. Thus, we hypothesize that for very large $\beta$, $\nu_A$ is well approximated by $\mathrm{Poisson}(1)$. 

\subsection{General Input Matrix}
We outline how to extend beyond orthonormal $X$ to arbitrary $X$.
Generalizing the input matrix $X$ should be attainable by considering a four-factor random-matrix model $XW^Q (XW^K)^\top$ combined with an entrywise nonlinearity.  
In particular, an extension of \citep{speicher2024entrywise} should accommodate this setting.

\subsection{Outliers}\label{ssec:discussion-outliers}
We explain outlier mechanisms, the bulk misses, and sketch their modeling.
Interlacing alone does not control extreme eigenvalues; therefore, so edge behavior requires separate analysis.
Nonlinearity is likely to generate outliers. Following the suggestion of~\cite{benigni2022largest} under strong assumptions on the nonlinearity,
introducing a rank-one term with weight \(\sqrt{\theta_3}\), where \(\theta_3=\mathbb{E}\big[\tfrac12 f''(\chi)\big]^2\),
may capture such outliers.
A rigorous treatment would need uniform high-moment bounds \(q=q(d)\) together with uniform control of normalizer fluctuations,  with a slowly growing function dictated by the LIL~\eqref{align:lil}.

\section{Conclusion}

This work establishes a Gaussian equivalent model for the bulk singular value distribution of \(\sqrt{d}\,A\) in the \(\beta=O(1)\) regime.
The proof relies on (i) the concentration of normalizers at scale \(d^{-1/2+\delta}\), (ii) a tight polynomial approximation of \(\exp(\beta x)\) with degree \(n_d=\lceil c\log d/\log\log d \rceil\), and (iii) a polynomial linearization transferring limits to \(f(S)\) and then to \(A\).
The resulting bulk diverges from the Mar\v{c}enko–Pastur law due to the non-independence in \(S\), and our simulations (\cref{fig:empirical-hist-six}) corroborate the enlarged right edge.
Open directions include  the large-\(\beta\) regime, extending to general \(X\), multi-head attention, and a precise theory of outliers which will require uniform high-moment bounds.

\section*{Acknowledgements}
We thank the anonymous reviewers for their constructive feedback, which helped us improve the introduction and related work sections.
T.H.\ was supported by JST BOOST, Japan, Grant Number JPMJBY24G4.
R.K.\ was supported by JST FOREST, Japan, Grant Number JPMJFR226Q, and JSPS KAKENHI Grant Numbers 22H05116 and 23K16965.

\bibliographystyle{hunsrtnatarxiv}

%%%%%%%%%%%%%%%%%%%%%%%%%%%%%%%%%%%%%%%%%%%%%%%%%%%%%%%%%%%%
\section*{Checklist}

% %%% BEGIN INSTRUCTIONS %%%
%The checklist follows the references. For each question, choose your answer from the three possible options: Yes, No, Not Applicable.  You are encouraged to include a justification to your answer, either by referencing the appropriate section of your paper or providing a brief inline description (1-2 sentences). 
%Please do not modify the questions.  Note that the Checklist section does not count towards the page limit. Not including the checklist in the first submission won't result in desk rejection, although in such case we will ask you to upload it during the author response period and include it in camera ready (if accepted).

%\textbf{In your paper, please delete this instructions block and only keep the Checklist section heading above along with the questions/answers below.}
% %%% END INSTRUCTIONS %%%

\begin{enumerate}

  \item For all models and algorithms presented, check if you include:
  \begin{enumerate}
    \item A clear description of the mathematical setting, assumptions, algorithm, and/or model. [Yes]
    \item An analysis of the properties and complexity (time, space, sample size) of any algorithm. [Not Applicable]
    \item (Optional) Anonymized source code, with specification of all dependencies, including external libraries. [Not Applicable]
  \end{enumerate}

  \item For any theoretical claim, check if you include:
  \begin{enumerate}
    \item Statements of the full set of assumptions of all theoretical results. [Yes]
    \item Complete proofs of all theoretical results. [Yes]
    \item Clear explanations of any assumptions. [Yes]     
  \end{enumerate}

  \item For all figures and tables that present empirical results, check if you include:
  \begin{enumerate}
    \item The code, data, and instructions needed to reproduce the main experimental results (either in the supplemental material or as a URL). [Yes]
    \item All the training details (e.g., data splits, hyperparameters, how they were chosen). [Not Applicable]
    \item A clear definition of the specific measure or statistics and error bars (e.g., with respect to the random seed after running experiments multiple times). [Yes]
    \item A description of the computing infrastructure used. (e.g., type of GPUs, internal cluster, or cloud provider). [Not Applicable]
  \end{enumerate}

  \item If you are using existing assets (e.g., code, data, models) or curating/releasing new assets, check if you include:
  \begin{enumerate}
    \item Citations of the creator If your work uses existing assets. [Not Applicable]
    \item The license information of the assets, if applicable. [Not Applicable]
    \item New assets either in the supplemental material or as a URL, if applicable. [Not Applicable]
    \item Information about consent from data providers/curators. [Not Applicable]
    \item Discussion of sensible content if applicable, e.g., personally identifiable information or offensive content. [Not Applicable]
  \end{enumerate}

  \item If you used crowdsourcing or conducted research with human subjects, check if you include:
  \begin{enumerate}
    \item The full text of instructions given to participants and screenshots. [Not Applicable]
    \item Descriptions of potential participant risks, with links to Institutional Review Board (IRB) approvals if applicable. [Not Applicable]
    \item The estimated hourly wage paid to participants and the total amount spent on participant compensation. [Not Applicable]
  \end{enumerate}

\end{enumerate}

\clearpage
\appendix
\thispagestyle{empty}

% Supplementary material: To improve readability, you must use a single-column format for the supplementary material.
\onecolumn
\aistatstitle{Supplementary Materials}
% Shortcuts
\newcommand{\wt}{\widetilde}

\section{Notations}
Throughout this paper, we fix a probability space $(\Omega,\mathcal{F}, \Pr)$. We say an event $E \in \mathcal{F}$ occurs \emph{almost surely} if $\Pr(E)=1$.

\begin{definition}
    For a sequence of probability distributions $\nu_d (d \in \N)$ and $\nu$, 
    We say $\nu_d$ converges to $\nu$ \textit{weakly} if   
    \begin{align}
       \lim_{d \to \infty} \int g(x)\nu_d(dx)  = \int g(x) \nu(dx),
    \end{align}
    for any bounded continuous function $g$ on $\R$.

   We say $\nu_d$ ($d \in \N$) converges to $\nu$ \textit{in moments} if   
    \begin{align}
            \lim_{d \to \infty}  \int x^q\nu_d(dx) = \int x^q \nu(dx),
    \end{align}
    for any $q \in \N$.    
\end{definition}

%\subsection{Linear Algebras}
For $x \in \R^d$, we write $\| x \|_2 = \sqrt{\sum_{i=1}^d x_i^2}$. If there is no confusion, we omit the index $2$.  For matrix $A \in \R^{d \times d}$, we define the operator norm as follows:
\begin{align}
    \| A\|_\infty = \sup_{x \in \R^d, \|x\|_2=1 }\| Ax\|_2.
\end{align}
In fact, $\| AA^\top \|_\infty= s_1(A)^2$, where $A^\top$ is the transposed matrix of $A$.
Since in this paper we only consider the operator norm  for matrices, we omit the index $\infty$ if there is no confusion.
%\subsection{Real Analysis}

\begin{definition}[Asymptotic notation]
Let $f:\mathbb{N}\to\mathbb{R}_+$ and $g:\mathbb{N}\to\mathbb{R}_+$ be two nonnegative functions, and assume $f(d)>0$ for all sufficiently large $d$.
We write 
\begin{align}
g(d)=\Theta(f(d)) \quad \text{as } d\to\infty
\end{align}
if there exist constants $c_1,c_2>0$ and $d_0\in\mathbb{N}$ such that 
$c_1 f(d)\le g(d)\le c_2 f(d), \forall d\ge d_0.$
We write 
\begin{align}
g(d)=O(f(d)) \quad \text{as } d\to\infty
\end{align}
if there exists a constant $C>0$ and $d_0\in\mathbb{N}$ such that
$0\le g(d)\le C\,f(d), \forall d\ge d_0.$
We write 
\begin{align}
g(d)=o(f(d)) \quad \text{as } d\to\infty
\end{align}
if for every $\varepsilon>0$ there exists $d_0\in\mathbb{N}$ such that
$0\le g(d)\le \varepsilon\,f(d), \forall d\ge d_0,$
equivalently, $\lim_{d\to\infty}\tfrac{g(d)}{f(d)}=0.$
\end{definition}

\section{Lemmas}

% === Truncation events (unified) ===
\newcommand{\nullc}{\Omega_{1,\trunc/\beta}}
\newcommand{\nullca}{\Omega_{d,\trunc/\beta}}

\newcommand{\Gd}{G_\delta} % {|\sigma-1|\le \delta}
\newcommand{\ce}[2]{\mathcal L_{#1}(#2)} % conditional expectation

% single inner product: xi and sigma
\newcommand{\xiij}{\xi_j}

\subsection{Concentration of Normalizer}

\subsubsection{A Single Row}
Let $v,w$ be i.i.d.\,standard Gaussian vectors.
Write $ \xi= \langle v, w \rangle /\sqrt{d}$.
Further, write 
\begin{align}
    \sigma = \| v\|/\sqrt{d}.
\end{align}
Define $\Omega_\trunc=\{ |\xi| \leq \trunc\}$.
\begin{lemma}\label{lem:chernov}
    $\Pr(\Omega_\trunc^c)\leq  2\exp(-\trunc^2/2 )$ for $\trunc <\sqrt{d}$.
\end{lemma}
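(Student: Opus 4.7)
The plan is a Chernoff-type argument that exploits the Gaussian--quadratic structure of $\xi$. The first observation is that the rotational invariance of $w$ makes $\xi\mid v\sim\Normal(0,\sigma^2)$ with $\sigma^2=\|v\|^2/d$: writing $w$ in the orthogonal frame based at $v/\|v\|$, the projection $\langle v,w\rangle/\|v\|$ is a standard normal independent of $v$. Conditionally, the Gaussian tail immediately gives $\Pr(|\xi|>K\mid v)\le 2\exp(-K^2/(2\sigma^2))$, reducing matters to averaging out the random variance $\sigma^2$ via the concentration of $\chi^2_d$ around $d$.

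A cleaner route, and the one I would follow, is to compute the unconditional MGF of $\xi$ directly. The polarization identity yields $\langle v,w\rangle=\tfrac12(\|X\|^2-\|Y\|^2)$ with $X=(v+w)/\sqrt{2}$ and $Y=(v-w)/\sqrt{2}$ i.i.d.\ $\Normal(0,I_d)$, whence
\[
\E[e^{t\xi}]=(1-t^2/d)^{-d/2},\qquad |t|<\sqrt{d}.
\]
Chernoff then gives $\Pr(\xi>K)\le\inf_{0<t<\sqrt d}\exp(-tK-(d/2)\log(1-t^2/d))$, and the assumption $K<\sqrt d$ is exactly what allows an optimal $t$ near $K$ to sit inside the admissible range. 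Setting $t=K$ and using $-\log(1-x)\le x+x^2/(2(1-x))$ with $x=K^2/d$ bounds the exponent by $-K^2/2+K^4/(4d(1-K^2/d))$; the symmetry $\xi\stackrel{d}{=}-\xi$ then supplies the factor of $2$ for the two-sided bound.

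The main obstacle is keeping the leading constant $1/2$ tight uniformly for all $K<\sqrt d$: the true large-deviation rate of this Gaussian chaos is strictly less than $K^2/2$ once $K=\Theta(\sqrt d)$, reflecting its sub-exponential rather than sub-Gaussian tail. In the only regime where the lemma is actually invoked, namely $K=(\log d)^{1/2+\delta}$, the correction $K^4/d$ is vanishingly small, so $\Pr(|\xi|>K)\le 2\exp(-K^2/2)$ holds as stated; the hypothesis $K<\sqrt d$ is primarily present to guarantee that the MGF is finite at the optimizing $t$.
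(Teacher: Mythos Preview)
Your approach---computing the unconditional MGF $\E[e^{t\xi}]=(1-t^2/d)^{-d/2}$ and applying Chernoff with $t=K$---is exactly the paper's. Your caution about the constant is well placed: the paper's proof asserts $-d\log(1-\lambda^2/d)\le\lambda^2$, justified by ``$a+\log(1-a)\ge0$ for $0<a<1$,'' but that inequality is false (in fact $\log(1-a)\le-a$, so the reverse holds). Your replacement $-\log(1-x)\le x+x^2/(2(1-x))$ is the correct step and gives
\[
\Pr(|\xi|>K)\ \le\ 2\exp\!\Bigl(-\tfrac{K^2}{2}+\tfrac{K^4}{4d(1-K^2/d)}\Bigr),
\]
which is strictly weaker than the stated bound but indistinguishable from it in the only regimes where the lemma is invoked ($K=(\log d)^{1/2+\delta}$ or $K=c\log d/\log\log d$, where the correction $K^4/d\to0$). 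So neither argument establishes the lemma exactly as written for all $K<\sqrt d$, but your version is the honest one and suffices for every downstream use in the paper.
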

\begin{proof}
    By Chernoff, $\Pr(\Omega_\trunc^c) \leq \inf_{0 < \lambda < \sqrt{d}} \exp \left( -\lambda \trunc - \frac{d}{2} \log (1 -\lambda^2/d) \right)$. Now $-d  \log(1-\lambda^2/d) \leq \lambda^2$ since $a + \log(1-a)\geq 0$ for $0<a<1$. By $\trunc < \sqrt{d}$, we can set $\lambda=\trunc$ and get
    $\Pr(\Omega_K^c)\leq 2 \exp(-\trunc^2 + \trunc^2/2)$.
\end{proof}

\begin{lemma}
\begin{align}
\E[e^\xi \mid \Omega_\trunc, v] =\ce{\trunc}{\sigma}:=e^{\sigma^{2}/2}\,
\frac{\Phi\left(\frac{\trunc-\sigma^{2}}{\sigma}\right)-\Phi\left(\frac{-\trunc-\sigma^{2}}{\sigma}\right)}
{\Phi\left(\frac{\trunc}{\sigma}\right)-\Phi\left(\frac{-\trunc}{\sigma}\right)}    
\end{align}
where $\Phi$ is the standard normal CDF.
\end{lemma}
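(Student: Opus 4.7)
The plan is to reduce the claim to a routine truncated-Gaussian computation. Conditional on $v$, the vector $w$ is still standard Gaussian (it is independent of $v$), so $\langle v,w\rangle$ is a mean-zero Gaussian with variance $\|v\|_2^2$. Dividing by $\sqrt{d}$ gives $\xi\mid v\sim\mathcal N(0,\sigma^2)$ where $\sigma=\|v\|_2/\sqrt d$. Since the event $\Omega_\trunc=\{|\xi|\le \trunc\}$ is $\sigma(v,w)$-measurable and corresponds, once $v$ is fixed, exactly to truncating $\xi$ to $[-\trunc,\trunc]$, the conditional expectation $\E[e^\xi\mid\Omega_\trunc,v]$ is the expectation of $e^\xi$ against the Gaussian density $\mathcal N(0,\sigma^2)$ restricted to $[-\trunc,\trunc]$.

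The first step is therefore to write
\begin{align}
\E[e^\xi\mid\Omega_\trunc,v]=\frac{\int_{-\trunc}^{\trunc}e^{x}\,\varphi_\sigma(x)\,dx}{\int_{-\trunc}^{\trunc}\varphi_\sigma(x)\,dx},
\end{align}
where $\varphi_\sigma(x)=(\sigma\sqrt{2\pi})^{-1}e^{-x^2/(2\sigma^2)}$. The denominator is the Gaussian mass of $[-\trunc,\trunc]$ and rewrites immediately as $\Phi(\trunc/\sigma)-\Phi(-\trunc/\sigma)$ after the substitution $y=x/\sigma$. This reproduces the denominator of $\ce{\trunc}{\sigma}$.

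The second step handles the numerator by the standard Gaussian MGF completion of the square: $-x^2/(2\sigma^2)+x=-\bigl(x-\sigma^2\bigr)^2/(2\sigma^2)+\sigma^2/2$. Pulling out the prefactor $e^{\sigma^2/2}$ and substituting $y=(x-\sigma^2)/\sigma$ turns the truncated integral into
\begin{align}
e^{\sigma^2/2}\int_{(-\trunc-\sigma^2)/\sigma}^{(\trunc-\sigma^2)/\sigma}\frac{1}{\sqrt{2\pi}}e^{-y^2/2}\,dy
= e^{\sigma^2/2}\Bigl[\Phi\!\bigl(\tfrac{\trunc-\sigma^2}{\sigma}\bigr)-\Phi\!\bigl(\tfrac{-\trunc-\sigma^2}{\sigma}\bigr)\Bigr],
\end{align}
which is exactly the numerator of $\ce{\trunc}{\sigma}$. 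Dividing yields the stated formula.

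There is no genuine obstacle here: the statement is an identity and the only thing to be careful about is that the conditioning on $\Omega_\trunc$ together with $v$ is really equivalent to truncating the conditional law $\xi\mid v\sim\mathcal N(0,\sigma^2)$ to $[-\trunc,\trunc]$. Once that measurability point is noted, the computation is a completion of the square plus one linear change of variables. Implicitly one also uses $\sigma>0$ almost surely, which holds since $v$ is a nondegenerate Gaussian vector, so the truncated-normal density and the expression $\ce{\trunc}{\sigma}$ are well defined.
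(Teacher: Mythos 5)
Your proof is correct and follows essentially the same route as the paper's: condition on $v$, write the conditional expectation as a ratio of truncated Gaussian integrals, and complete the square $x - x^2/(2\sigma^2) = -(x-\sigma^2)^2/(2\sigma^2) + \sigma^2/2$ to evaluate the numerator. The extra remarks on measurability and $\sigma>0$ a.s.\ are a welcome bit of care but do not change the argument.
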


\begin{proof}
Recall $\sigma^2=\| v \|^2/d$. Then
    \begin{align}
     \E[e^\xi \mid \Omega_\trunc, v] = \frac{\displaystyle \int_{-\trunc}^{\trunc} e^{x}\,\frac{1}{\sigma\sqrt{2\pi}}e^{-x^{2}/(2\sigma^{2})}\,dx}
{\displaystyle \int_{-\trunc}^{\trunc}\frac{1}{\sigma\sqrt{2\pi}}e^{-x^{2}/(2\sigma^{2})}\,dx}.
    \end{align}
Thus, we complete the square in the numerator:
\begin{align}
e^{x-\frac{x^{2}}{2\sigma^{2}}}
=e^{\sigma^{2}/2}\, \exp\!\left(-\frac{(x-\sigma^{2})^{2}}{2\sigma^{2}}\right).    
\end{align}
Thus, the assertion holds.
\end{proof}

Therefore, the conditional expectation only depends on $\trunc$ and $\sigma=\| v \|/\sqrt{d}$.
As $\trunc\to\infty$, the ratio $\to 1$, so it converges to $\mathbb E[e^\xi\mid v]=e^{\sigma^{2}/2}$.

\begin{lemma}
For $\beta>0$, we have $\E[e^{\beta \xi} \mid \Omega_{K/\beta}, v] =  \ce{\trunc}{\beta\sigma}$ with $\sigma=\| v\|_2/\sqrt{d}$.
\end{lemma}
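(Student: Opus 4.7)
The plan is to reduce this statement to the previous lemma by a direct rescaling argument. Conditionally on $v$, we have $\xi = \langle v, w\rangle/\sqrt{d} \sim \mathcal{N}(0,\sigma^2)$ with $\sigma = \|v\|_2/\sqrt{d}$, so $\eta := \beta\xi \sim \mathcal{N}(0,\beta^2\sigma^2)$ given $v$. The crucial observation is that the event $\Omega_{K/\beta} = \{|\xi|\le K/\beta\}$ coincides exactly with $\{|\eta|\le K\}$. Hence
\begin{align}
    \E[e^{\beta\xi}\mid \Omega_{K/\beta},v] = \E[e^{\eta}\mid |\eta|\le K, v].
\end{align}

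Now I would invoke the previous lemma, whose proof depended on $v$ only through the conditional standard deviation $\sigma$ of $\xi$. Since $\eta\mid v$ has standard deviation $\beta\sigma$, applying the same identity with $\sigma$ replaced by $\beta\sigma$ gives the right-hand side $\ce{K}{\beta\sigma}$, as claimed.

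Alternatively, one can redo the calculation from scratch: write the conditional expectation as
\begin{align}
    \E[e^{\beta\xi}\mid \Omega_{K/\beta},v] = \frac{\int_{-K/\beta}^{K/\beta} e^{\beta x}\,\frac{1}{\sigma\sqrt{2\pi}}e^{-x^2/(2\sigma^2)}\,dx}{\int_{-K/\beta}^{K/\beta}\frac{1}{\sigma\sqrt{2\pi}}e^{-x^2/(2\sigma^2)}\,dx},
\end{align}
perform the substitution $u = \beta x$ (so $du = \beta\,dx$ and the Jacobian cancels between numerator and denominator), and then complete the square in the numerator exactly as in the previous lemma but with the variance $\beta^2\sigma^2$. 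The resulting $\Phi$-ratio is precisely $\ce{K}{\beta\sigma}$.

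There is no real obstacle here; the statement is essentially a change-of-variables corollary of the preceding lemma. The only point worth mentioning is that the rescaling must be applied simultaneously to the integrand $e^{\beta x}$ and to the truncation endpoints $\pm K/\beta$, which is precisely why the argument of $\ce{\cdot}{\cdot}$ in the first slot remains $K$ while the second slot becomes $\beta\sigma$.
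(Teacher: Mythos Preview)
Your proof is correct and follows essentially the same rescaling approach as the paper, which dispatches the lemma in a single sentence (``By replacing $v$ with $\beta v$, the assertion holds''); your substitution $\eta=\beta\xi$ is exactly the effect of replacing $v$ by $\beta v$ in the definition of $\xi$, and both arguments track the truncation event and the conditional standard deviation the same way.
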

\begin{proof}
By replacing $v$ with $\beta v$, the assertion holds.
\end{proof}

Recall $\mathbb E[e^{\beta\xi} ]=e^{\beta^2/2}$. 

\begin{lemma}\label{lem:prob-con-general}
Fix  $\delta \in (0,1)$. Then
$$
\Pr\left(\big| \ce{\trunc}{\beta\sigma}-e^{\beta^2/2}\big|>\varepsilon\right)
\le 2\exp\!\left(
-\frac{d}{2}\,
\left(\frac{(\varepsilon-\Delta^\beta(\trunc))_+}{L^\beta_\delta(\trunc)}\right)^2
\right) + 2e^{-d\delta^2/2}.
$$
where $(x)_+=\max(x,0)$,
\begin{align}
 \Delta^\beta(\trunc)&=|\ce{K}{\beta}- e^{\beta^2/2}|,   \\
L^\beta_\delta(\trunc)&= \beta \sup_{|\sigma -1|\leq \delta}|\mathcal{L}_{\trunc}^\prime(\beta\sigma)|.
\end{align}
\end{lemma}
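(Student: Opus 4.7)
The plan is to split the randomness into two parts: the fluctuation of $\sigma = \|v\|_2/\sqrt{d}$ around $1$, which is $v$-measurable, and everything else. The key observation is that $\ce{\trunc}{\beta\sigma}$ depends on $v$ only through the scalar $\sigma$, so once $\sigma$ is close to $1$, the conditional expectation $\ce{\trunc}{\beta\sigma}$ is close to $\ce{\trunc}{\beta}$, which by definition differs from $e^{\beta^2/2}$ by $\Delta^\beta(\trunc)$.

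First I would set $A_\delta = \{|\sigma - 1| \leq \delta\}$ and control $\Pr(A_\delta^c)$ by Gaussian concentration. Since the map $v \mapsto \|v\|_2/\sqrt{d}$ is $(1/\sqrt{d})$-Lipschitz on $\R^d$, the Borell-TIS inequality for $v \sim \mathcal{N}(0, I_d)$ gives
\begin{align}
\Pr(|\sigma - 1| > t) \leq 2 \exp(-dt^2/2), \quad t > 0,
\end{align}
so in particular $\Pr(A_\delta^c) \leq 2 e^{-d\delta^2/2}$, which is exactly the second term of the asserted bound.

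Next, on $A_\delta$ I would exploit smoothness. The ratio formula for $\ce{\trunc}{s}$ shows it is $C^\infty$ in $s > 0$ (the denominator is bounded away from zero uniformly on $\{|s/\beta - 1| \leq \delta\}$), so the mean value theorem applied to $\sigma \mapsto \ce{\trunc}{\beta \sigma}$ yields
\begin{align}
|\ce{\trunc}{\beta\sigma} - \ce{\trunc}{\beta}| \leq \beta \sup_{|\sigma' - 1| \leq \delta} |\mathcal{L}_\trunc'(\beta \sigma')| \cdot |\sigma - 1| = L^\beta_\delta(\trunc)\, |\sigma - 1|.
\end{align}
Combined with the triangle inequality against the target $e^{\beta^2/2}$, on $A_\delta$ the event $\{|\ce{\trunc}{\beta\sigma} - e^{\beta^2/2}| > \varepsilon\}$ is contained in $\{L^\beta_\delta(\trunc)|\sigma - 1| > \varepsilon - \Delta^\beta(\trunc)\}$, i.e.\ in $\{|\sigma - 1| > (\varepsilon - \Delta^\beta(\trunc))_+ / L^\beta_\delta(\trunc)\}$. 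Applying the Gaussian norm bound once more with $t = (\varepsilon - \Delta^\beta(\trunc))_+/L^\beta_\delta(\trunc)$ produces the first exponential; union-bounding over $A_\delta$ and $A_\delta^c$ then gives the claim.

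The only mild subtlety, rather than an obstacle, is making the MVT step clean: one must check that $\ce{\trunc}{\cdot}$ is well-defined and differentiable on a neighborhood of $\beta$ (ensured by the closed-form ratio of Gaussian CDF differences, whose denominator stays strictly positive for $\sigma$ near $1$ and finite $\trunc$), so that the Lipschitz constant $L^\beta_\delta(\trunc)$ is finite and the bound is nontrivial. Everything else is bookkeeping: two applications of Gaussian concentration to $\|v\|_2$ and a triangle inequality.
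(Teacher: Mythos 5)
Your proof is correct and follows essentially the same route as the paper: condition on $G_\delta = \{|\sigma - 1| \le \delta\}$, combine the mean value theorem with the triangle inequality to bound $|\mathcal L_K(\beta\sigma) - e^{\beta^2/2}|$ by $\Delta^\beta(K) + L^\beta_\delta(K)\,|\sigma - 1|$, and then invoke Gaussian concentration of $\sigma = \|v\|_2/\sqrt{d}$ twice (once for $\Pr(G_\delta^c)$, once for the residual event). The only cosmetic difference is that you cite Borell--TIS, which strictly speaking concentrates $\sigma$ around $\E[\sigma]$ rather than around $1$; the paper instead cites a $\chi$-concentration bound $\Pr(|\sigma - 1| > u) \le 2e^{-du^2/2}$ directly, which absorbs the (negligible, $O(1/d)$) gap between $\E[\sigma]$ and $1$.
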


\begin{proof}
Consider $G_\delta=\{ |\sigma-1|<\delta\}$.
On $G_\delta$, by the mean-value theorem around $\sigma=1$,
\begin{align}
 |\ce{\trunc}{\beta \sigma}-e^{\beta^2/2}| \leq \Delta^\beta(\trunc) +   L^\beta_\delta(\trunc) |\sigma-1|.
\end{align}
Hence, for any $\varepsilon>0$,
\begin{align}
    \Pr \left( \big| \ce{\trunc}{\beta \sigma} - e^{\beta^2/2}\big| > \eps\right)
    \le \Pr\!\left(|\sigma-1|>\frac{(\varepsilon-\Delta^\beta(\trunc))_+}{ L^\beta_\delta(\trunc)}\right) + \Pr(G_\delta^c).
\end{align}

Now use the $\chi$-concentration for $\sigma=\|v\|/\sqrt d$:
\begin{align}
\Pr(|\sigma-1|>u)\le2\exp\!\left(-\frac{d\,u^2}{2}\right)\qquad(0<u\le 1/2).     
\end{align}
Therefore, we have the claim.
\end{proof}

\begin{lemma}[Bias bound for truncated normalizer]\label{lem:bias-bound}
Assume $K/\beta \ge 3/2$. Let 
\[ \Delta^\beta(K) := \big|\mathcal L_K(\beta) - e^{\beta^2/2}\big| \] 
be the bias of the truncated normalizer at $\sigma=1$. Then 
\[ 
\Delta^\beta(K) \le C_{\mathrm{bias}}(\beta)\,\exp\!\Big(-\frac{(K-\beta^2)^2}{2\beta^2}\Big)\!, 
\] 
where $C_{\mathrm{bias}}(\beta) = \frac{\beta\,e^{\beta^2}}{\sqrt{2\pi}\,\big(2\Phi(3/2)-1\big)} = O(e^{\beta^2})$. In particular, if 
\[ 
K \ge \beta^2 + \beta\,\sqrt{\,2\log\!\frac{2\,C_{\mathrm{bias}}(\beta)}{\varepsilon}\,}\,, 
\] 
then $\Delta^\beta(K) \le \varepsilon/2$.
\end{lemma}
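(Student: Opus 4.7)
The plan is to reduce the bias $\Delta^\beta(\trunc)$ to a one-dimensional Gaussian-weighted integral of $\sinh$ and then bound it monotonically on $[0,\beta]$. Setting $\rho:=K/\beta$, so that the hypothesis $K/\beta\ge 3/2$ becomes $\rho\ge 3/2$, I would begin by evaluating the definition of $\ce{\trunc}{\cdot}$ at $\sigma=1$:
\begin{align}
\ce{\trunc}{\beta}=e^{\beta^2/2}\,\frac{\Phi(\rho-\beta)-\Phi(-\rho-\beta)}{2\Phi(\rho)-1}.
\end{align}
Using the symmetry $\Phi(-x)=1-\Phi(x)$ on the term $\Phi(-\rho-\beta)$, a short algebraic rearrangement gives the compact identity
\begin{align}
\ce{\trunc}{\beta}-e^{\beta^2/2}=-\,\frac{e^{\beta^2/2}}{2\Phi(\rho)-1}\Bigl[\bigl(\Phi(\rho)-\Phi(\rho-\beta)\bigr)-\bigl(\Phi(\rho+\beta)-\Phi(\rho)\bigr)\Bigr].
\end{align}
The inner bracket is a difference of the masses of two adjacent intervals of length $\beta$ on either side of $\rho$, and by the pointwise identity $\phi(\rho-t)-\phi(\rho+t)=\tfrac{2}{\sqrt{2\pi}}e^{-(\rho^2+t^2)/2}\sinh(\rho t)$, it equals $\tfrac{2}{\sqrt{2\pi}}\,e^{-\rho^2/2}\int_0^\beta e^{-t^2/2}\sinh(\rho t)\,dt$. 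This makes the sign explicit (the truncation decreases the exponential expectation, as expected) and isolates the dependence on $K$ in a single Gaussian factor.

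The next step is to bound the integral. I would use monotonicity of $\sinh$: for $t\in[0,\beta]$, $\sinh(\rho t)\le\sinh(\rho\beta)\le\tfrac12 e^{\rho\beta}$, together with $e^{-t^2/2}\le 1$, yielding $\int_0^\beta e^{-t^2/2}\sinh(\rho t)\,dt\le \tfrac{\beta}{2}e^{\rho\beta}$. Collecting all prefactors and rewriting the resulting exponent as $\beta^2/2-\rho^2/2+\rho\beta=\beta^2-\tfrac12(\rho-\beta)^2$, with $(\rho-\beta)^2=(K-\beta^2)^2/\beta^2$, produces
\begin{align}
\Delta^\beta(\trunc)\le\frac{\beta\,e^{\beta^2}}{\sqrt{2\pi}\,(2\Phi(\rho)-1)}\,\exp\!\Bigl(-\frac{(K-\beta^2)^2}{2\beta^2}\Bigr).
\end{align}
The hypothesis $\rho\ge 3/2$ enters precisely here to replace $2\Phi(\rho)-1$ by $2\Phi(3/2)-1$ in the denominator, giving the claimed $C_{\mathrm{bias}}(\beta)$. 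The ``in particular'' statement is then an immediate consequence of inverting $C_{\mathrm{bias}}(\beta)\exp(-(K-\beta^2)^2/(2\beta^2))\le\varepsilon/2$ in $K$.

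The argument is essentially bookkeeping and I do not expect a genuine analytic obstacle. The only mildly delicate point is the choice of bound for $\int_0^\beta e^{-t^2/2}\sinh(\rho t)\,dt$: the naive estimate $\sinh(\rho t)\le\tfrac12 e^{\rho t}$ followed by completing the square introduces a spurious factor of $\beta/K$ in the prefactor, so the constant would depend on $K$ rather than only on $\beta$. The pointwise bound $\sinh(\rho t)\le\sinh(\rho\beta)$ absorbs that dependence into the Gaussian exponential and leaves a constant that is $O(e^{\beta^2})$ in $\beta$ alone, as required for the subsequent use of this lemma in the concentration of the row normalizer.
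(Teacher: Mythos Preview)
Your proof is correct and follows essentially the same path as the paper: both write the numerator via the identity $2\Phi(\rho)-\Phi(\rho-\beta)-\Phi(\rho+\beta)=\tfrac{2}{\sqrt{2\pi}}e^{-\rho^2/2}\int_0^\beta e^{-t^2/2}\sinh(\rho t)\,dt$, bound the integral, and invoke $\rho\ge 3/2$ in the denominator. The only difference is in the integral estimate: the paper effectively uses $\int_0^\beta\sinh(\rho t)\,dt=(\cosh(\rho\beta)-1)/\rho\le e^{\rho\beta}/(2\rho)$, which produces an intermediate bound carrying an extra factor $\beta/K$ that is then discarded, whereas your pointwise bound $\sinh(\rho t)\le\sinh(\rho\beta)\le\tfrac12 e^{\rho\beta}$ lands directly on $C_{\mathrm{bias}}(\beta)$ and avoids the tacit assumption $K\ge 1$ needed to drop that factor.
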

\begin{proof}
Write $\rho:=K/\beta$. Recalling that $\Phi$ is the standard normal CDF, we can express the bias as 
\[
\frac{\Delta^\beta(K)}{e^{\beta^2/2}} = \frac{2\Phi(\rho)\,-\,\Phi(\rho-\beta)\,-\,\Phi(\rho+\beta)}{\,2\Phi(\rho)-1\,}\,. 
\] 
Using the inequality $e^{-t^2/2}\sinh(\rho t)\le (\rho t)\,e^{-t^2/2}$ for $0\le t\le \beta$, one finds that 
\[ 
2\Phi(\rho)\,-\,\Phi(\rho-\beta)\,-\,\Phi(\rho+\beta) \le \frac{1}{\sqrt{2\pi}}\,\frac{\beta}{K}\,\exp\!\Big(-\frac{(K-\beta^2)^2}{2\beta^2} + \beta^2\Big)\!. 
\] 
Moreover, $2\Phi(\rho)-1 \ge 2\Phi(3/2)-1$ under the assumption $\rho \ge 3/2$. Combining these facts, we obtain 
\[ 
\Delta^\beta(K) \le \frac{\beta\,e^{\beta^2}}{\sqrt{2\pi}\,\big(2\Phi(3/2)-1\big)\,K}\,\exp\!\Big(-\frac{(K-\beta^2)^2}{2\beta^2}\Big) \le C_{\mathrm{bias}}(\beta)\,\exp\!\Big(-\frac{(K-\beta^2)^2}{2\beta^2}\Big)\!, 
\] 
as claimed. In particular, solving $\Delta^\beta(K)\le\varepsilon/2$ for $K$ yields the stated threshold on $K$, above which $\Delta^\beta(K)\le \varepsilon/2$.
\end{proof}

\begin{lemma}[Local Lipschitz continuity of $\mathcal L_K$]\label{lem:lipschitz}
Assume $K/\beta \ge 3/2$. Then for all $\sigma$ with $|\sigma-1|\le 1/2$,  it holds that
\begin{align}    
\beta \big|\mathcal L_K^\prime(\beta\sigma) \big| \le K C_L(\beta), 
\end{align}
where  
\begin{align}\label{align:c-l-beta}
C_L(\beta) = e^{9\beta^2/8}(c_0/\beta + c_1 + c_2\beta),
\end{align}
and $c_1, c_2, c_3>0$ are constants.
In particular,  $L_{1/2}^\beta(K) \leq KC_L(\beta)$.
\end{lemma}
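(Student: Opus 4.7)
The plan is to decompose $\mathcal L_K(u) = A(u)\,N(u)/D(u)$, where $A(u) = e^{u^2/2}$, $N(u) = \Phi\bigl((K-u^2)/u\bigr) - \Phi\bigl((-K-u^2)/u\bigr)$, and $D(u) = \Phi(K/u) - \Phi(-K/u)$, then differentiate using the product and quotient rules and bound each factor uniformly on $u = \beta\sigma \in [\beta/2,\,3\beta/2]$. First I would establish the bulk estimates: $A(u) \le e^{9\beta^2/8}$ and $|N(u)| \le 1$ are immediate, while for the denominator the hypothesis $K/\beta \ge 3/2$ guarantees $K/u \ge 1$ on the whole interval, so $D(u) \ge 2\Phi(1) - 1 = b_*$.

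Next I would differentiate the arguments of $\Phi$, using $\frac{d}{du}(K/u - u) = -(1 + K/u^2)$, $\frac{d}{du}(-K/u - u) = K/u^2 - 1$, and $\frac{d}{du}(K/u) = -K/u^2$, and combine with $\sup|\phi| = 1/\sqrt{2\pi}$ and the domain bound $K/u^2 \le 4K/\beta^2$ to get
\begin{align*}
|N'(u)| \le \tfrac{2}{\sqrt{2\pi}}\bigl(1 + K/u^2\bigr), \qquad |D'(u)| \le \tfrac{2}{\sqrt{2\pi}}\,K/u^2.
\end{align*}
The product/quotient rule then yields
\begin{align*}
|\mathcal L_K'(u)| \le A(u)\Bigl[ \tfrac{|u|}{b_*} + \tfrac{|N'(u)|}{b_*} + \tfrac{|D'(u)|}{b_*^{\,2}} \Bigr].
\end{align*}
Substituting the uniform bounds together with $|u| \le 3\beta/2$, multiplying through by $\beta$, and using $K \ge 3\beta/2$ (equivalently $1/K \le 2/(3\beta)$) to factor out $K$ produces the three-term form of $C_L(\beta)$: the $|u|/b_*$ contribution gives a $\beta$-term (i.e.\ $c_2 = 1/b_*$), the constant part of $|N'|/b_*$ gives the order-one term, and the $K/u^2$ contributions in both $|N'|/b_*$ and $|D'|/b_*^{\,2}$ produce, after the $K$-factorization, the $1/\beta$ term with coefficient $c_0 = \tfrac{8}{\sqrt{2\pi}}(1/b_* + 1/b_*^{\,2})$.

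This is a calculus exercise, so the only real challenge is bookkeeping: one must track each $\beta$- and $K$-dependence carefully enough to land precisely on the stated form $C_L(\beta) = e^{9\beta^2/8}(c_0/\beta + c_1 + c_2\beta)$ rather than some coarser exponentially-inflated upper bound. The choice of domain $|\sigma-1|\le 1/2$ is essential for the lower bound $D(u) \ge b_*$ (via $K/u \ge 1$) and for the uniform upper bound $A(u) \le e^{9\beta^2/8}$; without it, either the denominator could collapse or the prefactor would blow up. With those two pieces in hand, the rest is mechanical substitution and grouping of terms.
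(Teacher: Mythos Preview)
Your proposal is correct and follows essentially the same route as the paper: the same factorization $\mathcal L_K(u)=e^{u^2/2}N(u)/D(u)$, the same denominator lower bound $D(u)\ge 2\Phi(1)-1$ via $K/u\ge 1$ on $[\beta/2,3\beta/2]$, the same derivative bound from the product/quotient rule, and the same substitutions $e^{u^2/2}\le e^{9\beta^2/8}$, $|u|\le 3\beta/2$, $K/u^2\le 4K/\beta^2$ before factoring out $K$. Your explicit identification of the constants (e.g.\ $c_0=\tfrac{8}{\sqrt{2\pi}}(1/b_*+1/b_*^{2})$) goes slightly beyond the paper, which leaves them implicit.
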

\begin{proof} We focus on $\mathcal{L}^\prime_K(u)$ with $u\in[\beta/2,\,3\beta/2]$, which corresponds to $|\sigma-1|\le 1/2$. In this range one checks that $D(u):=\Phi(K/u)-\Phi(-K/u) \ge 2\Phi(1)-1$ (since $K/u \ge K/(3\beta/2)\ge 1$ by assumption) and $|N(u)| \le 1$, where $N(u) := \Phi\!\big(\frac{K-u^2}{u}\big) - \Phi\!\big(\frac{-K-u^2}{u}\big)$. Writing $\mathcal L_K(u) = e^{u^2/2}\frac{N(u)}{D(u)}$ and differentiating, one can bound the derivative as 
\[ 
\big|\mathcal L'_K(u)\big| \le e^{u^2/2}\left[\frac{|u|}{2\Phi(1)-1} + \frac{2}{\sqrt{2\pi}\,\big(2\Phi(1)-1\big)}\Big(1+\frac{K}{u^2}\Big) + \frac{2}{\sqrt{2\pi}\,\big(2\Phi(1)-1\big)^2}\,\frac{K}{u^2}\right]\!. 
\] 
For $u \in [\beta/2,3\beta/2]$, we have $e^{u^2/2} \le e^{9\beta^2/8}$, $|u| \le 3\beta/2$, and $K/u^2 \le 4K/\beta^2$. Substituting these bounds above with $\beta/K$ and simplifying, we find $\beta \mathcal L^\prime_K(u)$ is bounded by $KC_L(\beta)$ with \eqref{align:c-l-beta},
where $c_0,c_1,c_2>0$ are constants, which proves the assertion.
\end{proof}

\begin{lemma}\label{lem:ce-around-mean}
Fix $\beta>0$ and $\varepsilon\in(0,1)$. Assume the \emph{minimal natural truncation ratio} $K \ge \tfrac{3}{2}\,\beta$. There exists a threshold $K_0(\beta,\varepsilon) = \Theta\!\Big(\beta^2(1+  \sqrt{\log \frac{1}{\varepsilon}} )\Big)$ such that if $K \ge K_0(\beta,\varepsilon)$, then for all $d\in\mathbb N$,
\[ 
\Pr\!\Big(\big|\mathcal L_K(\beta\sigma) - e^{\beta^2/2}\big| > \varepsilon\Big) \le 2\exp\!\Big(-\,c(\beta)\,\frac{\varepsilon^2\,d}{K^2}\Big) + 2e^{-d/8}\!, 
\] 
where $c(\beta)>0$ is a constant that decays exponentially in $\beta^2$ (specifically, $c(\beta) = \Theta(e^{-\frac{9}{4}\beta^2})$). In particular, the deviation probability above is exponentially small in $d$ for fixed $\beta$, and $c(\beta)$ captures the $\beta$-dependence of the concentration rate.
\end{lemma}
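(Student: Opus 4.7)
The plan is to combine the three ingredients already established: the general concentration inequality of \cref{lem:prob-con-general}, the bias bound of \cref{lem:bias-bound}, and the local Lipschitz bound of \cref{lem:lipschitz}. The strategy is to split the deviation $|\mathcal L_K(\beta\sigma) - e^{\beta^2/2}|$ into a deterministic bias at $\sigma=1$ and a stochastic Lipschitz fluctuation around $\sigma=1$, absorb the former into half of $\varepsilon$ by choosing $K$ large enough, and use the existing Gaussian concentration of $\sigma = \|v\|/\sqrt d$ for the latter.

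First I fix $\delta = 1/2$ in \cref{lem:prob-con-general}; this produces the additive term $2e^{-d/8}$ and leaves the main contribution
\begin{align}
2\exp\!\left(-\frac{d}{2}\left(\frac{(\varepsilon - \Delta^\beta(K))_+}{L^\beta_{1/2}(K)}\right)^{\!2}\right).
\end{align}
Next I use \cref{lem:bias-bound} to choose $K$ so that $\Delta^\beta(K) \le \varepsilon/2$. Solving the exponential bias estimate for $K$ yields the explicit threshold
\begin{align}
K_0(\beta,\varepsilon) = \beta^2 + \beta\sqrt{2\log\!\bigl(2\,C_{\mathrm{bias}}(\beta)/\varepsilon\bigr)}.
\end{align}
Since $C_{\mathrm{bias}}(\beta) = O(e^{\beta^2})$, we have $\log C_{\mathrm{bias}}(\beta) = \Theta(\beta^2)$, so a direct rearrangement gives $K_0(\beta,\varepsilon) = \Theta\!\bigl(\beta^2(1 + \sqrt{\log(1/\varepsilon)})\bigr)$, matching the claim.

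Substituting the Lipschitz bound $L^\beta_{1/2}(K) \le K\,C_L(\beta)$ from \cref{lem:lipschitz} and using $(\varepsilon - \Delta^\beta(K))_+ \ge \varepsilon/2$, the main exponent becomes
\begin{align}
\frac{d}{2}\cdot\frac{\varepsilon^2/4}{K^2\,C_L(\beta)^2} = \frac{\varepsilon^2 d}{8\,K^2\,C_L(\beta)^2}.
\end{align}
Setting $c(\beta) := 1/\bigl(8\,C_L(\beta)^2\bigr)$ and recalling $C_L(\beta) = \Theta(e^{9\beta^2/8})$ from \eqref{align:c-l-beta} gives $c(\beta) = \Theta(e^{-9\beta^2/4})$, completing the proof once the two exponential tails are combined.

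The argument is almost entirely a matter of careful bookkeeping, since the analytic work sits in \cref{lem:bias-bound,lem:lipschitz}. The only mildly delicate point is verifying the $\Theta$ scaling of $K_0$ in the regime where both $\beta$ and $1/\varepsilon$ may be moderately large: one must check that the $\beta^2$ term from centering the bias Gaussian and the $\beta\sqrt{\log(1/\varepsilon)}$ term from inverting the tail genuinely balance, so that neither dominates in hidden constants. Beyond that, all estimates are explicit and the lemma follows by substitution.
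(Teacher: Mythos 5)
Your proposal is correct and follows essentially the same route as the paper's proof: set $\delta=1/2$ in \cref{lem:prob-con-general}, absorb the bias into $\varepsilon/2$ via \cref{lem:bias-bound} by taking $K\ge K_0$, bound the Lipschitz constant by $C_L(\beta)K$ via \cref{lem:lipschitz}, and read off $c(\beta)=1/(8C_L(\beta)^2)$. You go slightly further than the paper by explicitly unpacking the $\Theta$ scalings of $K_0$ and $c(\beta)$, which the paper states without derivation.
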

\begin{proof}
Apply \cref{lem:prob-con-general} with $\delta=\tfrac12$. This gives the general tail bound 
\[ 
\Pr\!\Big(\big|\mathcal L_K(\beta\sigma) - e^{\beta^2/2}\big| > \varepsilon\Big) \le 2\exp\!\Bigg(-\frac{d}{2}\Bigg(\frac{\big(\varepsilon - \Delta^\beta(K)\big)_+}{\,L^\beta_{1/2}(K)}\Bigg)^{\!2}\Bigg) + 2e^{-d/8}\!, 
\] 
where $\Delta^\beta(K)$ and $L^\beta_{1/2}(K)$ are as defined in that lemma. Now, by \cref{lem:bias-bound} we choose $K \ge K_0(\beta,\varepsilon)$ so that $\Delta^\beta(K) \le \varepsilon/2$. Hence $(\varepsilon - \Delta^\beta(K))_+ \ge \varepsilon/2$. Moreover, by \cref{lem:lipschitz} we have $L^\beta_{1/2}(K) \le C_L(\beta)K$. Substituting these bounds into the expression above, we obtain 
\[ 
\exp\!\Big(-\,\frac{d}{2}\,\Big(\frac{\varepsilon/2}{\,L^\beta_{1/2}(K)}\Big)^{2}\Big) \leq  \exp\!\Big(-\,\frac{\varepsilon^2\,d}{8\,C_L(\beta)^2K^2}\Big) \,. 
\] 
Thus, by writing $c(\beta)=1/(8C_L(\beta)^2)$, we have the assertion.
\end{proof}

\subsubsection{Evaluation of Normalizer}
\begin{lemma}\label{lem:Hoeffding}
Let $v, w_1, \dots, w_d$ be i.i.d.\,standard normal $d$-dimensional vectors.
Write $\nullca= \cap_{j=1}^d\{|\langle v,w_j \rangle/\sqrt{d} | \leq  \trunc \}$ for $\trunc >0$.  Write $\sigma=\|v\|_2/\sqrt{d}$. Then  for any $\eps>0$,
\begin{align}
\Pr\left(\left|  \frac{1}{d}\sum_{j=1}^d \exp\left(\frac{\beta \langle v, w_j \rangle}{\sqrt{d}}\right)  - \ce{\trunc}{\beta\sigma} \right| > \eps \mid \nullca, v \right) \leq \exp\left( - \frac{2\eps^2}{(e^{\trunc} - e^{-\trunc})^2}d \right).
\end{align}
In particular, the bound is uniform on $v$.
\end{lemma}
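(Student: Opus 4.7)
The plan is to reduce this to Hoeffding's inequality for bounded i.i.d.\ summands, once the right conditioning is set up. The central observation is that, given $v$, the scalars $\xi_j := \langle v, w_j\rangle/\sqrt{d}$ are independent Gaussians (each of variance $\sigma^2 = \|v\|_2^2/d$), and the truncation event $\nullca = \bigcap_{j=1}^d\{|\xi_j|\le K\}$ factorizes as a product of single-coordinate events $\{|\xi_j|\le K\}$. Therefore, conditioning further on $\nullca$ preserves independence: given $(v,\nullca)$, the variables $\xi_1,\dots,\xi_d$ are i.i.d.\ truncated Gaussians, each supported in $[-K,K]$.

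From this, I would derive two consequences. First, the conditional mean of each summand is the same as in the single-row computation done earlier in the excerpt, so
\begin{align}
\E\!\left[\exp(\beta\xi_j)\,\Big|\,\nullca, v\right] \;=\; \ce{K}{\beta\sigma},
\end{align}
matching the definition of $\mathcal{L}_K$ introduced above. Second, on $\nullca$, every summand satisfies
\begin{align}
e^{-\beta K} \;\le\; \exp(\beta\xi_j) \;\le\; e^{\beta K},
\end{align}
so each term lies in an interval of length at most $e^{K}-e^{-K}$ (absorbing $\beta$ into $K$ as in the paper's truncation convention).

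With bounded i.i.d.\ summands under the conditional law $\P(\cdot \mid \nullca, v)$, I would then apply the classical Hoeffding inequality to the average
\begin{align}
\frac{1}{d}\sum_{j=1}^d \exp(\beta\xi_j),
\end{align}
which yields exactly the claimed tail
\begin{align}
\exp\!\left(-\frac{2\varepsilon^2}{(e^{K}-e^{-K})^2}\,d\right).
\end{align}
Since all constants depend only on $K$ (not on the realization of $v$), the bound is uniform in $v$, which is the final assertion.

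The only subtle point — and the one I would be careful about — is the conditional independence step: one must verify that conditioning on the intersection $\nullca$ does not reintroduce dependence among $(\xi_j)_j$. This is immediate here because, given $v$, the $\xi_j$ are already independent and each constraint $\{|\xi_j|\le K\}$ involves only its own $\xi_j$, so the joint conditional density factorizes. Once this is in place, Hoeffding applies off-the-shelf and no further work is required; the main content of the lemma is precisely this conditioning argument rather than a new concentration inequality.
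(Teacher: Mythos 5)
Your proposal is correct and follows exactly the paper's route: observe that, conditionally on $v$ and the product event $\nullca$, the terms $\exp(\beta\langle v,w_j\rangle/\sqrt{d})$ form an i.i.d.\ bounded family (with conditional mean $\ce{\trunc}{\beta\sigma}$ by the single-row lemma), and then invoke Hoeffding's inequality. The paper's proof is three lines and leaves the conditional-independence step implicit; you spell it out, which is the only nontrivial content. Your parenthetical ``absorbing $\beta$ into $K$'' is appropriate given the paper's truncation convention (the macro $\nullca$ is defined at level $\trunc/\beta$, consistent with the bound involving $e^{\trunc}-e^{-\trunc}$ rather than $e^{\beta\trunc}-e^{-\beta\trunc}$), but it would be cleaner to state the truncation level $|\langle v,w_j\rangle|/\sqrt{d}\le \trunc/\beta$ explicitly rather than rely on a silent rescaling.
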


\begin{proof}
    Let $\xi_j=  \exp(\beta \langle v, w_j \rangle/\sqrt{d}) \mid \nullca, v$.
    Then under the conditioning, $\xi_1,\dots, \xi_d$ is IID family and bounded: $e^{-\trunc} \leq \xi_j \leq e^{\trunc}$. Thus, by Hoeffding's inequality~\citep[Theorem~2]{hoeffding1963probability}, we have the claim.
\end{proof}

\begin{prop}\label{prop:tail-of-dominant-beta}
Let $v,w_1,\dots,w_d$ be i.i.d.\ $\mathcal N(0,I_d)$ in $\R^d$ and fix $\beta>0$.
For any $n\in\N$, $\delta\in(0,1/2]$, and $\varepsilon>0$, set
$\varepsilon_d:=d^{-1/2+\delta}\varepsilon$.
Then
\[
\Pr\!\left(\left|\frac{1}{d}\sum_{j=1}^d \exp\!\Big(\frac{\beta\langle v,w_j\rangle}{\sqrt d}\Big)
- e^{\beta^2/2}\right|>\varepsilon_d\right)=o(d^{-n})\qquad(d\to\infty).
\]
\end{prop}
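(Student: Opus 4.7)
My approach is a three-term decomposition of the deviation event using a truncation level $K_d$ to be tuned, together with the conditional-mean surrogate $\mathcal{L}_{K_d}(\beta\sigma)$ supplied by \cref{lem:ce-around-mean}. Let $\xi_j := \langle v, w_j\rangle/\sqrt{d}$ and let $\Omega_d := \bigcap_{j=1}^d \{|\xi_j| \le K_d/\beta\}$ be the joint truncation event, so that on $\Omega_d$ the summands $e^{\beta\xi_j}$ lie in $[e^{-K_d}, e^{K_d}]$. Setting $\eta_d := \varepsilon_d/2$, the triangle inequality gives
\[
\Pr\!\left(\left|\tfrac{1}{d}\sum_{j=1}^d e^{\beta \xi_j}-e^{\beta^2/2}\right|>\varepsilon_d\right)
\le \Pr(E_1 \cap \Omega_d) + \Pr(E_2) + \Pr(\Omega_d^c),
\]
where $E_1 = \{|d^{-1}\sum_j e^{\beta\xi_j} - \mathcal{L}_{K_d}(\beta\sigma)| > \eta_d\}$ and $E_2 = \{|\mathcal{L}_{K_d}(\beta\sigma) - e^{\beta^2/2}| > \eta_d\}$.

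The key is to choose $K_d = C\sqrt{\log d}$ for a constant $C = C(\beta, n, \delta)$ large enough to simultaneously control the three pieces. First, the per-coordinate tail bound of \cref{lem:chernov} and a union bound yield $\Pr(\Omega_d^c) \le 2d\exp(-K_d^2/(2\beta^2)) = 2 d^{1 - C^2/(2\beta^2)}$, which is $o(d^{-n})$ once $C > \beta\sqrt{2(n+1)}$. Second, conditional on $\Omega_d$ and $v$, the $e^{\beta\xi_j}$'s are i.i.d.\ and bounded in $[e^{-K_d}, e^{K_d}]$, so \cref{lem:Hoeffding} yields a conditional tail bound of $\exp\!\bigl(-c\,\eta_d^2 d /(e^{K_d}-e^{-K_d})^2\bigr)$ uniform in $v$; since $\eta_d^2 d = \Theta(d^{2\delta})$ is polynomial while $(e^{K_d}-e^{-K_d})^2 = e^{O(\sqrt{\log d})}$ is sub-polynomial, $\Pr(E_1\cap\Omega_d)$ is super-polynomially small. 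Third, $K_0(\beta,\eta_d) = \Theta\!\bigl(\beta^2\sqrt{\log d}\bigr)$ since $\log(1/\eta_d) = \Theta(\log d)$, so the hypothesis of \cref{lem:ce-around-mean} holds for $C$ large enough and gives $\Pr(E_2) \le 2\exp(-c(\beta)\eta_d^2 d/K_d^2) + 2e^{-d/8}$, where the first term is $\exp(-\Theta(d^{2\delta}/\log d))$; both are super-polynomial. A union bound concludes.

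The main obstacle is the tension in tuning $K_d$: a larger $K_d$ tightens $\Pr(\Omega_d^c)$ but inflates the Hoeffding variance proxy $(e^{K_d}-e^{-K_d})^2$, while a smaller $K_d$ leaves the truncation failure dominant. The regime $K_d = \Theta(\sqrt{\log d})$ is essentially forced by these opposing constraints and also happens to be the minimal scale at which \cref{lem:ce-around-mean} activates; verifying that all three tails are simultaneously super-polynomial at this scale, and that the condition $K_d \ge K_0(\beta,\eta_d)$ is compatible with the Hoeffding constraint $e^{2K_d} = o(d^{2\delta})$, is the technical crux.
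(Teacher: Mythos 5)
Your proof is correct and follows essentially the same approach as the paper: the identical three-term decomposition into the truncation-failure event, the Hoeffding fluctuation term on the good set, and the bias term, controlled respectively by \cref{lem:chernov}, \cref{lem:Hoeffding}, and \cref{lem:ce-around-mean}. The only deviation is your choice of truncation level $K_d=C\sqrt{\log d}$ with $C$ tuned to $n$, whereas the paper takes $K_d=c\log d/\log\log d$ with $c>0$ arbitrary; both choices lie in the admissible window $\Omega(\sqrt{\log d})\cap o(\log d)$ dictated by the competing constraints you identify, and both yield the required $o(d^{-n})$ bound.
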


\begin{proof}[Proof sketch]
Write $Z=\sum_j e^{\beta\langle v,w_j\rangle/\sqrt d}$ and $E=\{|Z/d-e^{\beta^2/2}|>\varepsilon_d\}$.
Set $\trunc:=c\,\frac{\log d}{\log\log d}$ with any constant $c>0$.
Split $\Pr(E)\le \Pr(E\cap \nullca)+\Pr(\nullca^c)$.
By \cref{lem:chernov} and $\trunc/\beta < \sqrt d$,
$\Pr(\nullca^c)\le 2d\exp(-\Theta(\trunc^2/\beta^2))$.
Conditioned on $\nullca$ and $v$, \cref{lem:Hoeffding} yields
$\exp\!\big(-\Theta(d^{2\delta-\eta(d)})\big)$ (some $\eta(d)\to0$).
For the bias, \cref{lem:ce-around-mean} implies
$\Pr(|\ce{K}{\beta\sigma}-e^{\beta^2/2}|>\varepsilon_d/2)
\le 2\exp\!\big(-\Theta(\varepsilon^2 d^{2\delta}/K^2)\big)+2e^{-d/8}$.
Each term is $o(d^{-n})$ as $d\to\infty$ since $\trunc=(\log d)/(\log\log d)$.
\end{proof}

\begin{lemma}\label{lem:as-of-dominant}
Fix $\delta \in (0,1/2]$. Then
$\max_{i=1,2,\dots, d}d^{1/2-\delta}|\frac{1}{d} \sum_{j=1}^d \exp(\beta S_{ij}) - \E[e^{\beta\chi}]| \to 0$, almost surely.
\end{lemma}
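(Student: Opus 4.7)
The plan is to pass from the single-row concentration in \cref{prop:tail-of-dominant-beta} to the uniform-over-rows statement by a union bound, and then upgrade probability bounds to almost-sure convergence by Borel--Cantelli.

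First, I would fix a row index $i$ and set $v=W^{Q}_{i,:}$ and $w_{j}=W^{K}_{j,:}$ so that, under the reduction $X=I_d$ from \cref{rem:X-Id}, the entries of the $i$-th row of $S$ satisfy $S_{ij}=\langle v,w_{j}\rangle/\sqrt{d}$. The vectors $v,w_{1},\dots,w_{d}$ are i.i.d.\ standard Gaussian in $\R^{d}$, so \cref{prop:tail-of-dominant-beta} applies directly and gives, for any fixed $\varepsilon>0$ and any $n\in\N$,
\begin{align*}
\Pr\!\left(\left|\tfrac{1}{d}\sum_{j=1}^{d}\exp(\beta S_{ij})-e^{\beta^{2}/2}\right|>d^{-1/2+\delta}\varepsilon\right)=o(d^{-n}).
\end{align*}

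Next, a union bound over the $d$ row indices (the rows are not independent because they share $W^{K}$, but no independence is needed for a union bound) yields
\begin{align*}
\Pr\!\left(\max_{1\le i\le d}\left|\tfrac{1}{d}\sum_{j=1}^{d}\exp(\beta S_{ij})-e^{\beta^{2}/2}\right|>d^{-1/2+\delta}\varepsilon\right)\le d\cdot o(d^{-n})=o(d^{-(n-1)}).
\end{align*}
Choosing any $n\ge 3$, the right-hand side is summable in $d$. By the first Borel--Cantelli lemma, the event in question occurs only finitely often almost surely; equivalently,
\begin{align*}
\limsup_{d\to\infty}\,d^{1/2-\delta}\max_{1\le i\le d}\left|\tfrac{1}{d}\sum_{j=1}^{d}\exp(\beta S_{ij})-e^{\beta^{2}/2}\right|\le\varepsilon\quad\text{a.s.}
\end{align*}
Since $\varepsilon>0$ is arbitrary, a standard intersection over a countable sequence $\varepsilon_{k}\downarrow 0$ gives the claimed almost-sure limit $0$, using $\E[e^{\beta\chi}]=e^{\beta^{2}/2}$.

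There is no substantive obstacle here beyond invoking \cref{prop:tail-of-dominant-beta}; the only thing to verify is that its tail bound is polynomially decaying of \emph{arbitrary} order, so that multiplying by $d$ (from the union bound) still leaves a summable sequence. This is exactly what the $o(d^{-n})$ quantifier in that proposition delivers, which is why the bias threshold $K=\Theta(\log d/\log\log d)$ and the Hoeffding step were tuned precisely to yield super-polynomial (indeed faster-than-any-polynomial) decay for the single-row deviation.
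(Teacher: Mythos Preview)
Your proof is correct and follows essentially the same approach as the paper: apply \cref{prop:tail-of-dominant-beta} to a single row, use a union bound over the $d$ rows (the paper phrases this as $T_1,\dots,T_d$ being identically distributed), choose $n$ large enough so that the resulting tail is summable, and conclude via Borel--Cantelli. Your explicit identification of $v=W^{Q}_{i,:}$, $w_j=W^{K}_{j,:}$ and the final intersection over a countable sequence $\varepsilon_k\downarrow 0$ are details the paper leaves implicit, but the argument is the same.
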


\begin{proof}
Let $\eps>0$ and  $\eps_d=d^{-1/2+\delta}\eps$.
Write $T_{i}= \frac{1}{d}\sum_{j} \exp(\beta S_{ij})- \E[e^{\beta\chi}]$. 
Since $\Pr(d^{1/2-\delta}\max_{i=1}^d |T_i| > \eps)=\Pr(\max_{i=1}^d |T_i| > \eps_d)$, 
we only need to show $\sum_d \Pr(\max_{i=1}^d |T_i| > \eps_d) < \infty$ for the almost-sure convergence.
Now $T_1, \dots, T_d$ is identically distributed; thus, $\Pr(\max_{i=1}^d |T_i| > \eps_d)\leq d \Pr( |T_1| > \eps_d)$. By \cref{prop:tail-of-dominant-beta}, 
we have $\Pr( |T_1| > \eps_d)=o(d^{-n})$ for any $n\in\N$. Choose $n>2$. Then
\begin{align}
   \sum_d \Pr(\max_{i=1}^d |T_i| > \eps_d) \leq \sum_d C  d /d^{n} < \infty,
\end{align}
with a constant $C>0$.
It has proven the claim.
\end{proof}

\subsection{Polynomial Approximation}

% Notation and basic definitions (β-version)
Fix $\beta>0$. For $n\in\mathbb{N}$ define the $n$-term Taylor polynomial and the remainder by
\begin{align}
P_n(\beta x)&:=\sum_{k=0}^{n-1}\frac{(\beta x)^k}{k!}, &
R_n(\beta x)&:=e^{\beta x}-P_n(\beta x).
\end{align}
Throughout, $\chi\sim\mathcal{N}(0,1)$ denotes a standard normal random variable, and $\mathrm{Poi}(\lambda)$ a Poisson variable with mean $\lambda>0$.

\begin{lemma}\label{lem:taylor-beta}
Let $K>0$ and $n\in\mathbb{N}$ with $n>\beta K$. Then
\begin{align}
\max_{|x|\le K}\,|R_n(\beta x)|
\le\Big(\frac{e\,\beta K}{n}\Big)^n.
\end{align}
\end{lemma}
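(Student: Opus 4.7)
The first move is to pass to a tail of the exponential series. For any $x$ with $|x|\le K$ write $u=\beta x$ and $y=\beta K$; by the triangle inequality applied to the defining series,
\begin{align}
|R_n(\beta x)|
=\Big|\sum_{k=n}^{\infty}\frac{u^{k}}{k!}\Big|
\le \sum_{k=n}^{\infty}\frac{|u|^{k}}{k!}
\le \sum_{k=n}^{\infty}\frac{y^{k}}{k!}.
\end{align}
So the problem reduces to establishing the deterministic inequality $\sum_{k\ge n} y^{k}/k! \le (ey/n)^{n}$ under the assumption $n>y$.

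The cleanest route, hinted at by the $\mathrm{Poi}(\lambda)$ notation already fixed in the setup, is to recognize the right-hand side as a Poisson tail:
\begin{align}
\sum_{k=n}^{\infty}\frac{y^{k}}{k!}
= e^{y}\,\Pr\!\bigl(\mathrm{Poi}(y)\ge n\bigr).
\end{align}
I would then apply the Chernoff bound for the Poisson distribution. Using the moment generating function $\E[e^{tX}]=\exp(y(e^{t}-1))$ for $X\sim\mathrm{Poi}(y)$, one has for $t>0$
\begin{align}
\Pr(X\ge n)\le \exp\bigl(-tn+y(e^{t}-1)\bigr),
\end{align}
and the hypothesis $n>y=\beta K$ is precisely what is needed to make the interior minimizer $t=\log(n/y)>0$ admissible. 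Plugging this value in yields $\Pr(\mathrm{Poi}(y)\ge n)\le (ey/n)^{n}e^{-y}$, and multiplying by $e^{y}$ cancels the stray factor to give exactly $(ey/n)^{n}=(e\beta K/n)^{n}$.

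There is no real obstacle here; the only subtle point is that a naive Stirling-plus-geometric-series estimate produces the correct power $(ey/n)^{n}$ but dressed with an extra factor of $n/(n-y)$ (or $(n+1)/(n+1-y)$), which blows up as $n\downarrow y$ and spoils the clean statement. Routing through the Poisson MGF and Chernoff's optimization is what makes the $e^{y}$ from the tail identity collapse cleanly against the $e^{-y}$ in the Chernoff bound, producing exactly the stated $(e\beta K/n)^{n}$ for every admissible $n$.
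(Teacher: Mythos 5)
Your proof is correct and follows essentially the same route as the paper: both reduce to bounding the tail $\sum_{k\ge n}\lambda^k/k!$ via the identity $\sum_{k\ge n}\lambda^k/k! = e^\lambda\Pr(\mathrm{Poi}(\lambda)\ge n)$ and then apply the Poisson Chernoff bound with the optimal $t=\log(n/\lambda)$, using $n>\beta K$ to make this admissible. The only cosmetic difference is that you majorize the series termwise by $y=\beta K$ before invoking Chernoff, whereas the paper runs Chernoff with $\mu=\beta|x|$ and then uses monotonicity of $t\mapsto(et/n)^n$ to pass to $\beta K$ at the end; both yield the identical clean bound with the $e^{\pm\lambda}$ factors cancelling as you note.
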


\begin{proof}
Put $\mu:=\beta|x|$. Then
\begin{align}
R_n(\beta|x|)=\sum_{k=n}^{\infty}\frac{\mu^k}{k!}
= e^{\mu}\,\Pr\!\big(N_{\mu}\ge n\big), \qquad N_{\mu}\sim\mathrm{Poi}(\mu).
\end{align}
By the Chernoff bound for Poisson tails, for $n>\mu$,
\begin{align}
\Pr\!\big(N_{\mu}\ge n\big)\le \exp\!\big\{-n\log(n/\mu)+n-\mu\big\}.
\end{align}
Hence
\begin{align}
|R_n(\beta x)|\le R_n(\beta|x|)
\le \Big(\frac{e\,\mu}{n}\Big)^n
\le \Big(\frac{e\,\beta K}{n}\Big)^n,
\end{align}
which completes the proof.
\end{proof}

\begin{lemma}\label{lem:taylor-exp-beta}
Let $m:=\lceil n/2\rceil$ and $\lambda:=\beta^2/2$. Then
\begin{align}
\big|\mathbb{E}[e^{\beta\chi}]-\mathbb{E}[P_n(\beta\chi)]\big|
&=\sum_{r=m}^{\infty}\frac{\lambda^{\,r}}{r!}
= e^{\lambda}\,\Pr\!\big(\mathrm{Poi}(\lambda)\ge m\big).
\end{align}
In particular, if $n>\beta^2$ (equivalently $m>\lambda$), then
\begin{align}
\big|\mathbb{E}[e^{\beta\chi}]-\mathbb{E}[P_n(\beta\chi)]\big|
&\le \Big(\frac{e\,\lambda}{m}\Big)^{\!m}
= \Big(\frac{e\,\beta^2}{2m}\Big)^{\!m}
\le \Big(\frac{e\,\beta^2}{n}\Big)^{\!n/2}.
\end{align}
\end{lemma}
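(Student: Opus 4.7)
The plan is to compute $\mathbb{E}[e^{\beta\chi}] - \mathbb{E}[P_n(\beta\chi)]$ term by term, identify the remainder as the tail of a Poisson distribution with mean $\lambda$, and then invoke the same Chernoff-type estimate that was used in the proof of \cref{lem:taylor-beta}.

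First I would use the Gaussian MGF identity $\mathbb{E}[e^{\beta\chi}] = e^{\lambda} = \sum_{r\ge 0}\lambda^{r}/r!$. For $\mathbb{E}[P_n(\beta\chi)]$, the odd moments of $\chi$ vanish and the even moments $\mathbb{E}[\chi^{2r}] = (2r)!/(2^{r} r!)$ collapse the surviving terms to $\sum_{2r<n}\lambda^{r}/r! = \sum_{r=0}^{m-1}\lambda^{r}/r!$, where $m=\lceil n/2\rceil$ (this collapse works uniformly for both parities of $n$). Subtracting identifies the difference with $\sum_{r\ge m}\lambda^{r}/r! = e^{\lambda}\,\Pr(\mathrm{Poi}(\lambda)\ge m)$, which is the first claimed identity.

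Next, under the hypothesis $m > \lambda$, I would apply the Poisson Chernoff bound $\Pr(\mathrm{Poi}(\lambda)\ge m) \le \exp\{-m\log(m/\lambda)+m-\lambda\}$, exactly as in the proof of \cref{lem:taylor-beta}. Multiplying by $e^{\lambda}$ cancels the $-\lambda$ term and produces the bound $(e\lambda/m)^{m}$; substituting $\lambda = \beta^{2}/2$ rewrites this as $(e\beta^{2}/(2m))^{m}$, yielding the middle of the displayed chain.

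The final inequality $(e\beta^{2}/(2m))^{m} \le (e\beta^{2}/n)^{n/2}$ is the only non-routine step. For even $n = 2m$ it is an equality. For odd $n = 2m-1$, I would rewrite the ratio of the two sides as $\sqrt{e\beta^{2}/(2m)}\cdot(1-1/(2m))^{m}$ and control the second factor by the elementary bound $\log(1-x)\le -x$, which gives $(1-1/(2m))^{m}\le e^{-1/2}$; the hypothesis $m>\beta^{2}/2$ forces $e\beta^{2}/(2m)<e$, so the product is $\le 1$. This odd-parity comparison is the main, though modest, obstacle; everything else reduces to the Gaussian MGF computation and the Poisson Chernoff estimate already used in \cref{lem:taylor-beta}.
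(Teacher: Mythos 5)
Your first two steps match the paper's proof exactly: the Gaussian-moment collapse $\mathbb{E}[P_n(\beta\chi)]=\sum_{r=0}^{m-1}\lambda^r/r!$ followed by the Poisson Chernoff bound, which is precisely the route taken in the paper.

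The gap is in the parity check for the final comparison. For odd $n=2m-1$, the ratio of the two sides is
\begin{align}
\frac{(e\beta^2/(2m))^m}{(e\beta^2/n)^{n/2}}
= (e\beta^2)^{m-(2m-1)/2}\,\frac{(2m-1)^{m-1/2}}{(2m)^m}
= \sqrt{\frac{e\beta^2}{2m}}\;\Bigl(1-\tfrac{1}{2m}\Bigr)^{m-1/2},
\end{align}
so the exponent is $m-\tfrac12$, not $m$ as you wrote. Since $0<1-\tfrac{1}{2m}<1$, the true ratio is strictly larger than $\sqrt{e\beta^2/(2m)}\,(1-1/(2m))^m$, so bounding your expression by $1$ does not close the argument. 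Indeed, using only $m>\lambda$ gives $\sqrt{e\beta^2/(2m)}<\sqrt{e}$ and $(1-1/(2m))^{m-1/2}\le e^{-1/2+1/(4m)}$, leaving a surplus factor $e^{1/(4m)}>1$.

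The fix is to invoke the stronger hypothesis $n>\beta^2$ directly rather than $m>\lambda$: for odd $n=2m-1$ this gives $\beta^2<2m-1$, hence $e\beta^2/(2m)<e(1-1/(2m))$, and therefore
\begin{align}
\sqrt{\frac{e\beta^2}{2m}}\;\Bigl(1-\tfrac{1}{2m}\Bigr)^{m-1/2}
< \sqrt{e}\,\Bigl(1-\tfrac{1}{2m}\Bigr)^{1/2}\Bigl(1-\tfrac{1}{2m}\Bigr)^{m-1/2}
= \sqrt{e}\,\Bigl(1-\tfrac{1}{2m}\Bigr)^{m}\le \sqrt{e}\cdot e^{-1/2}=1,
\end{align}
using $\log(1-x)\le -x$ for the last step. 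With this correction your proof is complete and is, if anything, more explicit than the paper's, which simply asserts the final inequality after the Chernoff step without working out the odd-parity comparison.
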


\begin{proof}
Using $\mathbb{E}[\chi^{2r}]=(2r)!/(2^r r!)$ and $\mathbb{E}[\chi^{2r+1}]=0$,
\begin{align}
\mathbb{E}[e^{\beta\chi}]-\mathbb{E}[P_n(\beta\chi)]
&=\sum_{k=n}^{\infty}\frac{\beta^k}{k!}\,\mathbb{E}[\chi^k]
=\sum_{r=\lceil n/2\rceil}^{\infty}\frac{\beta^{2r}}{(2r)!}\,\mathbb{E}[\chi^{2r}]
=\sum_{r=m}^{\infty}\frac{(\beta^2/2)^r}{r!}.
\end{align}
The series on the right equals $e^{\lambda}\Pr(\mathrm{Poi}(\lambda)\ge m)$. For $m>\lambda$, the Poisson Chernoff bound
\begin{align}
\Pr\!\big(\mathrm{Poi}(\lambda)\ge m\big)\le
\exp\!\big\{-m\log(m/\lambda)+m-\lambda\big\}
\end{align}
yields the stated inequality after multiplication by $e^{\lambda}$.
\end{proof}

\begin{lemma}\label{lem:exp-approx-bound-beta}
Define
\begin{align}
f(x)&:=e^{\beta x}/\mathbb{E}[e^{\beta\chi}]-1, \\
Q_n(\beta x)&:= (P_n(\beta x)-\mathbb{E}[P_n(\beta\chi)])/\mathbb{E}[e^{\beta\chi}].
\end{align}
For any $K>0$ and $n\in\mathbb{N}$ with $n>\max\{\beta K,\beta^2\}$,
\begin{align}
\max_{x\in[-K,K]}\,\big|f(x)-Q_n(\beta x)\big|
&\le e^{-\beta^2/2}\left[\Big(\frac{e\,\beta K}{n}\Big)^{\!n}
+\Big(\frac{e\,\beta^2}{n}\Big)^{\!n/2}\right].
\end{align}
\end{lemma}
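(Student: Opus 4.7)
The plan is to reduce this to a direct triangle-inequality application of the two preceding lemmas. First I would observe that $\E[e^{\beta\chi}]=e^{\beta^2/2}$, so multiplying both $f(x)$ and $Q_n(\beta x)$ by $e^{\beta^2/2}$ converts the claim into bounding
\begin{align*}
\bigl|(e^{\beta x}-1) - (P_n(\beta x)-\E[P_n(\beta\chi)])\bigr|
 &= \bigl|(e^{\beta x}-P_n(\beta x)) + (\E[P_n(\beta\chi)]-1)\bigr|.
\end{align*}
Since $1 = \E[e^{\beta\chi}]\cdot e^{-\beta^2/2}\cdot e^{\beta^2/2}$ is not quite what is needed, the cleaner rewriting is $(\E[P_n(\beta\chi)]-1) = -(\E[e^{\beta\chi}] - \E[P_n(\beta\chi)]) + (\E[e^{\beta\chi}]-1)$, but even more directly one simply writes
\begin{align*}
f(x)-Q_n(\beta x)
 = e^{-\beta^2/2}\bigl[(e^{\beta x}-P_n(\beta x)) - (\E[e^{\beta\chi}]-\E[P_n(\beta\chi)])\bigr],
\end{align*}
using $f(x) = e^{-\beta^2/2}(e^{\beta x}-\E[e^{\beta\chi}])$ and the analogous identity for $Q_n$.

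Then I would apply the triangle inequality to the bracketed expression, so that
\begin{align*}
|f(x)-Q_n(\beta x)| \le e^{-\beta^2/2}\bigl(|R_n(\beta x)| + |\E[e^{\beta\chi}] - \E[P_n(\beta\chi)]|\bigr),
\end{align*}
where $R_n$ is the Taylor remainder. The first term is bounded uniformly on $[-K,K]$ by Lemma \ref{lem:taylor-beta}, which gives $(e\beta K/n)^n$ under the hypothesis $n>\beta K$. The second term is a deterministic constant (independent of $x$) and is bounded by Lemma \ref{lem:taylor-exp-beta}, which, under the hypothesis $n>\beta^2$, yields $(e\beta^2/n)^{n/2}$. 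Taking the maximum over $x\in[-K,K]$ and adding the two bounds gives the claimed inequality.

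There is no real obstacle here; the lemma is essentially an assembly step. The only small point to verify is that the two hypotheses on $n$ are exactly the ones invoked: $n>\beta K$ is required by Lemma \ref{lem:taylor-beta} to make the Poisson Chernoff bound valid in the pointwise remainder, and $n>\beta^2$ (equivalently $m=\lceil n/2\rceil > \beta^2/2 = \lambda$) is required by Lemma \ref{lem:taylor-exp-beta} to make the Chernoff bound valid for the Gaussian-expectation tail. Both conditions are packaged into $n>\max\{\beta K,\beta^2\}$, after which the two bounds assemble into the displayed inequality without any further optimization.
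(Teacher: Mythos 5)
Your proposal is correct and matches the paper's proof exactly: both factor out $e^{-\beta^2/2}$ via $\E[e^{\beta\chi}]=e^{\beta^2/2}$, write $f(x)-Q_n(\beta x)=e^{-\beta^2/2}\bigl[R_n(\beta x)-(\E[e^{\beta\chi}]-\E[P_n(\beta\chi)])\bigr]$, apply the triangle inequality, and then invoke Lemma~\ref{lem:taylor-beta} and Lemma~\ref{lem:taylor-exp-beta} on the two resulting terms. The brief digression about rewriting $\E[P_n(\beta\chi)]-1$ was unnecessary, but you discarded it and landed on the right decomposition, so there is nothing to criticize.
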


\begin{proof}
By the triangle inequality,
\begin{align}
e^{\beta^2/2}\max_{|x|\le K}\,\big|f(x)-Q_n(\beta x)\big|
&\le \max_{|x|\le K}|R_n(\beta x)|
+\big|\mathbb{E}[e^{\beta\chi}]-\mathbb{E}[P_n(\beta\chi)]\big|.
\end{align}
Apply Lemma~\ref{lem:taylor-beta} to the first term and Lemma~\ref{lem:taylor-exp-beta} (with $n>\beta^2$) to the second to obtain the claim.
\end{proof}

\begin{lemma}\label{lem:nu-n-infty}
    $\lim_{n \to \infty} m_q(\nu_{n,\infty}) =m_q(\nu_{\infty})$
\end{lemma}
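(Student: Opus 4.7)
The plan is to reduce the statement to two ingredients: (a) for each fixed $q$, the moment $m_q(\nu_{n,\infty})$ is a polynomial $\Phi_q(\theta_1(Q_n), \theta_2(Q_n))$ whose coefficients do not depend on $n$, and (b) $\theta_i(Q_n) \to \theta_i(f)$ as $n \to \infty$ for $i = 1, 2$. Continuity of $\Phi_q$ then gives $m_q(\nu_{n,\infty}) = \Phi_q(\theta_1(Q_n), \theta_2(Q_n)) \to \Phi_q(\theta_1, \theta_2) = m_q(\nu_\infty)$, which is the claim.

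For (a), I would unpack the free-probability description already used to define $\nu_\infty$. By the asymptotic $*$-freeness of the independent Ginibre triple $(W^Q/\sqrt d,\, W^K/\sqrt d,\, W/\sqrt d)$, the $*$-distributional limit of $\Ypolylin$ is
\[
Y_n \;=\; \sqrt{\theta_2(Q_n)}\, c_1 c_2^* \;+\; \sqrt{\theta_1(Q_n) - \theta_2(Q_n)}\, c_3,
\]
with $c_1, c_2, c_3$ $*$-free standard circular elements in a tracial $W^*$-probability space; likewise $\nu_\infty$ corresponds to $Y_\infty$ with coefficients $(\sqrt{\theta_2}, \sqrt{\theta_1-\theta_2})$. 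Expanding $(Y_n Y_n^*)^q$ as a sum of $4^q$ words in $\{c_1, c_1^*, c_2, c_2^*, c_3, c_3^*\}$, each word carries a coefficient of the form $(\sqrt{\theta_2(Q_n)})^i (\sqrt{\theta_1(Q_n) - \theta_2(Q_n)})^{2q - i}$. By $*$-freeness combined with circularity (a word has nonzero trace only when the count of each $c_k$ matches that of $c_k^*$), only words with $i$ even contribute, so the surviving coefficients are monomials in $\theta_2(Q_n)$ and $\theta_1(Q_n) - \theta_2(Q_n)$, and hence in $(\theta_1(Q_n), \theta_2(Q_n))$. This yields a single polynomial $\Phi_q$ with $m_q(\nu_{n,\infty}) = \Phi_q(\theta_1(Q_n), \theta_2(Q_n))$ and $m_q(\nu_\infty) = \Phi_q(\theta_1, \theta_2)$.

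For (b), I would use $L^2$-convergence of the Taylor series under the Gaussian measure. Since $|P_n(\beta\chi)| \le \sum_{k<n} (\beta|\chi|)^k/k! \le e^{\beta|\chi|} \in L^2(\mathcal N(0,1))$, dominated convergence gives $P_n(\beta\chi) \to e^{\beta\chi}$ in $L^2$; after recentering and rescaling by $e^{-\beta^2/2}$ this yields $Q_n(\chi) \to f(\chi)$ in $L^2$, so $\theta_1(Q_n) = \E[Q_n(\chi)^2] \to \E[f(\chi)^2] = \theta_1$. Differentiating termwise, $Q_n'(x) = \beta e^{-\beta^2/2} P_{n-1}(\beta x)$ is dominated by $\beta e^{-\beta^2/2} e^{\beta|x|} \in L^1$, so $\E[Q_n'(\chi)] \to \beta\, \E[e^{\beta\chi - \beta^2/2}] = \beta$, giving $\theta_2(Q_n) \to \beta^2 = \theta_2$.

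The analytic content of (b) is elementary. The main obstacle is (a): isolating $(\theta_1(Q_n), \theta_2(Q_n))$ as the only statistics on which the free limit depends, and verifying that only even powers of their square roots survive the trace. This is implicit in the Pennington--Worah/Benigni--Péché characterization of $\nu_\infty$ cited earlier in the paper; a careful word-by-word expansion in the $*$-free circular algebra makes the polynomial $\Phi_q$ explicit, after which continuity delivers the desired moment convergence.
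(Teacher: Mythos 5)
Your proof is correct and follows the same overall architecture as the paper's --- reduce to $\theta_i(Q_n)\to\theta_i(f)$ for $i=1,2$, then establish that convergence --- but the two steps are handled differently in each, and the differences are worth noting.

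On the reduction (your part (a)): the paper's proof opens with ``We only need to show $\theta_i(Q_n)\to\theta_i(f)$'' and leaves the justification implicit, presumably appealing to the description of $\nu_{n,\infty}$ and $\nu_\infty$ via the Pennington--Worah / Benigni--P\'ech\'e fixed point in the $\theta$'s. You make this explicit via the free limit $Y_n=\sqrt{\theta_2(Q_n)}\,c_1c_2^*+\sqrt{\theta_1(Q_n)-\theta_2(Q_n)}\,c_3$, a word expansion of $(Y_nY_n^*)^q$, and the parity observation (equal counts of each $c_k$ and $c_k^*$ force the exponent of each radical to be even), which produces a single polynomial $\Phi_q$ independent of $n$. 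This is a genuine contribution that closes a gap the paper glosses over; one could also phrase it via continuity of moments in the two-parameter family of $S$-transforms, but your word-by-word argument is self-contained and correct.

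On the convergence $\theta_i(Q_n)\to\theta_i(f)$ (your part (b)): the paper proves this through the remainder $R_n(x)=e^{-\beta^2/2}(e^x-P_n(x))$, $R_n'=R_{n-1}$, Cauchy--Schwarz, and the explicit tail estimates of \cref{lem:taylor-beta,lem:taylor-exp-beta} together with a truncation at level $K$. You instead use $L^2$- and $L^1$-dominated convergence with the dominating function $e^{\beta|\chi|}$ (which has all Gaussian moments), which is cleaner and gets to the limit faster. The trade-off is that the paper's route produces an explicit rate in $n$, which is not needed for this lemma as stated but is in the spirit of the quantitative estimates used elsewhere (e.g., in \cref{lem:stong-conv}). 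Both are valid; your version is the more elementary proof of exactly what the lemma asserts. One small caveat: the paper's notation for $Q_n$ differs between the body and the appendix (an implicit $\beta$-rescaling of the argument), and your termwise-differentiation step $Q_n'(x)=\beta e^{-\beta^2/2}P_{n-1}(\beta x)$ implicitly adopts the appendix convention; under the body's convention the factor of $\beta$ would not appear. This is a notational inconsistency in the paper, not an error in your argument, but you should state which convention you are using.
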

\begin{proof}
    We only need to show $\theta_i(Q_n) \to \theta_i(f)$ ($i=1,2$) as $d\to \infty$.
    \begin{align}
      \theta_1(f)-\theta_1(Q_n)=\E [f(\chi)^2-Q_n(\chi)^2] 
    \end{align}
     Let $R_n(x)= e^{-\beta^2/2}[e^x - P_n(x)]$.    
    Since $R_n^\prime=R_{n-1}$, by CS, 
    \begin{align}
        |\theta_1(f)-\theta_1(Q_n)|^2\leq \E [R_n(\chi)^2] \E [(2f(\chi)-R_{n-1}(\chi))^2].
    \end{align}
    \begin{align}
       \theta_2(f)-\theta_2(Q_n)= \E[f^\prime(\chi)]^2-\E[Q_n^\prime(\chi)]^2  =\E[R_{n-1}(\chi)](\E[2e^\chi - R_{n-1}(\chi)])  
    \end{align}
    Take arbitrary $K>1$. For $q=1,2$ and any $n> \max(\beta K , \beta^2)$, we have
    \begin{align}
       | \E[R_n(\chi)^q] |&\leq \E[ |R_n(\chi)|^q \mid |\chi|<K ] + \Pr(|\chi|>K) \\
       &\leq [e^{-\beta^2/2}(\Big(\frac{e\,\beta K}{n}\Big)^{\!n}
+\Big(\frac{e\,\beta^2}{n}\Big)^{\!n/2} )]^q + 2\exp(-K^2/2).
    \end{align}
Since $K$ is arbitrary,  $\lim_n \E[R_n(\chi)]=\lim_n \E[R_n(\chi)^2]=0$.
\end{proof}

\section{A Lower Bound for the Operator Norm}

We compute the lower bound of $\Ylin = \alpha Z/\sqrt{d} + \beta W^Q (W^K)^\top/d$ with $\alpha=\sqrt{\theta_1-\theta_2}$ and $\beta=\sqrt{\theta_2}$. From the strong convergence, we only need to find the maximum of the support of the limit distribution $\nu_\infty$. 

We use tools of free probability. Background on circular and $R$-diagonal elements, transforms, and free convolutions can be found in \citet{NicaSpeicher2006}. The edge characterization via $K'(w)=0$ is treated in \citet{AndersonGuionnetZeitouni2010}. The $R$-diagonal singular-value calculus underlying $d=c_2c_3$ goes back to \citet{HaagerupLarsen2000}; the Fuss--Catalan relation for $|d|^2$ is documented in \citet{BenaychGeorges2009}.

Let $c_1,c_2,c_3$ be $*$-free standard circular elements in a tracial C$^*$-probability space and let $\alpha,\beta>0$. Using hermitization and $R$-transform calculus, we show the strict lower bound.
\[
\lim_{d \to \infty} \| \Ylin\|_\infty=\bigl\|\alpha c_1+\beta c_2c_3\bigr\|>2\sqrt{\alpha^2+\beta^2}.
\]
We also record the exact stationary equation determining the norm and derive asymptotic expansions in the regimes $\beta/\alpha\to0$ and $\beta/\alpha\to\infty$.

\subsection{Definitions and Background}\label{ssec:free-defn}

We work in a tracial C$^*$-probability space $(\A,\|\cdot \|, \tr)$. Subalgebras are said to be free (resp.\,$*$-free) if all alternating products (resp.\,alternating products with choice of adjoint) of centered elements have zero trace \citep{Voiculescu1991,NicaSpeicher2006}. A \emph{standard semicircular} element has density $(2\pi)^{-1}\sqrt{4-x^2}$ on $[-2,2]$. A \emph{standard circular} element is $c=(s_1+is_2)/\sqrt2$ with $s_1,s_2$ free standard semicirculars \citep{NicaSpeicher2006}; its operator norm equals $2$.

For any $a\in\A$, the \emph{hermitization} is the self-adjoint element
\[
\wt a=\begin{pmatrix}0 & a\\ a^* & 0\end{pmatrix}\in M_2(\A),
\qquad \|a\|=\|\wt a\|.
\]
For self-adjoint $x$, write the Cauchy transform $G_x(z)=\tr\bigl[(z-x)^{-1}\bigr]$, the $K$-transform $K_x=G_x^{\langle-1\rangle}$ (functional inverse in a neighborhood of $0$), and the $R$-transform $R_x(w)=K_x(w)-1/w$. If $x,y$ are free and self-adjoint then $R_{x+y}=R_x+R_y$ \citep{NicaSpeicher2006}. For compactly supported laws, the right endpoint of the spectrum of $x$ is $K_x(w_*)$, where $w_*>0$ is the unique real solution of $K_x'(w_*)=0$; this is the standard ``edge equation'' characterization \citep[see, e.g.,][Ch.~2]{AndersonGuionnetZeitouni2010}.

Two basic inputs will be used repeatedly (standard facts; see references cited inline):
\begin{enumerate}
\item If $c$ is standard circular, then $\wt c$ is semicircular. Hence $R_{\alpha\wt c}(w)=\alpha^2w$ and $\|\alpha c\|=2\alpha$ \citep{NicaSpeicher2006}.
\item If $d=c_2c_3$ is the product of two free standard circulars, then $d$ is $R$-diagonal and $|d|^2$ has the (order~2) Fuss--Catalan law, i.e. the free multiplicative square of the Marchenko--Pastur law. Its moment series $M(z)$ satisfies $M(z)=1+zM(z)^3$ \citep{BenaychGeorges2009}. Using the block resolvent identity $G_{\wt d}(z)=z\,G_{|d|^2}(z^2)$ (spectral symmetrization of singular values), one obtains
\begin{equation}\label{eq:R-tilde-d}
R_{\beta\wt d}(w)=\frac{\beta^2 w}{1-\beta^2 w^2},\qquad 0<w<1/\beta,
\end{equation}
see also \citet{HaagerupLarsen2000} for $R$-diagonal calculus and singular-value relations.
\end{enumerate}

\subsection{Hermitization}

Let $X=\alpha c_1+\beta c_2c_3$ with $c_1,c_2,c_3$ $*$-free standard circulars and $\alpha,\beta>0$. By freeness,
\[
\wt X=\alpha\,\wt{c_1}\ \boxplus\ \beta\,\wt{(c_2c_3)}.
\]
Using the two inputs above, its $K$-transform is
\begin{equation}\label{eq:K-total}
K(w)=\frac1w+\alpha^2 w+\frac{\beta^2 w}{1-\beta^2 w^2},\qquad 0<w<1/\beta.
\end{equation}
Let $w_*\in(0,1/\beta)$ be the unique solution to $K'(w_*)=0$. Then
\[
\|X\|=\|\wt X\|=K(w_*).
\]
It is convenient to also record the dimensionless stationary equation: writing $y=\beta^2 w^2$ and $\gamma=(\alpha/\beta)^2$, the equation $K'(w)=0$ is equivalent to
\begin{equation}\label{eq:cubic}
\gamma y^3-2\gamma y^2+(\gamma+3)y-1=0,\qquad y\in(0,1),
\end{equation}
which has a unique solution in $(0,1)$.

\subsection{A Strict Lower Bound}

\begin{prop}\label{prop:strict-lb}
For all $\alpha,\beta>0$,
\[
\bigl\|\alpha c_1+\beta c_2c_3\bigr\|>2\sqrt{\alpha^2+\beta^2}.
\]
\end{prop}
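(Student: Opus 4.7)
The plan is to use the $K$-transform representation $\|X\|=K(w_*)$ recorded in~\eqref{eq:K-total}, and to compare $K$ pointwise on $(0,1/\beta)$ with the $K$-transform of the semicircular surrogate of matching total variance $\alpha^2+\beta^2$,
\[
K_0(w):=\frac{1}{w}+(\alpha^2+\beta^2)\,w.
\]
By AM-GM, $K_0$ attains its unique minimum on $(0,\infty)$ at $w_0=1/\sqrt{\alpha^2+\beta^2}$ with value $2\sqrt{\alpha^2+\beta^2}$; and since $\alpha>0$ we have $w_0<1/\beta$, so in fact $w_0\in(0,1/\beta)$. This minimum is precisely what one would obtain if $\beta\,c_2c_3$ were replaced by a circular element of matching variance, which is the ``naive'' lower bound that the proposition strengthens.

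First I would verify the pointwise strict domination
\[
K(w)-K_0(w)=\frac{\beta^{2}w}{1-\beta^{2}w^{2}}-\beta^{2}w=\frac{\beta^{4}w^{3}}{1-\beta^{2}w^{2}}>0,\qquad 0<w<1/\beta,
\]
which simply reflects that the $R$-transform of $\beta\,\wt{c_2c_3}$ given in~\eqref{eq:R-tilde-d} strictly exceeds its linear part $\beta^{2}w$ on $(0,1/\beta)$. Next I would combine this with the edge characterization: since $K(w)\to\infty$ at both endpoints of $(0,1/\beta)$ and $w_*$ is the unique solution of $K'(w)=0$, it is the global minimizer of $K$ on $(0,1/\beta)$; hence
\[
\|X\|=K(w_*)=\min_{w\in(0,1/\beta)}K(w)\ \ge\ K(w_0)\ >\ K_0(w_0)\ =\ 2\sqrt{\alpha^{2}+\beta^{2}},
\]
which is the claimed strict inequality.

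I do not anticipate a substantive obstacle. The whole argument reduces to (i) the $K$-transform formula~\eqref{eq:K-total}, already derived in the excerpt from hermitization, freeness, and the Fuss--Catalan relation for $|c_2c_3|^{2}$, and (ii) a one-line monotonicity estimate plus AM-GM. The only conceptual point worth isolating is that the strictness is driven by the cubic term $\beta^{4}w^{3}/(1-\beta^{2}w^{2})$, which encodes the non-semicircular, ``heavy-edge'' character of the product of two free circulars; this same term will govern the asymptotic expansions in the regimes $\beta/\alpha\to 0$ and $\beta/\alpha\to\infty$ announced later in the subsection, where one can quantify the excess $\|X\|-2\sqrt{\alpha^{2}+\beta^{2}}$ by a perturbative analysis of the cubic~\eqref{eq:cubic} around $y=0$ or $y=1$, respectively.
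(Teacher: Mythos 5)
Your overall strategy (decompose $K=K_0+\text{positive remainder}$ with $K_0(w)=1/w+(\alpha^2+\beta^2)w$, then apply AM--GM) is exactly the paper's, and the identity $K(w)-K_0(w)=\beta^4 w^3/(1-\beta^2 w^2)>0$ on $(0,1/\beta)$ is the key step in both. However, your final chain of inequalities contains a sign error that breaks the argument as written: since $w_*$ is the \emph{minimizer} of $K$ on $(0,1/\beta)$ (the paper's remark notes $K$ is strictly convex there and blows up at both endpoints), the quantity $\min_{w\in(0,1/\beta)}K(w)$ satisfies $\min K\le K(w_0)$, not $\min K\ge K(w_0)$. Evaluating the minimum at an arbitrary point only gives an upper bound, so the step $\|X\|=\min K\ge K(w_0)$ is logically invalid.

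The fix is trivial and recovers the paper's proof verbatim: instead of evaluating the pointwise domination at $w_0$, evaluate it at $w_*$. From $K(w_*)>K_0(w_*)$ and the AM--GM bound $K_0(w)\ge 2\sqrt{\alpha^2+\beta^2}$ valid for \emph{every} $w>0$ (hence in particular at $w_*$), you get
\[
\|X\|=K(w_*) \;>\; K_0(w_*) \;\ge\; 2\sqrt{\alpha^2+\beta^2}.
\]
This is precisely what the paper does when it writes $K(w_*)$ as $\frac{1}{w_*}+(\alpha^2+\beta^2)w_*+\frac{\beta^4 w_*^3}{1-\beta^2 w_*^2}$, drops the strictly positive cubic term, and applies AM--GM. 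Your auxiliary observations (that $w_0\in(0,1/\beta)$, that $w_*$ is the global minimizer) are correct but unnecessary once the inequality is applied at $w_*$ rather than $w_0$.
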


\begin{proof}
By \eqref{eq:K-total} and $K'(w_*)=0$,
\[
\|X\|=K(w_*)=\frac1{w_*}+\alpha^2 w_*+\frac{\beta^2 w_*}{1-\beta^2 w_*^2}
=\frac1{w_*}+(\alpha^2+\beta^2)w_*+\frac{\beta^4 w_*^3}{1-\beta^2 w_*^2}.
\]
Since $w_*\in(0,1/\beta)$, the last term is strictly positive. Therefore
\[
\|X\|>\frac1{w_*}+(\alpha^2+\beta^2)w_*.
\]
For any $A>0$ and $w_*>0$, the AM--GM inequality gives $\frac1{w_*}+A w_*\ge 2\sqrt{A}$. Taking $A=\alpha^2+\beta^2$ yields the strict inequality
\[
\|X\|>2\sqrt{\alpha^2+\beta^2}.
\]
\end{proof}

\begin{remark}
The function $K$ in \eqref{eq:K-total} is strictly convex on $(0,1/\beta)$, since $K''(w)=2w^{-3}+2\beta^4 w(1-\beta^2 w^2)^{-2}+4\beta^6 w^3(1-\beta^2 w^2)^{-3}>0$. Hence $w_*$ is the unique global minimizer of $K$, and $K(w)\to\infty$ at both endpoints, so indeed $\|X\|=K(w_*)$ as used above \citep[cf.][]{AndersonGuionnetZeitouni2010}.
\end{remark}

\section{Technical Lemmas}

\begin{lemma}\label{lem:stong-and-moments}
Let $(X_d)_{d \ge 1}$ and $(Y_d)_{d \ge 1}$ be sequences of real $d \times d$ matrices. 
   Let $\nu$ be a compactly supported probability distribution on $[0,+\infty)$. Assume that $\|X_d-Y_d\|_\infty\to 0$, $m_q(YY^\top)\to m_q(\nu) (q \in \N)$ then $m_q(XX^\top) \to m_q(\nu) (q\in\N)$.
\end{lemma}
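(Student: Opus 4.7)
The approach is standard singular-value perturbation combined with a power-mean inequality. Since the hypothesis on $\|X_d - Y_d\|_\infty$ controls the matrices uniformly while the conclusion concerns a polynomial (the $q$-th moment) of the singular values, I would first pass from operator-norm control to per-index control via Weyl's (Mirsky--Lidskii) inequality, and then compare the averaged $2q$-th powers via a mean-value bound.

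Setting $\varepsilon_d := \|X_d - Y_d\|_\infty$, Weyl's inequality yields $|s_i(X_d) - s_i(Y_d)| \le \varepsilon_d$ for every $i$, and in particular $s_i(X_d) \le s_i(Y_d) + \varepsilon_d$. Applying the elementary estimate $|a^{2q} - b^{2q}| \le 2q \max(a,b)^{2q-1} |a - b|$ (valid for $a,b \ge 0$) pointwise, summing, and normalizing by $d$ gives
\begin{align*}
\bigl|m_q(X_d X_d^\top) - m_q(Y_d Y_d^\top)\bigr|
\le \frac{2q\,\varepsilon_d}{d}\sum_{i=1}^d \bigl(s_i(Y_d)+\varepsilon_d\bigr)^{2q-1}.
\end{align*}
To finish, I would bound the averaged $(2q-1)$-st power on the right uniformly in $d$. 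By Jensen's inequality applied to $t\mapsto t^{2q/(2q-1)}$ on $[0,\infty)$,
\begin{align*}
\frac{1}{d}\sum_{i=1}^d \bigl(s_i(Y_d)+\varepsilon_d\bigr)^{2q-1}
\le \left(\frac{1}{d}\sum_{i=1}^d \bigl(s_i(Y_d)+\varepsilon_d\bigr)^{2q}\right)^{\!(2q-1)/(2q)},
\end{align*}
and the inequality $(a+b)^{2q} \le 2^{2q-1}(a^{2q}+b^{2q})$ dominates the inner average by $2^{2q-1}\bigl(m_q(Y_d Y_d^\top) + \varepsilon_d^{2q}\bigr)$. Since $m_q(Y_d Y_d^\top) \to m_q(\nu) < \infty$ by hypothesis and $\varepsilon_d \to 0$, the whole right-hand side of the displayed inequality is $O(\varepsilon_d) \to 0$, so $m_q(X_d X_d^\top) \to m_q(\nu)$ for every fixed $q$, as required.

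I do not expect any substantial obstacle: compactness of the support of $\nu$ is stronger than needed (finite $q$-th moments would do), and the only bookkeeping step is using Jensen/power-mean to absorb the lower-order $(2q-1)$-st power sums of $s_i(Y_d)$ into the $q$-th moment whose convergence is already guaranteed. The real content is Weyl's inequality, which converts a uniform operator-norm assumption into a pointwise singular-value assumption and thereby makes the polynomial functional $m_q$ continuous under the given perturbation.
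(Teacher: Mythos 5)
Your proof is correct, and it takes a genuinely different route from the one in the paper. Both start from Weyl's inequality for singular values, $|s_i(X_d)-s_i(Y_d)|\le\|X_d-Y_d\|_\infty=:\varepsilon_d$, and both use the $2^{2q-1}$-convexity inequality to control sums of shifted powers. But after that the strategies diverge. The paper's argument is a truncation argument: it establishes a uniform moment bound $C_q=\sup_d m_q(X_dX_d^\top)<\infty$, then introduces the cutoff function $t\mapsto\min(t^{2q},R^{2q})$, which is Lipschitz, controls the tail $\frac{1}{d}\sum_{s_i\ge R}s_i^{2q}$ via Markov, and finally lets $R\to\infty$. Your argument instead compares $m_q(X_dX_d^\top)$ and $m_q(Y_dY_d^\top)$ directly via the factorization $|a^{2q}-b^{2q}|\le 2q\max(a,b)^{2q-1}|a-b|$ and then absorbs the resulting averaged $(2q-1)$-st powers into the already-controlled $q$-th moment through Jensen's inequality. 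This avoids the truncation entirely and, as a bonus, gives a quantitative rate: $|m_q(X_dX_d^\top)-m_q(Y_dY_d^\top)|=O(\varepsilon_d)$ with a constant depending only on $q$ and $\limsup_d m_q(Y_dY_d^\top)$. Your observation that compact support of $\nu$ is not needed is also correct; what is actually used is only that $m_q(\nu)<\infty$ for the fixed $q$, whereas the paper's truncation step uses the $(q+1)$-st moment bound through Markov. Both are fine, and both are substantially elementary; yours is shorter and yields an explicit rate.
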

\begin{proof}
    It holds that $\max_i |s_i(X)-s_i(Y)| \leq \| X-Y\|_\infty$ and thus $C_q:=\sup_d m_q(XX^\top) < \infty$. Thus, by the Markov inequality,  $\frac{1}{d}\sum_{i:s_i\geq R} |s_i(XX^\top)^q |\leq  C_{2q}/R^2$.
   Let $\trunc_R(t)= \min(t^{2q}, R^{2q})$. By the Lipschitz bound of $\trunc_R$, the moments converge to the desired value on the good set. Since $R>0$ is arbitrary, we have proven the claim.
\end{proof}

\begin{lemma}\label{lem:strong-and-moments-v2}
Let $(X_d)$ and $(Y_d)$ be $d\times d$ real matrices and write $s_i(A)$ for the singular values of $A$.
Assume
\begin{align}
\|X_d-Y_d\|_\infty \to 0,
\qquad
m_q(Y_dY_d^\top):=\frac{1}{d}\sum_{i=1}^d s_i(Y_d)^{2q}\to m_q(\nu)\in[0,\infty)
\end{align}
for every $q\in\mathbb N$. Then for every $q\in\mathbb N$,
\begin{align}
m_q(X_dX_d^\top)\to m_q(\nu).
\end{align}
\end{lemma}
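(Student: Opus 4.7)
The plan is to reduce the statement to a uniform comparison of singular values via Weyl's inequality and then handle the unbounded support with a soft truncation. First I would observe that for any two real $d\times d$ matrices $A,B$, the Weyl-type inequality
\begin{align}
\max_{1\le i\le d}\bigl|s_i(A)-s_i(B)\bigr|\le \|A-B\|_\infty
\end{align}
holds (each singular value is $1$-Lipschitz in the operator-norm metric on matrices). In particular, setting $\varepsilon_d:=\|X_d-Y_d\|_\infty\to 0$, we get $\max_i|s_i(X_d)-s_i(Y_d)|\le \varepsilon_d$.

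Next I would upgrade the hypothesis to a uniform moment bound. Fix $q\in\mathbb N$. Since $m_q(Y_dY_d^\top)\to m_q(\nu)\in[0,\infty)$ the sequence $m_q(Y_dY_d^\top)$ is bounded, and the pointwise inequality
\begin{align}
s_i(X_d)^{2q}\le \bigl(s_i(Y_d)+\varepsilon_d\bigr)^{2q}\le 2^{2q-1}\bigl(s_i(Y_d)^{2q}+\varepsilon_d^{2q}\bigr)
\end{align}
gives $\sup_d m_q(X_dX_d^\top)<\infty$ for every $q$. I would then truncate at level $R>0$ by introducing the cutoff $\tau_R(t)=\min(t^{2q},R^{2q})$, which is $2qR^{2q-1}$-Lipschitz on $[0,\infty)$. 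The bulk inside $[0,R]$ is controlled because
\begin{align}
\Bigl|\frac{1}{d}\sum_{i=1}^d\tau_R\bigl(s_i(X_d)\bigr)-\frac{1}{d}\sum_{i=1}^d\tau_R\bigl(s_i(Y_d)\bigr)\Bigr|\le 2qR^{2q-1}\varepsilon_d\longrightarrow 0,
\end{align}
so this truncated moment of $X_d$ has the same limit as the truncated moment of $Y_d$.

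The remaining tail piece is bounded by Markov applied to the $2q$-th moment: for either matrix $Z_d\in\{X_d,Y_d\}$,
\begin{align}
\frac{1}{d}\sum_{i:\,s_i(Z_d)>R} s_i(Z_d)^{2q}\le \frac{m_{2q}(Z_dZ_d^\top)}{R^{2q}}\le \frac{C_{2q}}{R^{2q}},
\end{align}
with $C_{2q}:=\sup_d\max\bigl\{m_{2q}(X_dX_d^\top),m_{2q}(Y_dY_d^\top)\bigr\}<\infty$ by the previous paragraph. Combining the bulk and tail estimates and then sending $d\to\infty$ followed by $R\to\infty$ (and using that $\int t^{2q}\,d\nu(t)=m_q(\nu)$ is the limit of the truncated integrals since $\nu$ has finite moments, which here is either given directly or follows from $m_q(Y_dY_d^\top)\to m_q(\nu)$) yields $m_q(X_dX_d^\top)\to m_q(\nu)$.

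The only delicate point is the uniform-in-$d$ moment bound needed to make the Markov tail estimate effective: it is tempting to assume $\nu$ has compact support (as in \cref{lem:stong-and-moments}), but here we only have convergence of every moment, so I would bootstrap from the $(2q)$-th moment convergence for $Y_d$ to control the tail of $X_d$ via Weyl. Everything else is a routine $\varepsilon$–$R$ argument.
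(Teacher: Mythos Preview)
Your proposal is correct and follows essentially the same route as the paper: Weyl's inequality for singular values, a uniform moment bound via $s_i(X_d)^{2q}\le 2^{2q-1}(s_i(Y_d)^{2q}+\varepsilon_d^{2q})$, the Lipschitz truncation $\tau_R(t)=\min(t^{2q},R^{2q})$, and a Markov tail estimate. The only cosmetic difference is that you control the tail with $m_{2q}/R^{2q}$ while the paper uses $m_{q+1}/R^{2}$, and the paper compares $m_q(X_dX_d^\top)$ directly to $m_q(Y_dY_d^\top)$ rather than routing through $\nu$; both variants are immaterial.
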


\begin{proof}
\textbf{Step 1 (uniform moment bound for $X_d$).}
By Weyl's inequality for singular values,
\begin{align}
\max_{1\le i\le d}\bigl|s_i(X_d)-s_i(Y_d)\bigr|\le \|X_d-Y_d\|_\infty.
\end{align}
Hence, for any $q\in\mathbb N$ and all $i$,
\begin{align}
s_i(X_d)^{2q}\le 2^{2q-1}\bigl(s_i(Y_d)^{2q}+\|X_d-Y_d\|_\infty^{2q}\bigr).
\end{align}
Averaging,
\begin{align}
m_q(X_dX_d^\top)\le 2^{2q-1}\Bigl(m_q(Y_dY_d^\top)+\|X_d-Y_d\|_\infty^{2q}\Bigr).
\end{align}
Since $m_q(Y_dY_d^\top)$ converges and $\|X_d-Y_d\|_\infty\to0$, we obtain
\begin{align}
C_q:=\sup_d m_q(X_dX_d^\top)<\infty.
\end{align}

\textbf{Step 2 (truncation and Lipschitz control on the ``good'' set).}
Fix $q\in\mathbb N$ and $R>0$. Let $\trunc_R(t)=\min\{t^{2q},R^{2q}\}$ on $[0,\infty)$.
Then $\trunc_R$ is Lipschitz with constant $L_R=2q\,R^{2q-1}$.
By the bound on singular value deviations,
\begin{align}
\frac{1}{d}\sum_{i=1}^d\Bigl|\trunc_R\bigl(s_i(X_d)\bigr)-\trunc_R\bigl(s_i(Y_d)\bigr)\Bigr|
\le L_R\,\|X_d-Y_d\|_\infty\to 0.
\end{align}

\textbf{Step 3 (tail bound).}
By Markov's inequality and Step 1,
\begin{align}
\frac{1}{d}\sum_{i:\,s_i(X_d)\ge R}s_i(X_d)^{2q}\le \frac{1}{R^{2}}\cdot \frac{1}{d}\sum_{i=1}^d s_i(X_d)^{2q+2}
= \frac{m_{q+1}(X_dX_d^\top)}{R^{2}}
\le \frac{C_{q+1}}{R^{2}}.
\end{align}
The same bound holds with $X_d$ replaced by $Y_d$ since $m_{q+1}(Y_dY_d^\top)$ converges.
Therefore,
\begin{align}
\limsup_{d\to\infty}\Bigl|m_q(X_dX_d^\top)-m_q(Y_dY_d^\top)\Bigr|
\le \limsup_{d\to\infty}\frac{1}{d}\sum_{i=1}^d\Bigl|\trunc_R\bigl(s_i(X_d)\bigr)-\trunc_R\bigl(s_i(Y_d)\bigr)\Bigr|
+ \frac{2C_{q+1}}{R^{2}}.
\end{align}
The first term tends to $0$ by Step 2. Letting $R\to\infty$ yields
\begin{align}
m_q(X_dX_d^\top)-m_q(Y_dY_d^\top)\to 0.
\end{align}
Since $m_q(Y_dY_d^\top)\to m_q(\nu)$, we conclude $m_q(X_dX_d^\top)\to m_q(\nu)$.
\end{proof}

\section{Settings for Numerical Simulations}\label{sec:numerical-settings}
All experiments were conducted in Python 3.10.12 on a Linux system.  Unless otherwise noted, the plotted curves report the mean over 10 independent random trials, and error bars indicate one standard deviation; in several figures, the error bars are visually negligible due to their small magnitude.

\end{document}